\newcommand{\dd}{\mathrm{d}}
\newcommand{\EX}{\mathbb{E}}
\newcommand{\lmin}{\lambda_{\min,t}}
\DeclareMathOperator{\Tr}{Tr}
\DeclareMathOperator*{\argmin}{\arg\!\min}
\newcommand{\linUCB}{\textnormal{\textsf{LinUCB}}}
\newcommand{\linUCBvn}{\textnormal{\textsf{LinUCB-VN}}}
\newenvironment{skproof}{%
  \proof}{\endproof}
\title[ Linear bandits with polylogarithmic minimax regret]{Linear bandits with polylogarithmic minimax regret}
\begin{document}

\maketitle

\begin{abstract}%
   We study a noise model for linear stochastic bandits for which the subgaussian noise parameter vanishes linearly as we select actions on the unit sphere closer and closer to the unknown vector. 
   We introduce an algorithm for this problem that exhibits a minimax regret scaling as $\log^3(T)$ in the time horizon $T$, in stark contrast the square root scaling of this regret for typical bandit algorithms. Our strategy, based on weighted least-squares estimation, achieves the eigenvalue relation  $\lambda_{\min} ( V_t ) = \Omega (\sqrt{\lambda_{\max}(V_t ) })$ for the design matrix $V_t$ at each time step $t$ through geometrical arguments that are independent of the noise model and might be of independent interest. This allows us to tightly control the expected regret in each time step to be of the order $O(\frac1{t})$, leading to the logarithmic scaling of the cumulative regret.
   %This eigenvalue relation illustrates the tradeoff between exploration-exploitation for continuous action sets. We conjecture that linear stochastic bandit algorithm with smooth action sets and arbitrary noise model should achieve that relation generalizing the conjecture $\lambda_{\min}(V_t ) = \Omega (\sqrt{t})$ proposed by~\cite{banerjee2023exploration}.
\end{abstract}

\begin{keywords}%
  multi-armed bandit, linear stochastic bandit, adaptive noise, minimax regret
\end{keywords}

\section{Introduction}

\textbf{Background}. The linear bandit problem~(see, e.g.,~\cite{lattimore_szepesvári_2020}) is one of the most versatile supervised learning frameworks with both applications and strong theoretical guarantees. In this setting, the learner interacts with an environment sequentially, and at each time step $t$ selects an action from a set $\mathcal{A}_t \subset \mathbb{R}^d$ and receives a reward, $r_t$. The reward is a random variable that can be decomposed as $r_t = \EX [r_t] + \epsilon_t$, where $\epsilon_t$ is statistical noise (with zero mean) and the expectation, $\EX [r_t] = \langle \theta, a_t \rangle$, is given by the inner product of the action $a_t \in \mathcal{A}_t$ with an unknown vector $\theta$ from a set of environments $\mathcal{E} \subset \mathbb{R}^d$. The goal of the learner is to minimize the cumulative regret over $T$ rounds, where the regret is the difference  $\max_{a \in \mathcal{A}_t} \langle \theta, a \rangle - \EX [r_t]$, for all possible vectors $\theta \in \mathcal{E}$, i.e., the minimax regret. Here, we focus on the case where both the sets $\mathcal{A}_t$ and $\mathcal{E}$ are continuous.
The challenge posed by linear stochastic bandits lies in efficiently learning the optimal action for $\theta$ while simultaneously choosing actions that minimize the accumulated regret, an intriguing trade-off that our work aims to bring into a new regime.

Numerous algorithms have been proposed to address these challenges for the case of continuous action sets, most prominently \linUCB{} (see~\cite{lin1,rusmevichientong2010linearly,lin3}) and Thompson sampling (see~\cite{abeille2017linear}). Both of these algorithms share the feature that they achieve a minimax regret scaling as $\tilde{O}(\sqrt{T})$, where we are neglecting logarithmic terms and ignoring the dependence on $d$. In fact, under the prevalent assumption that $\epsilon_t$ follows a $\sigma$-subgaussian distribution for some finite $\sigma > 0$, this scaling can be shown to be optimal for various choices of action and environment sets (see~\cite{lin1,shamir2015complexity,rusmevichientong2010linearly}), where the latter work in particular covers the case of unit ball that is of interest in our work. Bandits with heteroscedastic noise have also been studied (see~\cite{kirschner2018information}), where the learner has access to a link function that maps each action to the subgaussian parameter of the noise. The main difference with our model is that our noise also depends on the unknown vector.

\smallskip

\noindent
\textbf{Our model and main result}. In this paper, we present the first non-trivial noise model and corresponding algorithm that achieves a polylogarithmic regret scaling in the time horizon $T$ for linear bandits. We study a stochastic linear bandit with action and environment sets $\mathcal{A}_t = \mathcal{E} = \mathbb{S}^d = \{ x\in\mathbb{R}^d : \| x \|_2 = 1 \}$ and reward model given by $r_t = \langle \theta , a_t \rangle + \epsilon_t$ where $\epsilon_t$ is conditionally $\sigma_t$-subgaussian with
\begin{align}\label{eq:subgaussian_intro}
    \sigma^2_t \leq 1 - \langle \theta ,a_t \rangle^2 \approx \big\| \theta - a_t \big\|_2^2 \, ,
\end{align}
where the approximation holds when $\langle \theta ,a_t \rangle$ is close to 1.

 The proposed model using a noise parameter growing with the distance between unknown vector and action is natural in some contexts. For recommendation systems, for example, it is natural to assume that the choice of a user becomes more certain when the recommendation (action) is close to the user's preference (unknown vector), and the reward may even become deterministic when the recommendation fits the preference perfectly. However our original motivation for the noise scaling comes from a very concrete problem in quantum information, specifically regret-optimal tomography of an unknown pure quantum state (\cite{lumbreras22bandit}). In quantum mechanics measurement outcomes are random with probabilities determined by Born's rule. However, the variance of probabilistic measurement outcomes decreases quadratically for projections aligned with the unknown pure state. 
 
 Beyond specific applications, we believe that a model with decaying noise should be of independent interest as it allows to break the square root minimax regret barrier for continuous action sets, and we believe this is the first model that does this. 
% citation removed to avoid giving away authors~\cite{lumbreras22bandit}.

Analytically, this noise model is interesting because the usual methods for deriving lower bounds on the minimax regret fail. First, bounds where the unknown vector $\theta$ is taken from the unit ball as in~\cite{rusmevichientong2010linearly} or~\cite[Chapter 24, Theorem 24.2]{lattimore_szepesvári_2020} do not apply since the worst-case unknown vectors they construct are far from the surface of the ball. However, for finite and constant $\sigma$-subgaussian noise we provide an adaptation of the lower bound for logistic bandits given in~\cite{abeille2021instance} and we obtain a minimax regret scaling of $\Omega (\sigma d \sqrt{T} )$ in Appendix~\ref{ap:lowerbound}. Nonetheless, these arguments fail for the noise model in~\eqref{eq:subgaussian_intro} since they rely crucially on the fact that the distributions induced by two close unknown states have large overlap, which is no longer guaranteed when the noise vanishes. We discuss this in more detail in Appendix~\ref{ap:failurelower}. The lack of applicable lower bounds opens the possibility that we can find algorithms that break the $\Omega(\sqrt{T})$ barrier for the regret. 

Indeed, our main result is an algorithm \linUCBvn{} (see Algoirthm~\ref{alg:weighted_linUCB}), based on \linUCB{} but with a weighted least square estimator that adapts to the vanishing noise. Our algorithm achieves a polylogarithmic scaling of the minimax regret. The main result can be stated as follows, and holds in the model prescribed above. We give the precise statement of the result in Section~\ref{sec:regret_analysis}.
\begin{theorem}[main result, informal version]
 For any $T \in \mathbb{N}$ there exists an instance of \linUCBvn{} such that, for any $\theta \in \mathcal{E}$, we have
    \begin{align}
        \EX[ \textup{Regret}(T) ] = O\left( d^{4} \log^3 (T) \right) .
    \end{align}  
\end{theorem}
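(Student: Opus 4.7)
The plan follows the \linUCB{} template but needs to squeeze more from the vanishing-noise structure. I would use a weighted least-squares estimator $\hat\theta_t = V_t^{-1}\sum_{s<t} w_s a_s r_s$ with $V_t = \lambda I + \sum_{s<t} w_s a_s a_s^\top$, where the weights $w_s$ are set to the inverse of a data-driven upper confidence bound on $\sigma_s^2$. Because~\eqref{eq:subgaussian_intro} controls $\sigma_s^2$ by the (unknown but upper-bounded) instantaneous regret, such weights are computable from the history via the current confidence set. A self-normalized martingale inequality applied to the effectively unit-variance quantities $\sqrt{w_s}\,\epsilon_s$ then yields a confidence ellipsoid $\|\hat\theta_t - \theta\|_{V_t}\le \beta_t$ with $\beta_t = O(\sqrt{d\log t})$, and $a_t$ is chosen optimistically over $\mathbb{S}^d$ and this ellipsoid.

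Next I would refine the usual instantaneous regret bound $\mathrm{reg}_t \le 2\beta_t\,\|a_t\|_{V_t^{-1}}$. The naive estimate $\|a_t\|_{V_t^{-1}} \le 1/\sqrt{\lambda_{\min}(V_t)}$ is too weak; decomposing $a_t$ into its component along $\theta$ and its orthogonal complement, and using $\|\theta-a_t\|_2^2 = O(\mathrm{reg}_t)$, I get $\|a_t\|_{V_t^{-1}}^2 = O(1/\lambda_{\max}(V_t)) + O(\mathrm{reg}_t/\lambda_{\min}(V_t))$. The eigenvalue relation $\lambda_{\min}(V_t) = \Omega(\sqrt{\lambda_{\max}(V_t)})$ then absorbs the second term into the first, yielding $\mathrm{reg}_t = O(\beta_t/\sqrt{\lambda_{\max}(V_t)})$. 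Since the weights satisfy $w_s = \Omega(1/\mathrm{reg}_s)$ and $\lambda_{\max}(V_t) = \Theta(\sum_{s<t} w_s)$, a bootstrap with ansatz $\mathrm{reg}_t = \tilde O(1/t)$ is self-consistent, and tracking the $d$ and log factors from $\beta_t$, the union bound over $t\le T$, and the exploration construction below gives the claimed $\mathbb{E}[\mathrm{Regret}(T)] = O(d^4 \log^3 T)$.

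The main obstacle is the geometric lemma $\lambda_{\min}(V_t) = \Omega(\sqrt{\lambda_{\max}(V_t)})$ for the actions actually produced by the algorithm. UCB concentrates $a_t$ near $\theta$, so without intervention $V_t$ becomes strongly rank-deficient transverse to $\theta$ while $\lambda_{\max}(V_t)$ keeps growing. I would therefore expect \linUCBvn{} to interleave a deterministic exploration mechanism into its UCB rule — for instance, at prescribed rounds selecting actions on $\mathbb{S}^d$ tilted slightly away from the current optimistic maximizer along a rotating orthonormal frame, with the tilt angle and frequency calibrated so that $\lambda_{\min}(V_t)$ tracks $\sqrt{\lambda_{\max}(V_t)}$ at every step. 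Proving this by a purely geometric argument on $\mathbb{S}^d$ that is independent of the noise model is the heart of the proof; once it is in place the chain above can be made rigorous, and the low-probability failure event of the confidence ellipsoid contributes only a low-order additive term to the expected regret.
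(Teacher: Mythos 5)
Your outline captures the paper's central strategy: a weighted LSE with data-driven weights tied to the confidence width, the eigenvalue relation $\lambda_{\min}(V_t) = \Omega(\sqrt{\lambda_{\max}(V_t)})$ as the crucial noise-independent geometric lemma, a deterministic tilting construction that maintains that relation, and a bootstrap showing $\mathrm{reg}_t = \tilde O(1/t)$, $\lambda_{\max}(V_t) = \tilde\Theta(t^2)$. These are exactly the ingredients the paper uses.

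Where you diverge is in the instantaneous-regret step, and the divergence introduces an obligation you did not discharge. You bound $\mathrm{reg}_t \le 2\beta_t\|a_t\|_{V_t^{-1}}$ and then decompose $a_t$ into a component along $\theta$ and its orthogonal complement, claiming $\|a_t\|_{V_t^{-1}}^2 = O(1/\lambda_{\max}(V_t)) + O(\mathrm{reg}_t/\lambda_{\min}(V_t))$. The first term is only $O(1/\lambda_{\max})$ if $\theta$ lies (approximately) in the top eigenspace of $V_t$; in general $\|\theta\|_{V_t^{-1}}^2$ can be as large as $1/\lambda_{\min}$. This alignment is plausible — the algorithm concentrates its actions near $\theta$ — but it is an additional lemma you would have to prove, roughly that the maximum eigenvector of $V_t$ converges to $\theta$. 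The paper sidesteps this entirely: it bounds $\|\theta - a^\pm_{t,i}\|_2 \le 3\sqrt{\beta_{t-1,\delta}/\lambda_{\min}(V_{t-1})}$ directly, via the triangle inequality through the unnormalized estimator $\widetilde\theta^{\mathrm w}_{t-1}$, its normalization $\theta^{\mathrm w}_{t-1}$, and the explicit tilt $a^\pm_{t,i}$ (Lemma~\ref{lem:dist_action_centre}), using only the inclusion $\mathcal C_{t-1} \subseteq \mathbb B^d_r(\widetilde\theta^{\mathrm w}_{t-1})$ with $r = \sqrt{\beta_{t-1,\delta}/\lambda_{\min}}$. It then applies the eigenvalue relation once to convert $1/\lambda_{\min}$ into $1/\sqrt{\lambda_{\max}}$. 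This is cleaner and avoids any alignment hypothesis. Also, your ``absorb the second term'' step actually yields $\mathrm{reg}_t = O(\beta_t^2/\sqrt{\lambda_{\max}})$ rather than $O(\beta_t/\sqrt{\lambda_{\max}})$ once you solve the quadratic — not a problem for the polylog scaling, but worth noting. Finally, your description of the exploration as ``interleaved'' with a UCB rule is a slight mischaracterization: the paper's algorithm plays \emph{only} the structured tilted actions, $2(d-1)$ per batch along the $d-1$ eigendirections $v_{t-1,i}$ of $V_{t-1}$ with tilt $1/\sqrt{\lambda_{\min}(V_{t-1})}$; there is no separate exploitation phase, and the eigenvalue relation is established by pure induction on that construction, independently of the noise or any probabilistic event.
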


The common linear bandit model describes a well-tuned trade-off between exploration and exploitation. Our breaking of the square root regret barrier, which we also observe numerically (see Figure~\ref{fig:numerical_results}), is due to the fact that exploitation can dominate in our model. This is due to the fact that playing actions near the unknown parameter reduces the statistical noise and we will see that we need to explore only the local neighborhood of our current estimate. One very rough way to think about our algorithm is that it tries to select actions as if it were trying to tune into an analog radio station. When we are tuning into a radio station we do not just randomly turn the knob but once we have locked to a signal (the noise starts to decrease) we only need to do small rotations (local exploration of the algorithm) to correct and find it exactly in a few steps.  

\smallskip

\noindent
\textbf{Technical challenges.}  Our approach is based on an optimistic strategy like \textsf{LinUCB}, however, the standard regret analysis fails to give tight bounds on the regret. More precisely, if $\text{reg}_t$ is the instantaneous regret, the usual technique for optimistic strategies used in the literature to bound the cumulative regret is to use the Cauchy–Schwarz inequality, i.e., $\text{Regret}(T) = \sum_{t=1}^T \text{reg}_t \leq ( T\sum_{t=1}^T \text{reg}_t^2 )^{\frac12}$ and then assert that $\sum_{t=1}^T \text{reg}_t^2 = O(\log (T)) $ using the elliptical potential lemma. %The elliptical potential lemma is an important tool in online learning and we refer to~\cite{carpentier2020elliptical} for a review. 
This procedure always introduces a term $\sqrt{T}$ and for that reason, we need to develop a new technique to upper bound the cumulative regret based on a tight control of the instantaneous regret. 

\begin{figure}
\centering

\begin{overpic}[percent,width=0.8\textwidth]{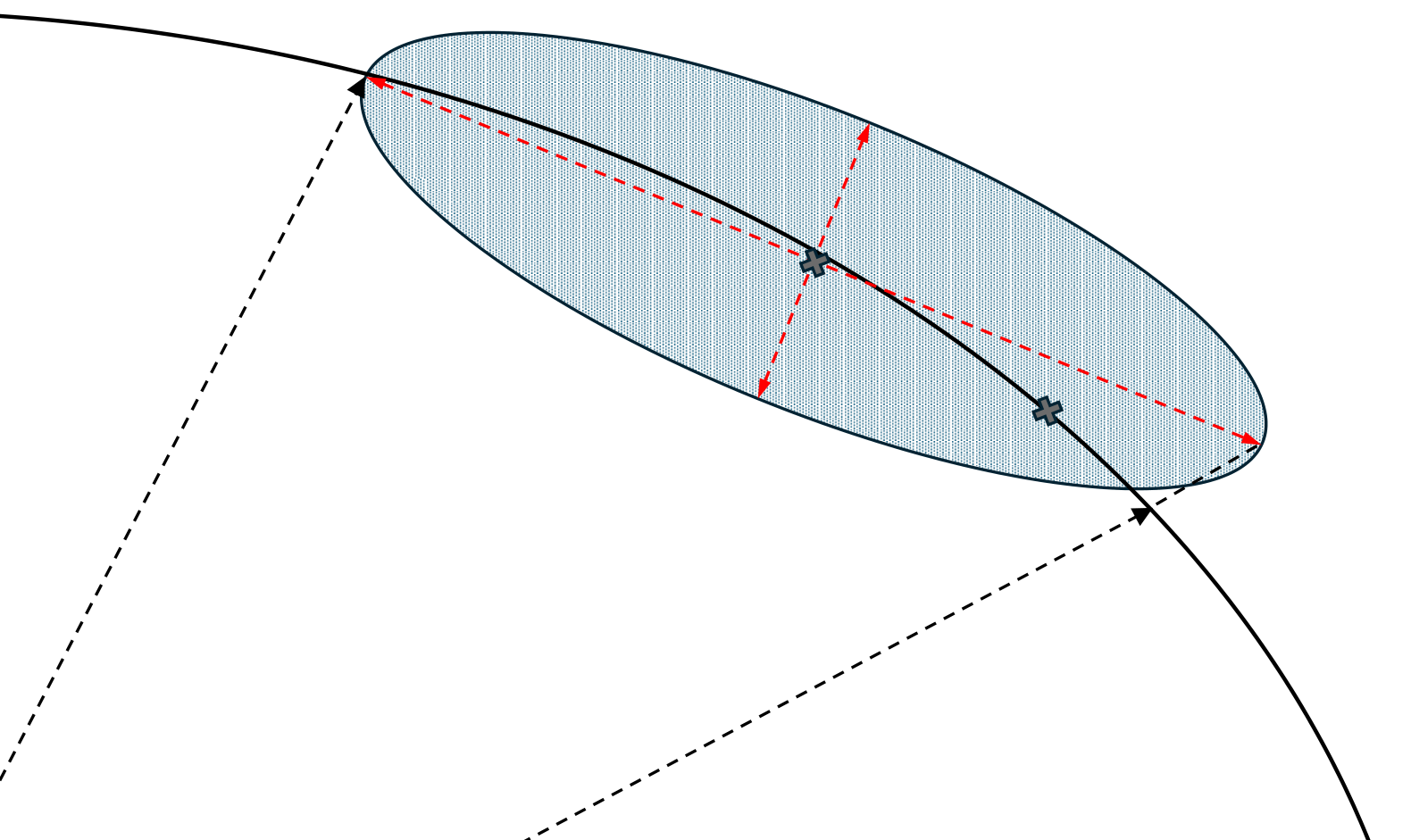}
\put(13,40){ $a^+_t$}
\put(68,15){ $a^-_t$}
\put(70,28){ $\theta$}
\put(55,43){ $\widetilde{\theta}^{\text{w}}_t$}
\end{overpic}
\caption{Scheme for the choice of actions $a_t^+,a_t^-$ of \linUCBvn{}. The actions are selected as the projections of the extremal points across the largest axis of the confidence region centered around a weighted least squares estimator $\widetilde{\theta}^{\text{w}}_t$ of the unknown parameter $\theta$. This choice is sufficient to increase the minimum eigenvalue of $V_t$ such that the relation $\lambda_{\min}(V_t) = \Omega (\sqrt{\lambda_{\max}(V_t)})$ is satisfied. Moreover, the actions $a_t^+$ and $a_t^-$ are sufficiently close to $\theta$ to keep the regret small.}
\label{fig:scheme2}
\end{figure}

Our approach is thus to bound the instantaneous regret $\text{reg}_t$ with high probability for all $t \in [T]$. The algorithm that we design satisfies $\text{reg}_t \sim \| \theta - a_t \|_2^2 \sim 1/\lambda_{\min}(V_t )$  where $V_t$ is the design matrix $V_t = \sum_{s=1}^t w_s a_s a^\mathsf{T}_s$ and $w_s$ is a weight associated to $a_s$. Thus, our regret analysis will require to control the growth of $\lambda_{\min}(V_t)$. From~\cite{banerjee2023exploration} we take the idea that for smooth action sets (in particular the unit sphere) all strategies that minimize the regret must achieve $\lambda_{\min} ( \EX [ V_t ] ) = \Omega ( \sqrt{t})  $ and generalize it to an equivalent condition in terms of eigenvalues of $V_t$ that is noise independent. More precisely, we come up with a geometric construction that ensures that $\lambda_{\min}(V_t) = \Omega ( \sqrt{\lambda_{\max}(V_t)} )$ by choosing actions that explore the confidence region in the direction of the current estimate in a structured way
(see Figure~\ref{fig:scheme2}). 
We then prove that when applying this construction to our noise model we can achieve $\lambda_{\min} ( V_t) = \Omega ( t ) $, which yields a polylogarithmic cumulative regret since $\text{reg}_t \sim \| \theta - a_t \|_2^2 \sim 1/t$.

 %and this was formalized in~\cite{banerjee2023exploration} for constant noise models and infinite smooth action sets. Specifically, they state that all strategies that minimize the regret must achieve $\lambda_{\min} ( \EX [ V_t ] ) = \Omega ( \sqrt{t})  $ and a high probability bound is conjectured. Giving a bound on $\lambda_{\min}(V_t)$ quantifies how much exploration the algorithm should do since it is related to the estimation of the orthogonal components of the unknown parameter.

%The paper is organized as follows:
%\begin{itemize}
%    \item In Section~\ref{sec:model} we introduce some notation and formalize the noise model that we study.
%    \item In Section~\ref{sec:lse} we review some basics of the weighted regularized least squares estimator and confidence regions.
%    \item In Section~\ref{sec:algorithm} we formally present our main algorithm.
%    \item In Section~\ref{sec:actions} we study the actions selected by the algorithm and prove that under very mild conditions that are noise independent, they deterministically achieve  $\lambda_{\min}(V_t) = \Omega ( \sqrt{\lambda_{\max}(V_t)} )$.
%    \item In Section~\ref{sec:regret_analysis} we present the regret analysis and give a new technique based on the selected actions that allow us to quantify the growth of $\lambda_{\min}(V_t)$ and bound the instantaneous regret with high probability.
%\end{itemize}

\section{Notation and model}\label{sec:model}

We present the notation that we will use throughout the paper. For $k\in\mathbb{N}$ we use $[k ] = \lbrace 1,...,k \rbrace$. The set of strictly positive numbers is denoted as $\mathbb{R}_{> 0} = \lbrace x\in\mathbb{R}: x > 0 \rbrace$ and the set of positive semi-definite matrices as $\text{P}^d_+ = \lbrace X\in\mathbb{R}^{d\times d}: X\geq 0 \rbrace$.  For two real vectors $u,v\in\mathbb{R}^d$ we denote its inner product as $\langle u ,v \rangle = \sum_{i=1}^d u_iv_i$ and the Euclidean norm as $\|x \|_2^2 = \langle x , x \rangle$. The $d$-dimensional unit sphere is $\mathbb{S}^d =  \lbrace x\in\mathbb{R}^{d}: \| x \|_2 = 1 \rbrace$ and $\mathbb{B}_r^d (c) = \lbrace x\in\mathbb{R}^d : \| x- c \|_2 \leq r \rbrace$ is the ball of radius $r>0$ centered around $c\in\mathbb{R}^d$. For a vector $x\in\mathbb{R}^d$ and semidefinite positive matrix $A\geq 0$ we denote the weighted norm of $x$ by $A$ as $\| x \|^2_A = \langle x , A x \rangle$. Given an Hermitian matrix $X\in\mathbb{R}^{d\times d}$ we denote $\lambda_{\max}(X)$ and $\lambda_{\min}(X)$ its maximum and minimum eigenvalues, respectively, and by $\lambda_i ( X) $ its $i$-th smallest eigenvalue such that $\lambda_{\min} ( X) =  \lambda_{1}(X )\leq \lambda_2 (X) \leq \cdots \leq \lambda_{d}(X) = \lambda_{\max} (X) $.

%\textbf{Stochastic linear bandits on $\mathbb{S}^d$ with linear vanishing noise}. 
We consider a stochastic linear bandit with unknown parameter $\theta\in\mathbb{S}^d$ and action set $\mathcal{A}_t = \mathcal{A} = \mathbb{S}^d$. The learner interacts with the environment over a time horizon of length $[T]$. At time step $t\in[T]$ the learner selects an action $a_t\in\mathcal{A}$ and samples a reward 
\begin{align}\label{eq:quantumclassical_reward}
    r_t = \langle \theta , a_t \rangle + \epsilon_t,
\end{align}
where $\epsilon_t$ is $\sigma_t$-subgaussian, i.e., $\EX [\epsilon_t |\mathcal{F}_{t-1}] = 0$ and $\EX[\exp(\lambda\epsilon_t)|\mathcal{F}_{t-1}] \leq \exp (\lambda^2\sigma^2_t/2)$ where $\mathcal{F}_{t-1} = \sigma\lbrace a_1,r_1,...,a_{t-1},r_{t-1},a_t \rbrace$ is the filtration of all information up to time step $t$ before reward $r_t$ is observed. We consider a noise model such that the subgaussian parameter satisfies
\begin{align}\label{eq:vanishing_noise}
    \sigma_t^2 \leq 1 - \langle \theta , a_t \rangle^2.
\end{align}
We call this bandit a \textit{stochastic linear bandit with linear vanishing noise}. The learner uses a policy $\pi = (\pi_t)_{t=1}^\infty$ where $\pi_t (a_1,r_1,...,a_{t-1},r_{t-1},a_t )$ maps the history up to time step $t$ to a probability distribution over $\mathcal{A}$. The goal of the learner is to minimize the regret defined by
\begin{align}\label{eq:regret}
    \text{Regret} ( T) :=  \sum_{t=1}^T 1 - \langle \theta , a_t \rangle = \frac{1}{2}\sum_{t=1}^T \| \theta - a_t \|_2^2.
\end{align}
where we used that the maximum expected reward is $\max_{a\in\mathcal{A}} \langle a, \theta \rangle = 1$ since $a_t,\theta\in\mathbb{S}^d$ and the last equality follows from $\| x - y \|_2^2 = \langle x - y , x - y \rangle = 2 - 2 \langle x , y \rangle$ for $x,y\in\mathbb{S}^d$.

\section{Weighted regularized least squares estimator and confidence region}\label{sec:lse}

Our strategy is designed to select actions that are close to the unknown parameter and to quantify it we use the confidence region based on the regularized least squares estimator (RLSE) used in the \textsf{LinUCB} algorithm. Moreover, since our noise model is decreasing for actions near the unknown parameter we use a weighted version. In this Section, we review the basic results that we will use later.
The weights of our least squares estimator will be an estimator of the subgaussian parameter $ \hat{\sigma}^2_t$ for the noise $\epsilon_t$. This will allow us to bias our estimator to actions with less noise. We take the idea of using a weighted least squares estimator from~\cite{kirschner2018information} where they apply it to linear bandits with heteroscedastic noise where they can weight the actions with the exact subgaussian parameter. In our setting we need an estimator since the noise depends on the unknown parameter; however, our setting is more concrete and this allows us to design a simpler strategy with theoretical guarantees on the regret.

At time step $t\in[T]$ for action $a_t\in\mathcal{A}$ we define the estimator of the noise as
\begin{align}\label{eq:variance_estimator}
\hat{\sigma}^2_t:\mathcal{H}_{t-1}\times\mathcal{A} \rightarrow \mathbb{R}_{> 0},
\end{align}
where $\mathcal{H}_{t-1} = (a_1,r_1,...,a_{t-1},r_{t-1})$ contains the past information of rewards and actions and it is independent of the unknown parameter $\theta$. To simplify the notation we will often not mention the dependence on the past actions and rewards and simply write $\hat{\sigma}^2_t(a)$.  

The weighted version of the linear least square estimator is defined with the following minimization problem
\begin{align}\label{eq:least_squares}
    \widetilde{\theta}^{\text{w}}_t := \argmin_{\theta' \in\mathbb{R}^d} \sum_{s=1}^t \frac{1}{\hat{\sigma}^2_s(a)}\left(r_s - \langle \theta', a_s\rangle \right)^2 + \lambda \| \theta' \|_2,
\end{align}
where $\lambda\in\mathbb{R}_{>0}$ is a regularization term. The analytical solution is 
\begin{align}\label{eq:estimator_weighted}
    \widetilde{\theta}^{\text{w}}_t  = V_t^{-1} (\lambda) \sum_{s=1}^{t} \frac{1}{\hat{\sigma}^2_s(a)} a_s r_s,
\end{align}
where 
\begin{align}\label{eq:design_matrix}
    V_t (\lambda ) = \lambda I + \sum_{s=1}^{t} \frac{1}{\hat{\sigma}^2_s(a)} a_s a^\mathsf{T}_s,
\end{align}
is the \textit{design matrix}. Using the above estimator for $\theta$ we can define a confidence region under the condition that $\hat{\sigma}^2_t$ is a good estimator for $\sigma_{t}^2$ for any $t$. The Lemma below formalizes this result. The proof is an adaptation of the regularized least squares estimator confidence region in~\cite{lattimore_szepesvári_2020}[Chapter 20] and we include it in the Appendix~\ref{ap:sec3} for completeness. 
\begin{lemma}\label{lem:confidence_region_weighted}
Let $\delta \in (0,1)$ and $(a_t)^\infty_{t=1}$ be the actions selected by some policy with corresponding rewards $(r_t)^\infty_{t=1}$ given  by $r_t = \langle \theta ,a_t \rangle + \epsilon_t$, where $\theta\in\mathbb{S}^d$ is the unknown parameter and $\epsilon_t$ is $\sigma_t$-subgaussian. Let $\hat{\sigma}^2_t$ be an estimator of the form~\eqref{eq:variance_estimator} and define the following event,
\begin{align}\label{eq:gt_event}
  G_t := \left\{ \big( (r_s, a_s)_{s=1}^{t-1}, a_t \big)  :  \sigma^2_s(a_s) \leq \hat{\sigma}^2_s(a_1,r_1,\ldots,a_{s-1},r_{s-1},a_s)\ \forall {s \in [t]} \right\}.
\end{align}
Then we can define the following confidence region 
\begin{align}\label{eq:confidence_region}
    \mathcal{C}_t := \lbrace \theta'\in\mathbb{R}^d : \| \theta' - \widetilde{\theta}^{\textup{w}}_t \|^2_{V_t (\lambda )} \leq \beta_{t,\delta}  \rbrace,
\end{align}
where $V_t$ and $\theta^{\textup{w}}_t$ are defined with $\hat{\sigma}^2_s$  and
\begin{align}\label{eq:beta}
\beta_{t,\delta} = \left( \sqrt{2\log \frac{1}{\delta} + \log\left(\frac{\det(V_t (\lambda ))}{\det (V_0 (\lambda ) )} \right)}+ \sqrt{\lambda}\right)^2. 
\end{align}
Then
\begin{align}\label{eq:prob_confidence}
    \mathrm{Pr}\left[ \forall s \in [t]:  \theta \in \mathcal{C}_s^{\textup{\textup{wls}}} \big| G_t \right] \geq 1 - \delta.
\end{align}
\end{lemma}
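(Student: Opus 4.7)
The plan is to follow the standard template for self-normalized confidence regions of regularized least squares estimators, as in~\cite{lattimore_szepesvári_2020}[Chapter~20], adapted to accommodate the weights $1/\hat{\sigma}^2_s$. Plugging $r_s = \langle \theta, a_s\rangle + \epsilon_s$ into the closed-form expression~\eqref{eq:estimator_weighted} yields the decomposition
\[
\widetilde{\theta}^{\textup{w}}_t - \theta = -\lambda V_t^{-1}(\lambda)\theta \,+\, V_t^{-1}(\lambda) S_t, \qquad S_t := \sum_{s=1}^t \frac{\epsilon_s}{\hat{\sigma}^2_s(a_s)} a_s,
\]
so that, after taking the $V_t(\lambda)$-norm and applying the triangle inequality, one is left with a deterministic regularization term plus a stochastic noise term.

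For the regularization term, the elementary bound $\|V_t^{-1}(\lambda)\theta\|^2_{V_t(\lambda)} = \theta^\mathsf{T} V_t^{-1}(\lambda)\theta \leq \|\theta\|^2_2/\lambda = 1/\lambda$ contributes exactly the $\sqrt{\lambda}$ summand in $\sqrt{\beta_{t,\delta}}$. For the noise term, the natural move is to absorb the weights by introducing the rescaled quantities $\tilde{a}_s := a_s/\hat{\sigma}_s(a_s)$ and $\eta_s := \epsilon_s/\hat{\sigma}_s(a_s)$. Because $\hat{\sigma}_s(a_s)$ is $\mathcal{F}_{s-1}$-measurable, $\eta_s$ is conditionally subgaussian with parameter $\sigma_s/\hat{\sigma}_s(a_s)$, which is at most $1$ on the event $G_t$. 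In the rescaled coordinates one has $V_t(\lambda) = \lambda I + \sum_{s=1}^t \tilde{a}_s \tilde{a}_s^\mathsf{T}$ and $S_t = \sum_{s=1}^t \eta_s \tilde{a}_s$, so the self-normalized martingale inequality of Abbasi-Yadkori et al.\ applies verbatim and gives, with probability at least $1-\delta$ and uniformly in $t$,
\[
\|S_t\|^2_{V_t^{-1}(\lambda)} \leq 2\log(1/\delta) + \log\bigl( \det V_t(\lambda)/\det V_0(\lambda) \bigr).
\]
Combining the two contributions recovers $\|\widetilde{\theta}^{\textup{w}}_t - \theta\|_{V_t(\lambda)} \leq \sqrt{\beta_{t,\delta}}$, which is the defining inequality of $\mathcal{C}_t$.

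The one step that genuinely needs more than rote calculation is the conditioning on $G_t$: the $1$-subgaussian guarantee on $\eta_s$ only holds on that event, whereas the self-normalized inequality is stated for a process that is subgaussian unconditionally. I expect this to be the main obstacle. The cleanest fix is a coupling argument: replace $\epsilon_s$ by a surrogate noise sequence $\bar{\epsilon}_s$ that is always $\hat{\sigma}_s(a_s)$-subgaussian and agrees with $\epsilon_s$ on $G_t$ (for example by redefining the noise distribution on the complement of $G_t$ to be Gaussian with variance $\hat{\sigma}^2_s(a_s)$). The self-normalized bound applies unconditionally to the surrogate process, and since the two processes coincide on $G_t$, intersecting with $G_t$ converts the unconditional probability statement into the conditional one in~\eqref{eq:prob_confidence}.
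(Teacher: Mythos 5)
Your proposal is correct and follows essentially the same approach as the paper: both split $\widetilde{\theta}^{\textup{w}}_t - \theta$ into a regularization term contributing $\sqrt{\lambda}$ and a self-normalized noise term controlled by the supermartingale argument underlying the Abbasi-Yadkori inequality, after absorbing the weights $1/\hat{\sigma}_s$ into the actions and noise. Your coupling treatment of the conditioning on $G_t$ is, if anything, more explicit than the paper's, which simply ``conditions all calculations on $G_t$'' to justify the $1$-subgaussian step in verifying the supermartingale property.
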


\section{Algorithm for linear bandits with vanishing noise: \textbf{LinUCB-VN}}\label{sec:algorithm}

In this Section, we give the specific algorithm that minimizes the regret for the stochastic linear bandit with linear vanishing noise described in Section~\ref{sec:model}. The algorithm is based on the principle of "optimism in the face of uncertainty" (OFU) or upper confidence bounds (UCB). We name the algorithm \textsf{LinUCB-VN}, where VN stands for vanishing noise. The idea is to design a strategy that minimizes the regret~\eqref{eq:regret} which means that we have to select actions close to $\theta$ keeping the trade-off of exploration and exploitation. The \textsf{LinUCB-VN} algorithm is designed to keep the relation $\lambda_{\min} ( V_t ) = \Omega (\sqrt{\lambda_{\max}(V_t)} ) $ at each time step $t$ in line with the result of~\cite{banerjee2023exploration} wich states that all strategies that minimize regret must achieve $\lambda_{\min}(V_t) = \Omega (\sqrt{t})$. This will allow us to bound the regret at each time step $t$ without the need of using the Cauchy-Schwartz inequality in combination with the elliptical potential lemma that gives immediately a  $O(\sqrt{T})$ regret (see~\cite[Chapter 19]{lattimore_szepesvári_2020}). Now we state the exact algorithm and in the subsequent sections we discuss the technical results that allow us to analyze the regret scaling.

\begin{figure}[!h]
    \centering
    \includegraphics[scale=0.5]{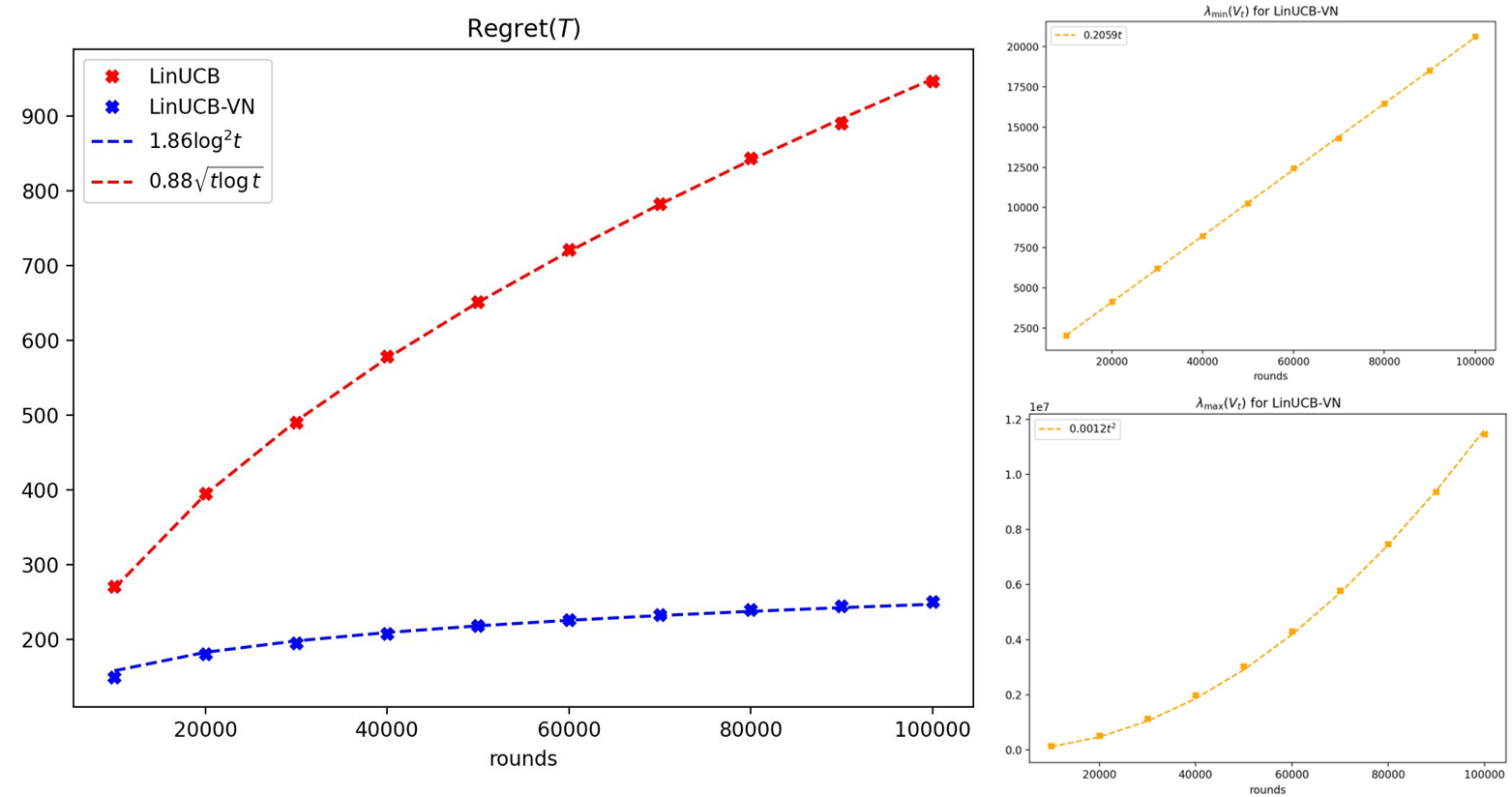}
    \caption{ \small We numerically test \textsf{LinUCB} and \textsf{LinUCB-VN} in a linear bandit with action set $\mathcal{A} = \mathbb{S}^2$ and reward model $r_t = \mathcal{N}(\langle \theta , a_t \rangle , 1 - \langle \theta , a_t \rangle^2 )$. Each point in the graphic is run independently and averaged over 100 instances for random environments $\theta\in\mathbb{S}^2$. \emph{Left plot:} Scaling of the regret for \textsf{LinUCB} algorithm and \textsf{LinUCB-VN}. We fit the functions $R(t) = 1.86\log^2 t$ for \textsf{LinUCB-VN} and $R(t) = 0.88\sqrt{t\log t}$ for \textsf{LinUCB}.
     \emph{Right plots:} Scaling of the maximum and minimum eigenvalue of the matrix $V_t$ for \textsf{LinUCB-VN}. The scaling shows the relation $\lambda_{\min} = \Omega ( \sqrt{\lambda_{\max}} )$. We fit the function  $\lambda_{\min}(V_t) = 0.2059t$ for the minimum eigenvalue and $ \lambda_{\max}(V_t) = 0.0012t^2$ for the maximum eigenvalue. The behavior $\lambda_{\min}(V_t) = \Theta ( t ) $ is the one that gives us the theoretical guarantee of polylogarithmic scaling of the regret.}
    \label{fig:numerical_results}
\end{figure}

Our algorithm works in batches of $2(d-1)$ rounds and with a slightly abuse of notation we will label each batch with $t$. At each batch $t \geq 1$ the algorithm selects the following actions, for $i\in[d-1]$:
\begin{align}\label{eq:general_update}
        a^{\pm}_{t,i} := \frac{\widetilde{a}^{\pm}_{t,i}}{\| \widetilde{a}^{\pm}_{t,i} \|_2},  
        \quad \textrm{where} \quad
        \widetilde{a}^{\pm}_{t,i} := {\theta}^w_{t-1} \pm \frac{1}{\sqrt{\lambda_{t-1,1}}}v_{t-1,i } \quad \text{and} \quad  {\theta}^w_{t} : = \frac{ \widetilde{\theta}^w_{t}}{ \| \widetilde{\theta}^w_{t} \|_2} 
\end{align}
is the normalized least squares estimator of $\widetilde{\theta}^w_{t}$ defined in~\eqref{eq:least_squares}, $v_{t-1,i}$ is the normalized eigenvector with eigenvalue $\lambda_{t-1,i} = \lambda_{i} ( V_{t-1}(\lambda ))$. The design matrix $V_t$ is updated at each batch $t$ as
\begin{align}\label{eq:vt_update}
    V_{t}(\lambda) := V_{t-1}(\lambda) + \omega ( V_{t-1}(\lambda))\sum_{i=1}^{d-1} \left(a^+_{t,i}  (a^+_{t,i})^\mathsf{T}  +  a^-_{t,i}  (a^-_{t,i})^\mathsf{T} \right),
\end{align}
with
\begin{align}\label{eq:omega}
    \omega ( V_{t-1} (\lambda) )  = \frac{\sqrt{\lambda_{\max}(V_{t-1}(\lambda))}}{12\sqrt{d-1}\beta_{t-1,\delta}}, \quad \hat{\sigma}^2_t ({a}^{\pm}_{t,i} ) = \frac{1}{\omega ( V_{t-1}(\lambda) ) },
\end{align}
and $\beta_{t,\delta}$ defined as in Lemma~\ref{lem:confidence_region_weighted} with input $V_t$~\eqref{eq:vt_update}. When is clear from the context we will denote $V_t(\lambda)$ simply as $V_t$. We state the pseudo-code of $\textsf{LinUCB-VN}$ below.

\begin{algorithm}[H]
	\caption{\textsf{LinUCB-VN}} 
	\label{alg:weighted_linUCB}
 
        Require: $\lambda_0\in\mathbb{R}_{>0}$, $\omega: \text{P}^d_+ \rightarrow \mathbb{R}_{\geq 0}$
        
        Set initial design matrix $V_0 \gets \lambda_0\mathbb{I}_{d\times d}$ 
        
        Choose initial estimator ${\theta}_0\in\mathbb{S}^d$ for $\theta$ at random 
        
        \For{$t=1,2,\cdots$}{
            \vspace{1mm}
            \textit{Optimistic action selection}
            \vspace{1mm}
            
            \For{$i = 1,2,\cdots d-1$}{    
                Select actions $a^+_{t,i}$ and $a^-_{t,i}$ according to Eq.~\eqref{eq:general_update}
                
                Receive associated rewards $r^+_{t,i}$ and $r^-_{t,i}$
            }
            
            \vspace{1mm}
            \textit{Update estimator of sub-gaussian noise for $a^+_{t,i}$}
            \vspace{1mm}
            
            $\hat{\sigma}^2_t \gets \frac{1}{\omega ( V_{t-1}(\lambda_0 ) ) }$ for $t\geq 2$ or $\hat{\sigma}^2_t \gets 1$ for $t=1$.

            \vspace{1mm}
            \textit{Update design matrix and RLSE}
            \vspace{1mm}
            
            $V_{t}(\lambda_0) \gets V_{t-1}(\lambda_0) + \frac{1}{\hat{\sigma}_t^2} \sum_{i=1}^{d-1} \left(a^+_{t,i}  (a^+_{t,i})^{\mathsf{T}}  +  a^-_{t,i}  (a^-_{t,i})^\mathsf{T} \right)$
            
            $\widetilde{\theta}_t^\text{w} \gets V_t^{-1} ( \lambda_0 )  \sum_{s = 1}^t \frac{1}{\hat{\sigma}_t^2} \sum_{i=1}^{d-1} (a^+_{s,i} r^+_{t,i} + a^-_{s,i} r^-_{t,i} ) $
        }
\end{algorithm}

\section{Actions and eigenvalue analysis of design matrix}\label{sec:actions}
In this Section, we present our first main result which is an analysis that shows that the actions of the \textsf{LinUCB-VN} algorithm satisfy the relation $\lambda_{\min} ( V_t ) = \Omega (\sqrt{\lambda_{\max}(V_t ) } ) $ at each batch $t$. Moreover, our result is very general since our analysis is independent of the randomness of the algorithm and also the noise model. We give the main ideas of the proof and give details in Appendix~\ref{ap:sec5}.
\begin{theorem}\label{th:main}
    Let $d\geq 2$, $\lbrace c_t \rbrace_{t=0}^\infty \subset \mathbb{S}^d$ be a sequence of normalized vectors and $\omega: \textup{P}^d_+ \rightarrow \mathbb{R}_{\geq 0}$ a function such that 
    \begin{align}
        \omega (X) \leq C\sqrt{\lambda_{\max}(X)} \quad \forall X\in \textup{P}^d_+,
    \end{align}
     for some constant $C > 0$. Let $\lambda_0 \geq \max \big\lbrace 2,\sqrt{\frac{2}{3(d-1)}}2dC+\frac{2}{3(d-1)} \big\rbrace$, and define a sequence of matrices
     $\lbrace V_t \rbrace_{t=0}^\infty \subset \mathbb{R}^{d\times d}$ as
       \begin{align}\label{eq:vt_lemma}
         V_0 := \lambda_0\mathbb{I}_{d\times d}, \quad      V_{t+1} := V_t + \omega ( V_t ) \sum_{i=1}^{d-1}P_{t,i}, 
       \end{align}
       where 
       \begin{align}\label{eq:defP_a}
           P_{t,i} : = a^+_{t+1,i}(a^+_{t+1,i})^\mathsf{T}  +  a^-_{t+1,i} (a^-_{t+1,i})^\mathsf{T} , \quad
           a^\pm_{t+1,i} : = \frac{\tilde{a}^\pm_{t+1,i}}{\| \tilde{a}^\pm_{t+1,i}\|_2}, \quad \tilde{a}^\pm_{t+1,i} := c_t \pm \frac{1}{\sqrt{\lambda_{t,1}}} v_{t,i},
       \end{align}
       with $\lambda_{t,i} = \lambda_{i}(V_t)$ the eigenvalues of $V_t$ with corresponding normalized eigenvectors \\$v_{t,1},...,v_{t,d}\in\mathbb{S}^d$.
  Then we have
    \begin{align}\label{eq:eig_relation}
        \lambda_{\min}(V_t) \geq \sqrt{\frac{2}{3(d-1)}\lambda_{\max}(V_t)} \quad \text{for all}\quad t\geq 0.
    \end{align}
  
\end{theorem}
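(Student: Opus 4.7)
I plan to argue by induction on $t$ that the invariant
\[ \lambda_{\min}(V_t)^2 \geq \frac{2}{3(d-1)}\, \lambda_{\max}(V_t) \]
holds for all $t \geq 0$; the base case $V_0 = \lambda_0 I$ reduces this to $\lambda_0 \geq 2/(3(d-1))$, which is supplied by the hypothesis on $\lambda_0$.

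For the inductive step, write $\alpha := 1/\sqrt{\lambda_{t,1}}$. The key geometric input is a psd lower bound on one batch contribution: since $\|c_t \pm \alpha v_{t,i}\|_2^2 \leq (1+\alpha)^2$ by Cauchy--Schwarz, and the cross terms in $(c_t + \alpha v_{t,i})(c_t + \alpha v_{t,i})^{\mathsf T} + (c_t - \alpha v_{t,i})(c_t - \alpha v_{t,i})^{\mathsf T}$ cancel,
\[ P_{t,i} \succeq \frac{2}{(1+\alpha)^2}\bigl(c_t c_t^{\mathsf T} + \alpha^2 v_{t,i}v_{t,i}^{\mathsf T}\bigr). \]
Summing over $i \in [d-1]$, using $\sum_{i=1}^{d-1} v_{t,i}v_{t,i}^{\mathsf T} = I - v_{t,d}v_{t,d}^{\mathsf T}$, and discarding the psd piece $(d-1)c_t c_t^{\mathsf T}$ yields
\[ V_{t+1} \succeq V_t + \frac{2\alpha^2 \omega(V_t)}{(1+\alpha)^2}\bigl(I - v_{t,d}v_{t,d}^{\mathsf T}\bigr). \]
Bounding $V_t \succeq \lambda_{t,1}(I - v_{t,d}v_{t,d}^{\mathsf T}) + \lambda_{t,d}v_{t,d}v_{t,d}^{\mathsf T}$ and reading off eigenvalues of the resulting two-eigenvalue matrix gives
\[ \lambda_{\min}(V_{t+1}) \geq \min\Bigl(\lambda_{t,1} + \frac{2\alpha^2\omega(V_t)}{(1+\alpha)^2},\ \lambda_{t,d}\Bigr), \]
while $\|P_{t,i}\|_{\mathrm{op}} \leq 2$ yields $\lambda_{\max}(V_{t+1}) \leq \lambda_{\max}(V_t) + 2(d-1)\omega(V_t)$.

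I then verify the invariant by splitting on which argument of the above minimum is attained. If it is the first, squaring $\lambda_{\min}(V_{t+1}) \geq \lambda_{t,1} + \frac{2\alpha^2\omega(V_t)}{(1+\alpha)^2}$ and using $\alpha^2 \lambda_{t,1} = 1$ gives $\lambda_{\min}(V_{t+1})^2 \geq \lambda_{t,1}^2 + \frac{4\omega(V_t)}{(1+\alpha)^2}$; combined with the inductive hypothesis and the upper bound on $\lambda_{\max}(V_{t+1})$, preservation reduces to $(1+\alpha)^2 \leq 3$, which holds because $\lambda_{t,1} \geq \lambda_0 \geq 2$ forces $\alpha \leq 1/\sqrt{2} \leq \sqrt{3}-1$. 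In the remaining case one only has $\lambda_{\min}(V_{t+1}) \geq \lambda_{t,d}$, and the invariant reduces to the scalar inequality $\lambda_{t,d}^{3/2} \geq \frac{2\sqrt{\lambda_{t,d}}}{3(d-1)} + \frac{4C}{3}$, which follows from $\lambda_{t,d} \geq \lambda_0$ together with the precise threshold $\lambda_0 \geq \sqrt{2/(3(d-1))}\cdot 2dC + 2/(3(d-1))$ in the hypothesis, since the left-hand side is monotone in $\lambda_{t,d}$ past this threshold.

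The most delicate step is this second, near-degenerate branch: the matrix-ordering argument provides no first-order boost to $\lambda_{\min}(V_{t+1})$ beyond the old $\lambda_{t,d}$, so preservation of the invariant rides entirely on the quantitative lower bound on $\lambda_0$; once the psd domination for $\sum_i P_{t,i}$ is in hand, the remainder of the argument is a careful but routine calculation.
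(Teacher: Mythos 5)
Your proposal is correct, and it takes a genuinely cleaner route than the paper's proof in the non-degenerate regime. Both proofs start from the same psd minorant $P_{t,i} \succeq \frac{2}{(1+\alpha)^2}(c_tc_t^{\mathsf T}+\alpha^2 v_{t,i}v_{t,i}^{\mathsf T})$ with $\alpha = 1/\sqrt{\lambda_{t,1}}$, and both treat the near-degenerate regime (your second branch, the paper's Case~2.2) by exploiting the quantitative lower bound on $\lambda_0$. The structural difference is upstream of that. The paper keeps $V_t$ intact, applies the minimax principle to $V_t + p_t\sum_{i<d}v_{t,i}v_{t,i}^{\mathsf T}$, and must then track where the old top eigenvalue $\lambda_{t,d}$ interlaces among the shifted eigenvalues $\lambda_{t,i}+p_t$; this forces a trichotomy (the paper's Case~1, Case~2.1, Case~2.2), with Cases~1 and~2.1 differing only in the trace-based upper bound they extract for $\lambda_{t+1,d}$. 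You instead coarsen $V_t \succeq \lambda_{t,1}(I-v_{t,d}v_{t,d}^{\mathsf T})+\lambda_{t,d}v_{t,d}v_{t,d}^{\mathsf T}$ \emph{before} adding the perturbation, so the minorant for $V_{t+1}$ has only two distinct eigenvalues and $\lambda_{\min}(V_{t+1})\geq\min(\lambda_{t,1}+p_t,\lambda_{t,d})$ is immediate — a dichotomy rather than a trichotomy. The price is that you use the crude perturbation bound $\lambda_{\max}(V_{t+1})\leq\lambda_{t,d}+2(d-1)\omega(V_t)$ rather than the paper's trace-refined versions, but as you show, the slack is absorbed by $(1+\alpha)^2\leq 3$, which requires $\lambda_{t,1}\geq(\sqrt{3}-1)^{-2}\approx 1.87$, and $\lambda_0\geq 2$ delivers that with room to spare. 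Two small points to tighten when you write it out: (i) for the second branch, rather than a ``monotonicity past the threshold'' gloss, observe directly that $L\mapsto L^{3/2}-\tfrac{2}{3(d-1)}\sqrt{L}=\sqrt{L}\bigl(L-\tfrac{2}{3(d-1)}\bigr)$ is increasing for $L\geq\lambda_0\geq 2$, so it suffices to verify the inequality at $L=\lambda_0$; (ii) you implicitly use that $\lambda_{t,1}$ and $\lambda_{t,d}$ are non-decreasing in $t$ (to invoke $\lambda_{t,1}\geq\lambda_0$ and $\lambda_{t,d}\geq\lambda_0$ at every step), which follows from $V_{t+1}\succeq V_t$ and Weyl's monotonicity — worth stating once. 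Net effect: same constant, fewer cases, and the interlacing machinery (Corollary~III.1.2 of Bhatia) is replaced by a one-line domination argument.
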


\begin{skproof}
    The idea to prove this result is to use induction and analyse separately the cases when $\lambda_{t,1} < \lambda_{t,d}$ and $\lambda_{t,1} \approx \lambda_{t,d}$ for each $t\geq 1$. The first case is the most delicate and we must quantify the minimum (maximum) non-trivial perturbation that the terms $P_{t,i}$ can contribute to the smallest (biggest) eigenvalue when adding to the matrix $V_t$. 
    For the second case, the idea is that if $\lambda_{t,1} \approx \lambda_{t,d}$ then if the inequality~\eqref{eq:eig_relation} holds at time step $t$, the term $ \sum_{i=1}^{d-1}P_{t,i}$ is not able to break the relation even if it does not contribute to increase the minimum eigenvalue. 
 
    \textbf{Case 1}: $\lambda_{t,1} < \lambda_{t,d}$. For this case, we want to establish a matrix inequality between $V_{t+1}$ and $V_t$ that is independent of the sequence $\lbrace c_t \rbrace_{t=0}^\infty$. A simple calculation shows $\| \tilde{a}^\pm_{t+1,i}\|_2 \leq  1 + 1/\sqrt{\lambda_{t,1}} $ and using that $\tilde{a}^+_{t+1,i}(\tilde{a}^+_{t+1,i})^\mathsf{T}  + \tilde{a}^-_{t+1,i} (\tilde{a}^-_{t+1,i})^\mathsf{T}$ does not depend on the cross terms $c_tv^\mathsf{T}_{t,i}$ we can show
    \begin{align}
        P_{t,i} \geq \frac{2}{(1+\sqrt{\lambda_{t,1}})^2} v_{t,i}v^\mathsf{T}_{t,i},
    \end{align}
    which quantifies how much $P_{t,i}$ contributes to increase the eigenvalue on the direction $v_{t,i}$. This immediately leads to 
    \begin{align}\label{eq:vt+1_bound_sk}
        V_{t+1} \geq V_t +  p_t \sum_{i=1}^{d-1} v_{t,i} v^\mathsf{T}_{t,i}, \quad p_t : = \frac{2 \omega(V_t)}{(1+\sqrt{\lambda_{t,1}})^2}, 
    \end{align}
    which is the matrix inequality that we will use to relate the eigenvalues of $V_{t+1}$ and $V_t$. In our proof we distinguish the cases $\lambda_{t,d-1} + p_t \leq \lambda_{t,d}  $ and $\lambda_{t,d-1}+ p_t < \lambda_{t,d}  < \lambda_{t,1}+ p_t$. However, since both cases use the same technique (with some subtleties) we focus on the first case here and leave the details of the second in the Appendix~\ref{ap:sec5}. Using the inequality~\eqref{eq:vt+1_bound_sk} in combination with the minimax principle for eigenvalues in~\cite[Corollary III.1.2]{bhatia97} we have
    \begin{align}\label{eq:lambdaminbound_sk}
    \lambda_{t+1,i} \geq \lambda_{t,i} + p_t \quad \text{for} \quad i=1,...,d-1 .
    \end{align}
    Then we can control how much the maximum eigenvalue grows using the above bound and $ \Tr (V_{t+1}) = \lambda_{t,d}+\sum_{i=1}^{d-1} \lambda_{t,i}+2(d-1)\omega(V_t)$. This allows us to establish that
    \begin{align}\label{eq:lambda_upper_sk}
    \lambda_{t+1,d} \leq \lambda_{t,d} + (2\omega(V_t ) - p_t)(d-1).
    \end{align}
    We finalize this case showing that
    \begin{align}
        (\lambda_{t,1}+p_t)^2 \geq \frac{2}{3(d-1)}(\lambda_{t,d} + (2\omega(V_t ) - p_t)(d-1) ),
    \end{align}
    under the assumption that induction relation~\eqref{eq:eig_relation} holds at time step $t$. Then the inequality at time step $t+1$ $\lambda_{t+1,1} \geq \sqrt{2/(3(d-1)\lambda_{t+1,d}}$ follows from~\eqref{eq:lambdaminbound_sk} and~\eqref{eq:lambda_upper_sk}.

     \textbf{Case 2}: $\lambda_{t,1} \approx \lambda_{t,d}$. For this case, we do not need a tight control on the eigenvalues of $V_{t+1}$. It suffices to use the following bounds 
    \begin{align}\label{eq:trivial_pert_bound}
        \lambda_{t+1,1} &\geq \lambda_{t,1}, \\
        \lambda_{t+1,d} &\leq \lambda_{t,d} + 2 \omega(V_t ) (d-1) ,
    \end{align}
    which follows from the inequalities  $\lambda_{\min}( A+B) \geq \lambda_{\min}( A ) + \lambda_{\min}( B )$ and \\$\lambda_{\max}(A+B) \leq \lambda_{\max}(A ) + \lambda_{\max}(B)$ for $A,B \in \text{P}^d_+$. Then combining the induction hypothesis at time step $t$,~\eqref{eq:eig_relation} the condition $\omega(V_t) \leq C \sqrt{\lambda_{t,d}}$ and $\lambda_{t,1} \geq \lambda_{0}$, we can establish a series of inequalities that lead to
     \begin{align}
     \lambda_{t+1,1} \geq \lambda_{t,1}  \geq \sqrt{ \frac{2}{3(d-1)}\left(2(d-1)\omega(V_t) + \lambda_{t,d} \right)} \geq \sqrt{ \frac{2}{3(d-1)}\lambda_{t+1,d}},
    \end{align}
    where the first and last inequality follows from the trivial perturbation bounds~\eqref{eq:trivial_pert_bound}. This concludes the induction.
\end{skproof}

\textbf{Note.} In the Appendix~\ref{ap:sec3_case2} we provide an alternative proof for the special case of $d=2$ that slightly improves the constant in the relation $\lambda_{\min}(V_t) = \Omega ( \sqrt{\lambda_{\max} ( V_t ) } ) $. In this proof, we compute the worst-case scenario depending on the centers $c_t$ and we provide an exact computation of the eigenvalues of the matrix $V_t$.

\section{Regret analysis}\label{sec:regret_analysis}

In this Section, we present the regret analysis of \textsf{LinUCB-VN} for a linear bandit with linear vanishing noise. We use the result of the previous section to show a polylogarithmic scaling of the regret with high probability. We develop a new technique to analyze an algorithm based on upper confidence bounds and we rely on the relation that our algorithm keeps the relation  $\lambda_{\min} ( V_t ) = \Omega (\sqrt{\lambda_{\max}(V_t )}  ) $.

\begin{theorem}\label{th:regret_bound_d2}
Let $d\geq 2$, $\delta\in ( 0,1)$ and $T = 2(d-1)\widetilde{T}$ for some $\widetilde{T}\in\mathbb{N}$. Let $\omega (X)$ defined as in~\eqref{eq:omega} using $\delta$ and $\lambda_0$ satisfy the constraints in Theorem~\ref{th:main}. Then if we apply \textsf{LinUCB-VN}~\ref{alg:weighted_linUCB}$(\lambda_0 ,\omega)$ to a $d$ dimensinal stochastic linear bandit with linear vanishing noise~\eqref{eq:vanishing_noise} with probability at least $1-\delta$ the regret satisfies
\begin{align}
       \textup{Regret}(T) \leq 4(d-1) + \left(144d^2\beta^2_{T,\max}+24(d-1)^{\frac{3}{2}}\beta_{T,\max}\right)\log \left( \frac{T}{2(d-1)} \right),
\end{align}
    where
    \begin{align}
        \beta_{T,\max} : = \left(\lambda_0+\sqrt{2\log\frac{T}{\delta} + d\log\left(\frac{1}{144\lambda_0}T^2 + \frac{1}{6\sqrt{\lambda_0}}T +1\right)} \right)^2 .
    \end{align}
\end{theorem}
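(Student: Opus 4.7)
The plan is to control the instantaneous regret at every batch by combining the confidence region of Lemma~\ref{lem:confidence_region_weighted} with the eigenvalue relation of Theorem~\ref{th:main}, and then sum over batches. I would organise the argument in four steps.

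First, I would define the good event $E_T = \{\theta \in \mathcal{C}_t \text{ for all } t \in [\widetilde{T}]\}$ and argue it has probability at least $1-\delta$. Lemma~\ref{lem:confidence_region_weighted} gives this only conditionally on the noise-calibration event $G_T$, so I would prove by induction on the batch index that $G_T$ is in fact forced by the design of \linUCBvn{}: if $\theta \in \mathcal{C}_{s-1}$ for every previous batch, then the selected actions $a^\pm_{t,i}$ turn out to be close enough to $\theta$ that the choice $\hat\sigma^2_t = 1/\omega(V_{t-1})$ dominates $\sigma^2_t \leq 1-\langle \theta, a^\pm_{t,i}\rangle^2$. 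The base case is handled by the bootstrap $\hat\sigma^2_1=1$, which is a valid upper bound for \emph{any} subgaussian parameter on $\mathbb{S}^d$.

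Second, I would quantify the closeness geometrically. The confidence region yields $\|\theta - \widetilde{\theta}^{\textup{w}}_{t-1}\|_{V_{t-1}}^2 \leq \beta_{t-1,\delta}$, hence $\|\theta - \widetilde{\theta}^{\textup{w}}_{t-1}\|_2^2 \leq \beta_{t-1,\delta}/\lambda_{t-1,1}$. After normalising $\widetilde{\theta}^{\textup{w}}_{t-1}$ to $\theta^{\textup{w}}_{t-1}\in\mathbb{S}^d$ and combining with the perturbation $\pm v_{t-1,i}/\sqrt{\lambda_{t-1,1}}$, which itself has magnitude $1/\sqrt{\lambda_{t-1,1}}$, a direct expansion of $\|\widetilde{a}^\pm_{t,i} - \theta\|_2^2$ followed by renormalising to the sphere gives $\|a^\pm_{t,i} - \theta\|_2^2 = O(\beta_{t-1,\delta}/\lambda_{t-1,1})$. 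Substituting this into the noise inequality closes the induction from the first step.

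Third, I would translate the eigenvalue ratio of Theorem~\ref{th:main} into concrete growth rates. The operator-norm bound $\|\sum_i P_{t,i}\|_{\textup{op}} \leq 2(d-1)$ combined with the definition of $\omega$ gives the recursion $\sqrt{\lambda_{\max}(V_t)} \leq \sqrt{\lambda_{\max}(V_{t-1})} + O(\sqrt{d-1}/\beta_{t-1,\delta})$, which iterates to $\lambda_{\max}(V_t) = O(t^2 d/\beta_{T,\max}^2)$. Theorem~\ref{th:main} then yields $\lambda_{\min}(V_t) = \Omega(t/\beta_{T,\max})$, while $\det(V_t) = O(\lambda_{\max}(V_t)^{d-1}\lambda_{\min}(V_t)) = O(t^{2d-1})$ feeds back into the definition of $\beta_{t,\delta}$ to recover exactly the closed form of $\beta_{T,\max}$ stated in the theorem.

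Finally, I would sum the per-action regret. Under $E_T$, $1-\langle\theta, a^\pm_{t,i}\rangle = \tfrac12 \|\theta-a^\pm_{t,i}\|_2^2 = O(\beta_{T,\max}/\lambda_{t-1,1}) = O(\beta_{T,\max}^2/t)$; with $2(d-1)$ actions per batch and $\widetilde{T}=T/(2(d-1))$ batches, the cumulative regret telescopes to $\sum_{t=1}^{\widetilde T} O(d\,\beta_{T,\max}^2/t) = O\!\left(d\,\beta_{T,\max}^2 \log(T/(2(d-1)))\right)$, plus an additive $O(d)$ term absorbing the bootstrap first batch and matching the $4(d-1)$ constant in the statement. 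The main obstacle is precisely the circular induction of the first two steps: the confidence region must be available to bound the action–target distance, but that distance must already be small to validate the noise estimate. Everything beyond that inductive bootstrap is algebraic bookkeeping on top of Theorem~\ref{th:main}.
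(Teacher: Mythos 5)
Your overall framework matches the paper's: decompose the regret over batches, bootstrap the confidence event $E_t$ and the calibration event $G_t$ against each other by induction, bound $\|\theta - a^\pm_{t,i}\|_2^2$ geometrically in terms of $\beta_{t-1,\delta}/\lambda_{\min}(V_{t-1})$, invoke Theorem~\ref{th:main}, and then sum. Steps one, two, and four are essentially the paper's argument (the paper does the geometric bound by a triangle inequality through $\theta^{\text{w}}_{t-1}$ plus Lemma~\ref{lem:dist_action_centre}, and the probability accounting yields $(1-\delta)^{\widetilde T}$ before a $\delta\mapsto\delta/\widetilde T$ substitution, but these are the same ideas you describe).

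There is, however, a genuine gap in your step three. From $\|\sum_i P_{t,i}\|_{\text{op}} \le 2(d-1)$ and the definition of $\omega$ you derive an \emph{upper} bound $\lambda_{\max}(V_t) = O(t^2 d/\beta^2_{T,\max})$, and then assert that Theorem~\ref{th:main} ``yields $\lambda_{\min}(V_t) = \Omega(t/\beta_{T,\max})$.'' But Theorem~\ref{th:main} gives $\lambda_{\min}(V_t) \ge \sqrt{\tfrac{2}{3(d-1)}\lambda_{\max}(V_t)}$, which converts a \emph{lower} bound on $\lambda_{\max}$ into a lower bound on $\lambda_{\min}$; an upper bound on $\lambda_{\max}$ says nothing in that direction. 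What you actually need, and what the paper spends a substantial part of the proof establishing, is the matching lower bound $\lambda_{\max}(V_t) = \Omega\!\left(t^2/\beta_{\widetilde T,\delta}^2\right)$. This does not follow from the operator-norm recursion; it comes from the trace identity $\Tr(V_t) = d\lambda_0 + 2(d-1)\sum_{s\le t}\omega(V_{s-1})$ together with $\lambda_{\max}(V_t) \ge \Tr(V_t)/d$, which produces the self-referential inequality
\begin{align}
\lambda_{\max}(V_t) \;\ge\; \frac{1}{1 + 6\frac{d}{\sqrt{d-1}}\beta_{\widetilde T,\delta}}\sum_{s=1}^{t}\sqrt{\lambda_{\max}(V_s)} \, .
\end{align}
The paper solves this by linearly interpolating $\lambda_{\max}(V_t)$ to a continuous function and invoking a Grönwall-type comparison with the ODE $G'(x)=c\sqrt{G(x)}$. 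Without this lower bound your chain $\|\theta - a^\pm_{t,i}\|_2^2 = O(\beta_{T,\max}/\lambda_{\min}) = O(\beta^2_{T,\max}/t)$ is not justified and the telescoping $\sum_t 1/t$ never materialises. The upper bound on $\lambda_{\max}$ that you do derive is still needed — it feeds into $\det(V_t)$ to control $\beta_{t,\delta}$ (your $\det \le \lambda_{\min}\lambda_{\max}^{d-1}$ is fine for that) — but the two bounds play distinct roles and you have supplied only one of them.
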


From the above Theorem we have that the scaling of the regret on $d$ and $T$ is
    \begin{align}
    \textup{Regret}(T) = O(d^{4}\log^3 (T)),
    \end{align}
and in Corollary~\ref{cor:expected_regret_linucbvn} we prove that choosing $\delta = \frac{1}{\widetilde{T}}$ gives  
\begin{align}
\EX [\textup{Regret}(T) ] =  O(d^{4}\log^3 (T)).
\end{align}

\begin{skproof}
  Using that the algorithm works in $\widetilde{T}$ batches of $2(d-1)$ actions we can express the regret as
    \begin{align}
        \text{Regret}(T) =   \frac{1}{2}\sum_{t= 1}^{\widetilde{T}} \sum_{i = 1}^{d-1} \left( \| \theta - a^+_{t,i} \|_2^2 + \| \theta - a^{-}_{t,i} \|_2^2  \right).
    \end{align}
   Thus, to upper bound the regret, we have to give a high probability bound on $\| \theta - a^{\pm}_{t,i} \|_2^2 $ for each batch $t\in[\widetilde{T}]$. For that, we can use triangle inequality to obtain
   \begin{align}\label{eq:theta_at_bound_sk}
    \| \theta - a^\pm_{t,i} \|_2 &\leq \| \theta - \widetilde{\theta}^{\text{w}}_{t-1} \|_2 + \| \theta^{\text{w}}_{t-1}  - \widetilde{\theta}^{\text{w}}_{t-1} \|_2 + \| {\theta}^{\text{w}}_{t-1} -a^\pm_{t,i} \|_2 .
\end{align}
    Under the assumption that $\theta\in\mathcal{C}_{t-1}$ with $\mathcal{C}_{t-1}$ defined as in Lemma~\ref{lem:confidence_region_weighted} we can use that $\mathcal{C}_{t-1} \subseteq \mathbb{B}^d_{r}(\widetilde{\theta}^{\text{w}}_{t-1})$ with $r = \sqrt{\frac{\beta_{t-1,\delta}}{\lambda_{\min}(V_t)}}$ and upper bound the three terms by $r$. The bound on $\| \theta - \widetilde{\theta}^{\text{w}}_{t-1} \|_2 $ follows from the definition of $C_{t-1}$ and the remaining terms we provide geometrical proofs in the Appendix~\ref{ap:sec6}. Now the main idea is to use our previous result Theorem~\ref{th:main} (under a careful check of the assumptions and fixing $\lambda_0$) and use that our particular choice of actions allows us to keep the inequality $\lambda_{\min}(V_t) \geq \sqrt{\frac{2}{3(d-1)} \lambda_{\max}(V_t)}$. Then we can upper bound $\| \theta - a^{\pm}_{t,i} \|_2^2$  as
    \begin{align}\label{eq:dist_lambdmax}
    \| \theta - a^{\pm}_{t,i} \|_2^2 &\leq \frac{9\beta_{t-1,\delta}}{\lambda_{\min(V_{t-1})}} \leq \frac{12\sqrt{d-1}\beta_{t-1,\delta}}{\sqrt{\lambda_{\max}(V_{t-1})}}.
    \end{align}
     Thus, if we can control the growth of $\lambda_{\max} (V_t ) $ then we can bound the regret under the assumption that $\theta\in\mathcal{C}_t$ for all $t\in[\widetilde{T} ] $. In the Appendix~\ref{ap:sec6}, we rigorously prove that in fact, we can apply Lemma~\ref{lem:confidence_region_weighted} and bound the probability that the following event holds
    \begin{align}\label{eq:event_Et_sk}
        E_t := \lbrace \big( r^+_{s,1}, a^+_{s,1},r^-_{s,1}, a^-_{s,1},...,r^+_{s,d-1}, a^+_{s,d-1},r^-_{s,d-1}, a^-_{s,d-1})\big)_{s= 1}^{t} : \forall s \in \left[ t\right], \theta \in \mathcal{C}_{s} \rbrace .
    \end{align}
    To apply Lemma~\ref{lem:confidence_region_weighted} we need that the event $G_t$~\eqref{eq:gt_event} holds which means essentially that $\hat{\sigma}^2_t $~\eqref{eq:omega} is a good estimator of the subgaussian parameter of the noise.
    Using the definition of the subgaussian parameter we have $\sigma^2_{t}(a^\pm_{t,i}) \leq \| \theta - a^\pm_{t,i} \|_2^2.$ and from the above argument we see that we can define a "good" estimator of the noise $\hat{\sigma}^2_t (a^\pm_{t,i} )$ in the sense that
    \begin{align}\label{eq:upperbound_subgaussian_sk}
        \sigma^2_t ( a^\pm_{t,i}) \leq \hat{\sigma}^2_t (a^\pm_{t,i} ) = \frac{1}{w(V_{t-1} )} \quad \text{if} \quad \theta\in\mathcal{C}_{t-1}.
    \end{align}
    Thus, after bounding the probabilities that the events $G_t$ and $E_t$ hold what remains to do is to provide bounds for $\lambda_{\max}(V_{t})$.  Our result will show $\lambda_{\max}(V_{t}) = \Theta (t^2 )$ and we provide a careful computation in the Appendix~\ref{ap:sec6}. To do that, we just need to use the rule update of $V_t$ and our computation will be independent of the algorithm. Here we provide the idea for a lower bound on $\lambda_{\max}(V_{t})$ that is essentially what we need to upper bound the regret. A similar idea works for an upper bound and we provide the full computation in the Appendix~\ref{ap:sec6}. The upper bound is necessary to prove $\beta_t \leq \beta_{T,\max}$. Let's provide the main ideas for the lower bound.

    From the definition of $V_t$ and the choice of $\omega ( V_t )$~\eqref{eq:omega} we have
  \begin{align}
    \Tr (V_t ) &\geq \sum_{s= 2}^{t} 2(d-1)\omega (V_{s-1} ) \geq \frac{\sqrt{d-1}}{6\beta_{\tilde{T},\delta}} \sum_{s=1}^{t-1} \sqrt{\lambda_{\max}(V_s)}. 
\end{align}
   Then we can use $\Tr ( V_s) \leq d\lambda_{\max}(V_s) $ and some algebra to establish the following inequality 
\begin{align}\label{eq:lambdmaxinequality_sk}
    \lambda_{\max}(V_t) \geq \frac{1}{1+6\frac{d}{\sqrt{d-1}}\beta_{\tilde{T},\delta}} \sum_{s= 1}^{t} \sqrt{\lambda_{\max}(V_s)}.
\end{align}
 This inequality already gives the intuition that  $\lambda_{\max}(V_t) = \Omega ( t^2 )$ because if we propose $\lambda_{\max}(V_t) \sim t^\alpha$ and we substitute in the above inequality, a comparison of exponents shows that $\alpha \geq 1+\alpha/2$. Solving this gives $\alpha \geq 2$. However, we have to formalize this idea. To do that we are going to extend the function $\lambda_{\max}(V_t)$ to the continuous on $t$ and this will allow us to establish a differential inequality that we can exactly solve. First, we define the following linear interpolation
\begin{align}\label{eq:def_fx_sk}
    g_2(x) := (t+1-x)\lambda_{\max}(V_t) + (x-t)\lambda_{\max}(V_{t+1}) \quad \text{for} \quad x\in[t,t+1) \quad \text{and} \quad t\in \lbrace 0 , ..., \widetilde{T} \rbrace,
\end{align}
which satisfies $g_2(t) = \lambda_{\max}(V_t) $. Combining the above definition with the inequality~\eqref{eq:lambdmaxinequality_sk} and some algebra we can show that
\begin{align}
    g_2(x) &\geq \frac{1}{1+6\frac{d}{\sqrt{d-1}}\beta_{\tilde{T},\delta}} \int_{0}^x \sqrt{g_2 (s)}ds.
\end{align}
This leads to the following differential inequality  
\begin{align}
    \frac{dG_2(x)}{dx} \geq \sqrt{\frac{1}{1+6\frac{d}{\sqrt{d-1}}\beta_{\tilde{T},\delta}}}\sqrt{G_2(x)}.
\end{align}
where $ \frac{dG_2(x)}{dx} = \sqrt{g_2(x) }$. Using a result of differential equations (more details in the Appendix~\ref{ap:sec6}) we can solve just the equality and show that 
\begin{align}
    G_2 (x) &\geq \frac{1}{4+24\frac{d}{\sqrt{d-1}}\beta_{\tilde{T},\delta}}x^2 .
\end{align}
This leads to the result $\lambda_{\max}(V_t ) = \Omega (t^2)$. Thus, from the expression of the regret, $\beta_{t,\delta} = O(\log(T))$ and~\eqref{eq:dist_lambdmax} we have
\begin{align}
    \text{Regret}(T) = O \left(\text{polylog}(T)\sum_{t=1}^{\widetilde{T}} \frac{1}{\sqrt{\lambda_{\max}(V_t ) }} \right) = O \left(\text{polylog}(T)\sum_{t=1}^{\widetilde{T}} \frac{1}{t} \right) = O \left(\text{polylog}(T) \right).
\end{align}

\end{skproof}
\textbf{Note.} Setting $\omega(V_t ) = 1$ allows the algorithm to deal with the usual 1-subgaussian noise. For this case, our proof simplifies and leads to the usual $\text{Regret}(T) = \widetilde{O}(\sqrt{T})$.
\section{Open problems}
\begin{itemize}
    \item Our model can be extended in different ways, but we felt a focus on a relatively simple geometry would be more suitable to introduce new techniques. A generalisation to Locally Constant Hessian (LCH) surfaces and locally convex action sets~\cite{banerjee2023exploration} would require us to replace the normalization of the actions by a projection to the corresponding surface. We expect that this would still yield a relation $\lambda_{\min} = \Omega (\lambda^s_{\max})$ where $s\in(0,1/2]$ depends on how well the action set approximates a LCH, and from there the analysis would be similar. A relatively straight-forward generalisation of our analysis also works for noise decaying as $\|\theta -a_t \|_2^{2\alpha}$ for $\alpha \in [0,1]$. This eventually leads to a regret scaling as $T^{\beta}\text{polylog}(T)$ with $\beta = \frac{1-\alpha}{2-\alpha}$. This behaviour interpolates between the $\sqrt{T}$ seen in the constant noise case and $\text{polylog(T)}$ seen in our work.
    \item In~\cite{banerjee2023exploration} the authors prove (under some mild assumptions) that any strategy that minimizes the regret for linear bandits with continuous smooth action sets must satisfy $\lambda_{\min}(V_t) = \Omega (\sqrt{t} ) $ for constant noise models. This result does not apply to our model since the noise is not constant. Our strategy achieves the relation $\lambda_{\min}(V_t) = \Omega (\sqrt{\lambda_{\max}(V_t)} ) $ that is a more general condition independent of the noise. Thus, we propose the following conjecture:
    \begin{conjecture}
        (informal) Consider a linear stochastic bandit with $\mathcal{A} \subset \mathbb{R}^d$ a smooth continuous action set and reward model with arbitrary bounded noise. Then, any strategy that minimizes the regret must achieve the relation $\lambda_{\min}(V_t) = \Omega (\sqrt{\lambda_{\max}(V_t)} ) $.
    \end{conjecture}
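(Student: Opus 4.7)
The plan is to prove the conjecture by a change-of-measure lower bound in the spirit of \cite{banerjee2023exploration}, generalized to replace the $\sqrt{t}$ threshold by $\sqrt{\lambda_{\max}(V_t)}$ and to accommodate arbitrary bounded noise. The strategy is contrapositive: assume there is a time $t \leq T$ at which $\lambda_{\min}(V_t) \leq c_1 \sqrt{\lambda_{\max}(V_t)}$ for an arbitrarily small constant $c_1$, and derive from this a lower bound on expected cumulative regret that exceeds the minimax optimal $R^{\ast}(T)$, yielding a contradiction.

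I would first formalize the two regularity assumptions left informal in the statement. For \emph{arbitrary bounded noise}, I would require a quadratic KL estimate of the form $\mathrm{KL}(P_{\mu}\,\|\,P_{\mu+\Delta}) \leq C_{\mathrm{noise}}\Delta^{2}$ for small $\Delta$; this holds whenever the noise density is log-Lipschitz and subsumes sub-Gaussian and compactly supported noise. For the \emph{smooth action set}, I would require $\mathcal{A}$ to be a compact $C^2$ submanifold so that the optimal-arm map $\theta \mapsto a^{\ast}(\theta)$ is $C^{1}$ and the suboptimality gap satisfies $\langle \theta, a^{\ast}(\theta)\rangle - \langle \theta, a\rangle = \Theta(\|a - a^{\ast}(\theta)\|_{2}^{2})$ locally, following the LCH framework of \cite{banerjee2023exploration}. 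These are essentially the assumptions under which the cited $\Omega(\sqrt{t})$ bound was proven.

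Next, at the hypothetical bad time $t$ I would set $v := v_{\min}(V_t)$ and define the alternative environment $\theta' := \theta + \delta v$ (projected back to $\mathcal{E}$ if needed), with $\delta := c_{2}/\sqrt{\lambda_{\min}(V_t)}$. The chain-rule KL bound combined with the quadratic noise estimate gives
\begin{align}
\mathrm{KL}\!\left(P_{\theta}^{(t)} \,\Big\|\, P_{\theta'}^{(t)}\right) \;\leq\; C_{\mathrm{noise}} \sum_{s=1}^{t} w_{s}\langle v, a_{s}\rangle^{2} \delta^{2} \;\leq\; C_{\mathrm{noise}}\, \delta^{2}\, \lambda_{\min}(V_t) \;=\; O(1),
\end{align}
so by Pinsker the total-variation distance between the $t$-round histories is bounded strictly below one. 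I would then run a standard two-point (Le Cam) argument on the tail $(t,T]$: the $C^{1}$ dependence of $a^{\ast}$ places $a^{\ast}(\theta')$ at Euclidean distance $\Theta(\delta)$ from $a^{\ast}(\theta)$, and the quadratic gap forces any policy to incur per-round regret $\Omega(\delta^{2})$ on at least one of the two environments. Summing yields expected regret $\Omega((T-t)\delta^{2}) = \Omega((T-t)/\lambda_{\min}(V_t))$ on one of $\theta,\theta'$. Specializing to $t = T/2$ and substituting the failure hypothesis $\lambda_{\min}(V_t) \leq c_{1}\sqrt{\lambda_{\max}(V_t)}$ yields a regret lower bound of $\Omega(T/\sqrt{\lambda_{\max}(V_T)})$; once $c_{1}$ is chosen small enough this exceeds $R^{\ast}(T)$ for any reasonable minimax scaling (since $\lambda_{\max}(V_T)$ is at most polynomial in $T$), contradicting optimality and establishing the conjecture.

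The step I expect to be the main obstacle is propagating the indistinguishability at time $t$ into control over the algorithm's tail behaviour on $(t,T]$: after $t$ the policy may itself probe along $v$ and rapidly distinguish $\theta$ from $\theta'$, so the two-point lower bound must be coupled with an event-based decomposition of the form ``either most post-$t$ actions stay $\delta/2$-close to $a^{\ast}(\theta)$ (large regret under $\theta'$) or the algorithm explores (large regret under $\theta$).'' A secondary difficulty is that the conjecture asks for the relation at every $t$, whereas the argument is cleanest at $t = \Theta(T)$; one must iterate at geometric scales $t \in \{T/2, T/4, \ldots\}$ or convert to an instantaneous statement via a stopping-time argument. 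Finally, one must pin down exactly what ``minimizes the regret'' means — presumably minimax-optimality within the bounded-noise class — and verify that the constructed $\theta'$ remains in $\mathcal{E}$; for smooth $\mathcal{E}$ (e.g.\ the sphere) the projection perturbs $\delta$ only at higher order and is harmless.
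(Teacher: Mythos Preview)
The statement you are attempting to prove is listed in the paper as a \emph{conjecture} in the open problems section; the paper provides no proof of it, and indeed explicitly discusses why it remains open. So there is no ``paper's proof'' to compare against. More importantly, your proposal does not close the gap that makes this a conjecture rather than a theorem.

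The central obstruction is your assumption of a uniform quadratic KL estimate $\mathrm{KL}(P_{\mu}\,\|\,P_{\mu+\Delta}) \leq C_{\mathrm{noise}}\Delta^{2}$. This is precisely the hypothesis that fails for the noise model motivating the conjecture. The paper emphasises (see the introduction and Appendix~\ref{ap:failurelower}) that for vanishing noise the per-round KL scales like $\Delta^{2}/\sigma_{s}^{2}$ and $\sigma_{s}^{2}$ can be arbitrarily small, so no finite $C_{\mathrm{noise}}$ exists. Your claim that ``log-Lipschitz density'' covers bounded noise is incorrect here: a Gaussian with variance tending to zero, or a Bernoulli with mean tending to $0$ or $1$, are both bounded noise models for which the quadratic KL constant blows up. What you have written is essentially the argument of \cite{banerjee2023exploration} for constant noise, which the paper already cites as the known special case; the whole content of the conjecture is to remove that constant-noise hypothesis.

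There is a second, related error in your KL computation. You write $\mathrm{KL}\leq C_{\mathrm{noise}}\sum_{s} w_{s}\langle v,a_{s}\rangle^{2}\delta^{2}$ and identify this with $\delta^{2}\lambda_{\min}(V_{t})$. But the weights $w_{s}$ are an \emph{algorithmic} choice (they appear in $V_{t}$ because the learner decided to use a weighted estimator); they have no reason to appear in the KL divergence between the two environments' reward laws, which depends only on the true noise distributions. If you instead insert the correct per-round KL $\asymp \langle v,a_{s}\rangle^{2}\delta^{2}/\sigma_{s}^{2}$, you get $\delta^{2}\sum_{s}\sigma_{s}^{-2}\langle v,a_{s}\rangle^{2}$, and there is no a priori relation between this and the algorithm's $V_{t}$ unless the algorithm happened to choose $w_{s}=\sigma_{s}^{-2}$ exactly --- which it cannot, since $\sigma_{s}$ depends on the unknown $\theta$. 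Even granting $w_{s}\leq \sigma_{s}^{-2}$ (as in the paper's event $G_{t}$), the inequality goes the wrong way for an upper bound on the KL. Without control of this sum, you cannot set $\delta$ to make the two environments indistinguishable, and the Le Cam argument collapses. This is exactly why the conjecture is open.
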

    \item How can we extend the linear stochastic bandit techniques for minimax lower bound of the regret to the non-constant noise setting? Our model misses a matching minimax lower bound.
    \item Our strategy plays batches of $2(d-1)$ actions. This is done to simplify the calculations of our main Theorems. However, a technique that adapts at each time step could potentially reduce the dimensional dependence of the regret. %Also, which is the optimal dependence of the regret on $d$?
    %\item Can we find real applications to recommendation systems where the noise vanishes? Which others noise models with vanishing noise are interesting?
\end{itemize}

% Acknowledgments---Will not appear in anonymized version
\acks{The authors thank Aditya Gopalan for interesting discussions during the Information Theory and Data Science Workshop 2023 at NUS, Singapore. The authors also thank Junwen Yang and Vincent Tan for early discussion about this problem and Yonglong Li for discussions about confidence regions. JL thanks Roberto Rubboli for everyday discussions and Marco Fanizza and Andreas Winter for sharing different perspectives on the problem during JL's visit to GIQ in Barcelona in 2022.

MT thanks the Pauli Center who supported a long-term visit
to ETH Zürich during the initial phase of this project. This research is supported by the National Research Foundation, Singapore and A*STAR under its CQT Bridging Grant and the Quantum Engineering Programme grant NRF2021-QEP2-02-P05.}

\bibliography{biblio_purestatebandit}

\begin{thebibliography}{12}
\providecommand{\natexlab}[1]{#1}
\providecommand{\url}[1]{\texttt{#1}}
\expandafter\ifx\csname urlstyle\endcsname\relax
  \providecommand{\doi}[1]{doi: #1}\else
  \providecommand{\doi}{doi: \begingroup \urlstyle{rm}\Url}\fi

\bibitem[Abbasi-Yadkori et~al.(2011)Abbasi-Yadkori, P\'{a}l, and Szepesv\'{a}ri]{lin3}
Yasin Abbasi-Yadkori, D\'{a}vid P\'{a}l, and Csaba Szepesv\'{a}ri.
\newblock Improved algorithms for linear stochastic bandits.
\newblock In \emph{Advances in Neural Information Processing Systems}, volume~24. Curran Associates, Inc., 2011.

\bibitem[Abeille and Lazaric(2017)]{abeille2017linear}
Marc Abeille and Alessandro Lazaric.
\newblock {Linear Thompson Sampling Revisited}.
\newblock In \emph{Proceedings of the 20th International Conference on Artificial Intelligence and Statistics}, volume~54, pages 176--184. PMLR, 2017.

\bibitem[Abeille et~al.(2021)Abeille, Faury, and Calauz{\`e}nes]{abeille2021instance}
Marc Abeille, Louis Faury, and Cl{\'e}ment Calauz{\`e}nes.
\newblock Instance-wise minimax-optimal algorithms for logistic bandits.
\newblock In \emph{International Conference on Artificial Intelligence and Statistics}, pages 3691--3699. PMLR, 2021.

\bibitem[Banerjee et~al.(2023)Banerjee, Ghosh, Chowdhury, and Gopalan]{banerjee2023exploration}
Debangshu Banerjee, Avishek Ghosh, Sayak~Ray Chowdhury, and Aditya Gopalan.
\newblock Exploration in linear bandits with rich action sets and its implications for inference.
\newblock In \emph{International Conference on Artificial Intelligence and Statistics}, pages 8233--8262. PMLR, 2023.

\bibitem[Bhatia(1997)]{bhatia97}
Rajendra Bhatia.
\newblock \emph{Matrix Analysis}, volume 169.
\newblock Springer, 1997.

\bibitem[Dani et~al.(2008)Dani, Hayes, and Kakade]{lin1}
Varsha Dani, Thomas~P Hayes, and Sham~M Kakade.
\newblock Stochastic linear optimization under bandit feedback.
\newblock In \emph{Proceedings of the 21st Conference on Learning Theory}, volume~2, page~3, 2008.

\bibitem[Kirschner and Krause(2018)]{kirschner2018information}
Johannes Kirschner and Andreas Krause.
\newblock Information directed sampling and bandits with heteroscedastic noise.
\newblock In \emph{Conference On Learning Theory}, pages 358--384. PMLR, 2018.

\bibitem[Lattimore and Szepesv{\'a}ri(2020)]{lattimore_szepesvári_2020}
Tor Lattimore and Csaba Szepesv{\'a}ri.
\newblock \emph{Bandit Algorithms}.
\newblock Cambridge University Press, 2020.

\bibitem[Lumbreras et~al.(2022)Lumbreras, Haapasalo, and Tomamichel]{lumbreras22bandit}
Josep Lumbreras, Erkka Haapasalo, and Marco Tomamichel.
\newblock Multi-armed quantum bandits: {E}xploration versus exploitation when learning properties of quantum states.
\newblock \emph{{Quantum}}, 6:\penalty0 749, 2022.

\bibitem[Petrovitch(1901)]{MichelPetrovitch1901}
Michel Petrovitch.
\newblock Sur une mani{\`e}re d'{\'e}tendre le th{\'e}or{\`e}me de la moyenne aux {\'e}quations diff{\'e}rentielles du premier ordre.
\newblock \emph{Mathematische Annalen}, 54\penalty0 (3):\penalty0 417--436, 1901.

\bibitem[Rusmevichientong and Tsitsiklis(2010)]{rusmevichientong2010linearly}
Paat Rusmevichientong and John~N Tsitsiklis.
\newblock Linearly parameterized bandits.
\newblock \emph{Mathematics of Operations Research}, 35\penalty0 (2):\penalty0 395--411, 2010.

\bibitem[Shamir(2015)]{shamir2015complexity}
Ohad Shamir.
\newblock On the complexity of bandit linear optimization.
\newblock In \emph{Conference on Learning Theory}, pages 1523--1551. PMLR, 2015.

\end{thebibliography}

\newpage
\appendix

% \crefalias{section}{appendix} % uncomment if you are using cleveref

\section{Proofs of Section 3}\label{ap:sec3}

The proof of the confidence region for the regularized least squares estimator relies on the following result about supermatingales.

\begin{theorem}[Theorem 3.9 in~\cite{lattimore_szepesvári_2020}]\label{th:supermartingale}
    Let $(X_t)_{t=0}^\infty$ be a supermartingale with $X_t \geq 0$ almost surely for all $t$. Then for any $\epsilon > 0$,
    \begin{align}
        \mathrm{Pr} \left( \sup_{t} X_t \geq \epsilon \right) \leq \frac{\EX \left[ X_0\right]}{\epsilon}.
    \end{align}
\end{theorem}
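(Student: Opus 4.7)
The plan is to prove this maximal inequality for nonnegative supermartingales by the standard stopping-time argument: introduce the hitting time of the level $\varepsilon$, bound its probability via optional stopping, and then take a limit.

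First I would introduce the stopping time
\begin{align}
\tau := \inf\{ t \geq 0 : X_t \geq \varepsilon \},
\end{align}
with the convention $\inf \emptyset = \infty$. The key set identity, valid because we are in discrete time, is
\begin{align}
\{\sup_t X_t \geq \varepsilon\} = \{\tau < \infty\} = \bigcup_{n \geq 0} \{\tau \leq n\},
\end{align}
so it suffices to bound $\mathrm{Pr}(\tau \leq n)$ uniformly in $n$ and then take a monotone limit.

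Next I would apply the optional stopping theorem to the bounded stopping time $\tau \wedge n$. Since $(X_t)$ is a supermartingale, $(X_{\tau \wedge t})_{t \geq 0}$ is also a supermartingale (stopped supermartingale), so
\begin{align}
\EX[X_{\tau \wedge n}] \leq \EX[X_0].
\end{align}
Now I exploit nonnegativity: splitting by whether $\tau \leq n$ and discarding the nonnegative piece on $\{\tau > n\}$,
\begin{align}
\EX[X_{\tau \wedge n}] \geq \EX\!\left[X_{\tau \wedge n}\, \mathbf{1}_{\{\tau \leq n\}}\right] = \EX\!\left[X_\tau\, \mathbf{1}_{\{\tau \leq n\}}\right] \geq \varepsilon\, \mathrm{Pr}(\tau \leq n),
\end{align}
where the last inequality uses that on $\{\tau \leq n\}$ we have $X_\tau \geq \varepsilon$ by the very definition of $\tau$ (in discrete time the hitting time is attained). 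Combining the last two displays gives $\mathrm{Pr}(\tau \leq n) \leq \EX[X_0]/\varepsilon$ for every $n$.

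Finally I would take $n \to \infty$. By continuity of probability from below,
\begin{align}
\mathrm{Pr}\!\left(\sup_t X_t \geq \varepsilon\right) = \mathrm{Pr}(\tau < \infty) = \lim_{n \to \infty} \mathrm{Pr}(\tau \leq n) \leq \frac{\EX[X_0]}{\varepsilon},
\end{align}
which is the claim. The only subtlety I would be careful about is that the hitting-time identity $X_\tau \geq \varepsilon$ on $\{\tau < \infty\}$ relies on the discrete-time setting; for continuous-time supermartingales one would need path regularity (right-continuity), but for the discrete-time statement as written this is automatic, so I do not expect any real obstacle beyond cleanly invoking optional stopping for bounded stopping times.
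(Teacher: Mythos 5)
Your argument is the standard optional-stopping proof of the maximal (Ville's) inequality, and it is essentially the right one; note that the paper itself does not prove this statement at all — it is quoted verbatim as Theorem 3.9 of the cited book, whose proof is exactly this stopping-time argument, so there is no genuinely different route to compare against.

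One small point of rigor: the identity $\{\sup_t X_t \geq \varepsilon\} = \{\tau < \infty\}$ is not exact. The inclusion $\{\tau < \infty\} \subseteq \{\sup_t X_t \geq \varepsilon\}$ always holds, but the reverse can fail on the event where $X_t < \varepsilon$ for every $t$ while $\sup_t X_t = \varepsilon$ (the supremum over an infinite index set need not be attained, even in discrete time). Consequently your last display bounds $\mathrm{Pr}(\tau < \infty)$, which is a priori only a lower bound on the probability you want. The standard fix is cheap: for any $\varepsilon' < \varepsilon$ you have $\{\sup_t X_t \geq \varepsilon\} \subseteq \{\tau_{\varepsilon'} < \infty\}$ with $\tau_{\varepsilon'} := \inf\{t : X_t \geq \varepsilon'\}$, so your argument gives $\mathrm{Pr}(\sup_t X_t \geq \varepsilon) \leq \EX[X_0]/\varepsilon'$, and letting $\varepsilon' \uparrow \varepsilon$ yields the stated bound. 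With that one-line patch the proof is complete; everything else (the stopped supermartingale bound $\EX[X_{\tau \wedge n}] \leq \EX[X_0]$, discarding the nonnegative part on $\{\tau > n\}$, and monotone convergence in $n$) is correct as written.
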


\subsection{Proof of Lemma~\ref{lem:confidence_region_weighted}}

The proof below is an adaptation from the one presented in~\cite{lattimore_szepesvári_2020}[Chapter 20]  to our setting. We refer to the reference for detailed computations.

\begin{proof} 
To simplify notation we use $\hat{\sigma}_s^2 = \hat{\sigma}_s^2 (a_s)$. From now we will condition all our calculations on the events $G_t$ since we want to prove~\eqref{eq:prob_confidence}. 

First let $\widetilde{S}_t = \sum_{s=1}^t \frac{1}{\hat{\sigma}_s^2}\epsilon_s a_s$  and define the following process
\begin{align}
    \widetilde{M}_t (x) := \exp \left( \langle x , \widetilde{S}_t \rangle - \frac{1}{2} \| x \|^2_{\widetilde{V}_t} \right),
\end{align}
for all $x\in\mathbb{R}^d$ and $V_t = V_t (0)$. We want to check that $\widetilde{M}_t(x)$ is a supermartingale, i.e $\EX \left[ \widetilde{M}_t (x) | \mathcal{F}_{t-1} \right] \leq \widetilde{M}_{t-1} (x)$. From a direct calculation, we have
\begin{align}\label{eq:supermartingale_check}
    \EX \left[ \widetilde{M}_t (x) | \mathcal{F}_{t-1} \right] = \widetilde{M}_{t-1}(x) \EX \left[ \exp \left( \frac{\epsilon_t}{\hat{\sigma}_t} \left\langle x ,\frac{a_t}{\hat{\sigma}_t}\right\rangle - \frac{1}{2}\| x \|_{\frac{1}{\hat{\sigma}^2_t} a_t a^\top_t} \right) \big| \mathcal{F}_{t-1}\right].
\end{align}
Then, using that in the definition of $\hat{\sigma}^2_t$~\eqref{eq:variance_estimator} is defined only using the information up to time step $t-1$, the subgaussian property, and that the event $G_t$ holds we have
\begin{align}
    \EX \left[ \exp \left( \frac{\epsilon_t}{\hat{\sigma}_t} \left\langle x ,\frac{a_t}{\hat{\sigma}_t}\right\rangle \right) \big| \mathcal{F}_{t-1}\right] \leq \exp \left(  \frac{1}{2}\left\langle x, \frac{a_t}{\hat{\sigma}_t} \right\rangle^2 \right) = \exp \left( \frac{1}{2}\| x \|_{\frac{1}{\hat{\sigma}^2_t} a_t a^\top_t}\right).
\end{align}
Inserting the above expression into~\eqref{eq:supermartingale_check} we immediately get $\EX \left[ \widetilde{M}_t (x) | \mathcal{F}_{t-1} \right] \leq \widetilde{M}_{t-1} (x)$. Using Lemma 20.3 in~\cite{lattimore_szepesvári_2020}[Chapter 20] we have that
\begin{align}
    \Bar{M}_t := \int_{\mathbb{R}^d} \widetilde{M}_t (x) d h(x)
\end{align}
is a supermartingale where $h$ is a probability measure on $\mathbb{R}^d$. In particular we choose $h = \mathcal{N} ( 0 , H^{-1} )$ with $H = \lambda \mathbb{I}_{d\times d} \in \mathbb{R}^{d \times d}$ and we get
\begin{align}
\Bar{M}_t = \frac{1}{\sqrt{(2\pi)^d \det (H^{-1} )}} \int_{\mathbb{R}^d} \exp \left( \langle x , \widetilde{S}_t \rangle - \frac{1}{2} \| x\|^2_{V_t} - \frac{1}{2} \| x \|_{H} \right)dx .
\end{align}
The above quantity can be exactly computed as
\begin{align}\label{eq:expected_mt}
    \Bar{M}_t = \left( \frac{\det (V_0 )}{\det  ( V_t(\lambda) )}\right)^{\frac{1}{2}}\exp \left( \frac{1}{2} \| \widetilde{S}_t \|^2_{V^{-1}_t(\lambda )} \right),
\end{align}
where we used the solution for a Gaussian integral, we have completed the square inside the exponential term and we used that $V_t(\lambda ) = H + V_t$. Then we can use Theorem~\ref{th:supermartingale} in order to get
\begin{align}
    \mathrm{Pr} \left ( \sup_{t} \log \left( \Bar{M}_t\right) \geq \log \left( \frac{1}{\delta} \right) \right) = \mathrm{Pr} \left( \sup_t \Bar{M}_t \geq \frac{1}{\delta} \right) \leq \delta ,
\end{align}
where we used that by definition of $\widetilde{M}_{t-1}(x)$ we have $\Bar{M}_t \geq 0$ almost surely and $\Bar{M}_0 = 1$. Inserting~\eqref{eq:expected_mt} into the above equation we have
\begin{align}\label{eq:prob_wt}
    \mathrm{Pr}\left( \sup_{t} \| \widetilde{S}_t \|^2_{V^{-1}_t(\lambda )} \geq 2 \log \left( \frac{1}{\delta}\right) + \log \frac{\det  ( V_t(\lambda) )}{\det (V_0(\lambda))} \right) \leq \delta .
\end{align}
Finally using the expression for the weighted least squares estimator~\eqref{eq:estimator_weighted} we have
\begin{align}
    \| \tilde{\theta}^{\text{wls}}_t - \theta \|_{V_t (\lambda )} \leq \| \widetilde{S}_t \|_{V^{-1}_t(\lambda )} + \sqrt{\lambda},
\end{align}
where we used triangle inequality and $\|\theta \|_2^2=1$. And the result follows by combining the above expression with~\eqref{eq:prob_wt} and conditioning under the event $G_t$.
\end{proof}

\section{Proofs of Section 5}\label{ap:sec5}

We will need the following properties for positive semidefinite matrices $A,B\in \text{P}^d_+ $:
  
\begin{align}\label{eq:pertboundsAB}
       \lambda_{\min}( A+B) &\geq \lambda_{\min}( A ) + \lambda_{\min}( B ), \\
       \lambda_{\max}(A+B) &\leq \lambda_{\max}(A ) + \lambda_{\max}(B). \nonumber
\end{align}
And the following mini-max characterization of eigenvalues for Hermitian matrices.

\begin{corollary}[ Corollary III.1.2 in~\cite{bhatia97} ]\label{cor:minimax_eigenvalues}
    Let $A \in \mathbb{C}^{d\times d}$ be a Hermitian matrix, then
    \begin{align}
        \lambda_{k} (A ) = \max_{\substack{\mathcal{M} \subset \mathbb{C}^d \\ \dim ( \mathcal{M} ) = d-k+1}} \min_{\substack{x\in\mathcal{M} \\ \| x\|_2 = 1}} \langle x , A x \rangle = \min_{\substack{\mathcal{M} \subset \mathbb{C}^d \\ \dim ( \mathcal{M} ) = k}} \max_{\substack{x\in\mathcal{M} \\ \| x\|_2 = 1}} \langle x , A x \rangle.
    \end{align}
    In particular, if $A \geq B$ then $\lambda_k (A) \geq \lambda_k (B) $.
\end{corollary}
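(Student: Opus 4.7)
The plan is to condition on the good event $E_{\widetilde{T}} := \{\theta \in \mathcal{C}_t \text{ for every batch } t \le \widetilde{T}\}$ and to establish simultaneously that (i) $\Pr[E_{\widetilde{T}}] \ge 1-\delta$, and (ii) on this event the per-batch squared distance shrinks like $1/t$. Writing the regret as a sum over $\widetilde{T}=T/(2(d-1))$ batches of $2(d-1)$ actions, the target reduces to establishing the distance bound $\|\theta-a^\pm_{t,i}\|_2^2 = O(\beta_{T,\max}/\sqrt{\lambda_{\max}(V_{t-1})})$ uniformly in $i$ and in $t\ge 2$, together with the quadratic growth $\lambda_{\max}(V_t) = \Omega(t^2)$ needed to turn this into a convergent $\sum 1/t$.

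First I would establish the per-batch distance bound on $E_{t-1}$. The triangle inequality decomposition~\eqref{eq:theta_at_bound_sk} splits $\|\theta-a^\pm_{t,i}\|_2$ into three pieces: the distance from $\theta$ to the weighted estimator $\widetilde{\theta}^w_{t-1}$, the normalization correction $\|\widetilde{\theta}^w_{t-1}-\theta^w_{t-1}\|_2$, and the distance $\|\theta^w_{t-1}-a^\pm_{t,i}\|_2$ from the normalized estimator to the action defined in~\eqref{eq:general_update}. On $\{\theta\in\mathcal{C}_{t-1}\}$ the first term is bounded by $\sqrt{\beta_{t-1,\delta}/\lambda_{\min}(V_{t-1})}$ directly from the definition of the ellipsoid; the remaining two terms are bounded by a multiple of the same radius via planar geometry, exploiting that $\theta$ lies on $\mathbb{S}^d$ and the relevant confidence ball is small. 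After checking that the hypotheses of Theorem~\ref{th:main} on $\omega$ and $\lambda_0$ are met by the algorithm's choices, I substitute $\lambda_{\min}(V_{t-1})\ge\sqrt{\tfrac{2}{3(d-1)}\lambda_{\max}(V_{t-1})}$ to convert the lower bound on $\lambda_{\min}$ into one on $\lambda_{\max}$, obtaining the key estimate $\|\theta-a^\pm_{t,i}\|_2^2 \le 12\sqrt{d-1}\,\beta_{t-1,\delta}/\sqrt{\lambda_{\max}(V_{t-1})}$.

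Next I would resolve the circular dependence between the confidence-region event $E_t$ and the noise-estimator condition $G_t$ required to invoke Lemma~\ref{lem:confidence_region_weighted}. Since $\sigma_t^2(a^\pm_{t,i}) \le 1-\langle\theta,a^\pm_{t,i}\rangle^2 \le \|\theta-a^\pm_{t,i}\|_2^2$, the previous step shows that on $E_{t-1}$ the noise is upper bounded exactly by $1/\omega(V_{t-1}) = \hat{\sigma}^2_t(a^\pm_{t,i})$; i.e.\ the algorithm's choice in~\eqref{eq:omega} realizes precisely the defining condition of $G_t$ in~\eqref{eq:gt_event}. This gives the set inclusion $E_{t-1}\subseteq G_t$, and a stepwise induction combined with Lemma~\ref{lem:confidence_region_weighted} yields $\Pr[E_{\widetilde{T}}]\ge 1-\delta$. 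This interlocking is the main subtlety I expect to grapple with: the confidence radius is needed to certify validity of the noise estimator, which is in turn needed to certify the confidence radius itself, and the induction has to be set up so that the implication flows only in one direction at each step.

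The final analytic ingredient is the two-sided control $\lambda_{\max}(V_t) = \Theta(t^2)$ and the resulting uniform bound $\beta_{t,\delta}\le\beta_{T,\max}$. Taking traces of the update $V_t = V_{t-1}+\omega(V_{t-1})\sum_i P_{t,i}$, using that $\mathrm{Tr}(\sum_i P_{t,i})=2(d-1)$, substituting the choice $\omega(V_s) = \sqrt{\lambda_{\max}(V_s)}/(12\sqrt{d-1}\,\beta_{s,\delta})$, and applying the elementary bound $\mathrm{Tr}(V_s)\le d\lambda_{\max}(V_s)$, I arrive at the recursion~\eqref{eq:lambdmaxinequality_sk}. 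Linearly interpolating $\lambda_{\max}(V_t)$ into a continuous function $g_2$ as in~\eqref{eq:def_fx_sk} converts this into the differential inequality $G_2'(x)\ge c\sqrt{G_2(x)}$, whose closed-form solution yields $\lambda_{\max}(V_t) \ge c^2 t^2/4$. The analogous upper bound follows the same template and, combined with $\det(V_t)\le\lambda_{\max}(V_t)^d$ inside the expression for $\beta_{t,\delta}$, pins down $\beta_{t,\delta}\le\beta_{T,\max}$ with the exact polynomial given in the statement. Summing the per-batch bound over the $\widetilde{T}$ batches then reduces $\sum_{t=2}^{\widetilde{T}} 1/t$ to $O(\log\widetilde{T})$, which together with the $\beta_{T,\max}^2$ factor and the $2(d-1)$ actions per batch yields the stated $144 d^2 \beta_{T,\max}^2 \log(T/(2(d-1)))$ term; the additive $4(d-1)$ accounts for the first batch, where the trivial initialization $\hat{\sigma}^2_1 = 1$ is used.
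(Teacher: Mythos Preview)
Your proposal addresses the wrong statement. The corollary you were asked to prove is the classical Courant--Fischer minimax characterization of eigenvalues (Corollary~III.1.2 in Bhatia), which the paper cites from an external reference and does not prove; its content is purely linear-algebraic and has nothing to do with bandits, confidence regions, or the event $E_{\widetilde{T}}$. What you have written is instead a sketch of the proof of Theorem~\ref{th:regret_bound_d2}, the regret bound for \textsf{LinUCB-VN}. Every ingredient you list --- the triangle-inequality decomposition~\eqref{eq:theta_at_bound_sk}, the interplay between $E_t$ and $G_t$, the differential inequality for $\lambda_{\max}(V_t)$, the final harmonic sum --- belongs to that theorem, not to the minimax principle.

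If the task is really to supply a proof of the stated corollary, you need a self-contained linear algebra argument: diagonalize $A$ in an orthonormal eigenbasis, and for each $k$ exhibit the optimal subspaces (the span of the top or bottom eigenvectors) to attain the max-min and min-max, then use a dimension-counting argument (any $(d-k+1)$-dimensional subspace intersects the span of the bottom $k$ eigenvectors nontrivially) to show no other subspace can do better. The monotonicity $A\ge B \Rightarrow \lambda_k(A)\ge\lambda_k(B)$ then follows immediately, since $\langle x,Ax\rangle \ge \langle x,Bx\rangle$ for every unit $x$ and the extremization is over the same collection of subspaces. None of this appears in your proposal.
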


\subsection{Proof of Theorem~\ref{th:main}}

\begin{proof}
    We start giving an upper bound of the Euclidean norm of $\tilde{a}^\pm_{t+1,i} $~\eqref{eq:defP_a} with the following calculation
    \begin{align}
        \| \tilde{a}^\pm_{t+1,i} \|^2_2 &= 1 \pm \frac{2}{\sqrt{\lambda_{1,t}} }\langle c_t , v_{t,i} \rangle + \frac{1}{\lambda_{t,1}} \\
        & \leq 1 + \frac{2}{\sqrt{\lambda_{t,1}}} + \frac{1}{\lambda_{t,1}}  = \left( 1 + \frac{1}{\sqrt{\lambda_{t,1}}} \right)^2,
    \end{align}
    where we used that $c_t,v_{t,i}\in\mathbb{S}^d$. Thus using the definition of $P_{t,i}$~\eqref{eq:defP_a},
    \begin{align}
        P_{t,i} \geq \left( 1 + \frac{1}{\sqrt{\lambda_{t,1}}}  \right)^{-2} \big( \tilde{a}^+_{t+1,i}(\tilde{a}^+_{t+1,i})^\mathsf{T}  + \tilde{a}^-_{t+1,i} (\tilde{a}^-_{t+1,i})^\mathsf{T} \big) .
    \end{align}
    From the definition of $\tilde{a}^\pm_{t+1,i} $~\eqref{eq:defP_a} we have
    \begin{align}
    \tilde{a}^\pm_{t+1,i} (\tilde{a}^\pm_{t+1,i})^\mathsf{T} = c_t c_t^\mathsf{T} + \frac{1}{\lambda_{t,1}}v_{t,i} v^\mathsf{T}_{t,i} \pm \frac{1}{\sqrt{\lambda_{t,1}}} \left( c_t v_{t,i}^\mathsf{T} + v_{t,i}c_t^\mathsf{T} \right).
    \end{align}
    The above allow us to bound $P_{t,i}$ as 
    \begin{align}
        P_{t,i} &\geq 2 \left( 1 +\frac{1}{\sqrt{\lambda_{t,1}}} \right)^{-2}\left( c_t c^\mathsf{T}_t + \frac{1}{\lambda_{t,1}}v_{t,i} v^\mathsf{T}_{t,i} \right) \\
        &\geq \frac{2}{(1+\sqrt{\lambda_{t,1}})^2} v_{t,i}v^\mathsf{T}_{t,i},
    \end{align}
    where we used $c_t c^\mathsf{T}_t \geq 0$ and $\frac{1}{\lambda_{t,1}}\left( 1 +\frac{1}{\sqrt{\lambda_{t,1}}} \right)^{-2} = (1+\sqrt{\lambda_{t,i}})^{-2} $. Thus, from the abound bound and the definition of $V_t$~\eqref{eq:vt_lemma} we obtain the following matrix inequality
    \begin{align}\label{eq:vt+1_bound}
        V_{t+1} \geq V_t +  \frac{2 w(V_t)}{(1+\sqrt{\lambda_{t,1}})^2} \sum_{i=1}^{d-1} v_{t,i} v^\mathsf{T}_{t,i}.
    \end{align}

    We want to prove the result using induction. The case $t=0$ is immediately satisfied since $\lambda_{0,d} = \lambda_{0,1} \geq \frac{2}{3(d-1)}$.
    Now we will assume that 
    \begin{align}\label{eq:induction}
        \lambda_{t,1} \geq \sqrt{\frac{2}{3(d-1)}\lambda_{t,d}},
    \end{align}
    is satisfied and we want to prove the same inequality for $t+1$. We distinguish cases depending on the growth of the maximum eigenvalue of $V_t$.
    %\begin{center}
%\underline{\textbf{Case 1}: $\lambda_{t,d} \geq \lambda_{t,d-1} +  \frac{2 w(V_t)}{(1+\sqrt{\lambda_{t,1}})^2}$.}
%\end{center}
\subsection*{Case 1:  $\lambda_{t,d} \geq \lambda_{t,d-1} +  \frac{2 w(V_t)}{(1+\sqrt{\lambda_{t,1}})^2}$}

Using the hypothesis $\lambda_{t,d} \geq \lambda_{t,d-1} +  \frac{2 w(V_t)}{(1+\sqrt{\lambda_{t,1}})^2}$, the fact that $V_t$ diagonalizes in the $\lbrace v_{t,i} \rbrace_{i=1}^d$ basis and the ordering  $\lambda_{t,1}\leq ....\leq \lambda_{t,d-1}\leq \lambda_{t,d}$ we have
\begin{align}
    \lambda_i \left( V_t +  \frac{2 w(V_t)}{(1+\sqrt{\lambda_{t,1}})^2} \sum_{i=1}^{d-1} v_{t,i} v^\mathsf{T}_{t,i} \right) &= \lambda_{t,i} + \frac{2 w(V_t)}{(1+\sqrt{\lambda_{t,1}})^2} \quad \text{for} \quad i= 1,...,d-1 \\
    \lambda_d \left(V_t +  \frac{2 w(V_t)}{(1+\sqrt{\lambda_{t,1}})^2} \sum_{i=1}^{d-1} v_{t,i} v^\mathsf{T}_{t,i} \right) &= \lambda_{t,d} .
\end{align}
Then combining with the mini-max principle for eigenvalues~\ref{cor:minimax_eigenvalues} and using that both sides of~\eqref{eq:vt+1_bound} are real and symmetric, we arrive at
\begin{align}\label{eq:lambdaminbound}
    \lambda_{i,t+1} \geq \lambda_{i,t} + \frac{2 w(V_t)}{(1+\sqrt{\lambda_{t,1}})^2} \quad \text{for} \quad i=1,...,d-1 .
\end{align}
From the expression for $V_{t+1}$~\eqref{eq:vt_lemma}, we deduce that
\begin{align}
    \Tr (V_{t+1}) = \big( \lambda_{t+1,d} + \sum_{i=1}^{d-1} \lambda_{t+1,i} \big) = \lambda_{t,d}+\sum_{i=1}^{d-1} \lambda_{t,i}+2(d-1)w(V_t).
\end{align}
Also from~\eqref{eq:lambdaminbound}
\begin{align}
    \Tr (V_{t+1} ) \geq \lambda_{t+1,d} + \sum_{i=1}^{d-1} \lambda_{t,i} + \frac{2(d-1) w(V_t)}{(1+\sqrt{\lambda_{t,1}})^2} . 
\end{align}
Combining the above we can bound the maximum eigenvalue as
\begin{align}\label{eq:lambda_upper}
    \lambda_{t+1,d} &\leq \lambda_{t,d}+\sum_{i=1}^{d-1} \lambda_{t,i}+2(d-1)w(V_t) - \left( \sum_{i=1}^{d-1} \lambda_{t,i} + \frac{2(d-1) w(V_t)}{(1+\sqrt{\lambda_{t,1}})^2} \right) \\
    &= \lambda_{d,t} + 2(d-1)w(V_t ) \frac{\lambda_{t,1}+2\sqrt{\lambda_{t,1}}}{(1+\sqrt{\lambda_{t,1}})^2}.
\end{align}
Recall that we want to check
\begin{align}
    \lambda_{t+1,1} \geq \sqrt{\frac{2}{3(d-1)}\lambda_{t+1,d}}.
\end{align}
Using~\eqref{eq:lambdaminbound} and~\eqref{eq:lambda_upper} we can square the above and see that it suffices to check
\begin{align}
    \left(\lambda_{1,t} + \frac{2 w(V_t)}{(1+\sqrt{\lambda_{t,1}})^2}\right)^2 \geq \frac{2}{3(d-1)} \lambda_{d,t} + \frac{4w(V_t )(\lambda_{t,1}+2\sqrt{ \lambda_{t,1}})}{3(1+\sqrt{\lambda_{t,1}})^2}.
\end{align}
Multiplying out the terms, this is equivalent to the condition
\begin{align}
    \underbrace{\lambda^2_{1,t} - \frac{2}{3(d-1)} \lambda_{d,t}}_{(i)} + \underbrace{\frac{4 w^2(V_t)}{(1+\sqrt{\lambda_{t,1}})^4}}_{(ii)} + \underbrace{\frac{4w(V_t)}{(1+\sqrt{\lambda_{t,1}})^2}(\lambda_{t,1} - \frac{1}{3}(\lambda_{t,1}+2\sqrt{\lambda_{t,1}})}_{(iii)} \geq 0.
\end{align}
It remains to observe that $(i)$ is positive by induction at time step $t$ (cf.~\eqref{eq:induction}), $(ii)$ is always positive and $(iii)$ is positive for $\lambda_{t,1} \geq 1$ and this is true since $\lambda_{t,1} \geq \lambda_0 \geq 2$, $\lambda_{t,1} $ is non-decreasing in $t$ and $f(x) = x-\sqrt{x}$ is positive for $x\geq 1$.
%
%\begin{center}
%\underline{\textbf{Case 2.1}: $ \lambda_{t,1} +  \frac{2 w(V_t)}{(1+\sqrt{\lambda_{t,1}})^2}<\lambda_{t,d} < \lambda_{t,d-1}+ \frac{2 w(V_t)}{(1+\sqrt{\lambda_{t,1}})^2}$.}
%\end{center}

\subsection*{\textbf{Case 2.1}: $ \lambda_{t,1} +  \frac{2 w(V_t)}{(1+\sqrt{\lambda_{t,1}})^2}<\lambda_{t,d} < \lambda_{t,d-1}+ \frac{2 w(V_t)}{(1+\sqrt{\lambda_{t,1}})^2}$}

First we find $k\in\mathbb{N}$, $1<k\leq d-1$ such that
\begin{align}
  \lambda_{t,k-1} +  \frac{2 w(V_t)}{(1+\sqrt{\lambda_{t,1}})^2} \leq  \lambda_{t,d} \leq \lambda_{t,k} + \frac{2 w(V_t)}{(1+\sqrt{\lambda_{t,1}})^2}.
\end{align}
Then using that $\lbrace v_{t,i} \rbrace_{i=1}^d$ are the eigenvectors of $V_t$ we have
\begin{align}
   \lambda_i \left( V_t +  \frac{2 w(V_t)}{(1+\sqrt{\lambda_{t,1}})^2} \sum_{i=1}^{d-1} v_{t,i} v^\mathsf{T}_{t,i} \right) &= \lambda_{t,i} + \frac{2 w(V_t)}{(1+\sqrt{\lambda_{t,1}})^2} \quad \text{for} \quad i= 1,...,k-1 \\
    \lambda_k \left(V_t +  \frac{2 w(V_t)}{(1+\sqrt{\lambda_{t,1}})^2} \sum_{i=1}^{d-1} v_{t,i} v^\mathsf{T}_{t,i} \right) &= \lambda_{t,d} \\
    \lambda_i \left( V_t +  \frac{2 w(V_t)}{(1+\sqrt{\lambda_{t,1}})^2} \sum_{i=1}^{d-1} v_{t,i} v^\mathsf{T}_{t,i} \right) &= \lambda_{t,i-1} + \frac{2 w(V_t)}{(1+\sqrt{\lambda_{t,1}})^2} \quad \text{for} \quad i= k+1,...,d .
\end{align}
Again using the mini-max principle for eigenvalues~\ref{cor:minimax_eigenvalues} and that both sides of~\eqref{eq:vt+1_bound} are real and symmetric
\begin{align}
    \lambda_{t+1,i} &\geq 
        \lambda_{t,i} + \frac{2 w(V_t)}{(1+\sqrt{\lambda_{t,1}})^2} \quad \text{if} \quad i \in \lbrace 1,...,k-1 \rbrace, \\
      \lambda_{t+1,i} &\geq    \lambda_{t,d} \quad \text{if} \quad i= k, \\
      \lambda_{t+1,i} &\geq    \lambda_{t,i-1} + \frac{2 w(V_t)}{(1+\sqrt{\lambda_{t,1}})^2} \quad \text{if} \quad i \in \lbrace k+1,...,d \rbrace.
\end{align}
Thus, using the above inequalities we can bound the trace of $V_{t+1}$ as
\begin{align}
    \Tr (V_{t+1} ) &= \sum_{i=1}^{k-1} \lambda_{t+1,i} + \lambda_{t+1,k} +\sum_{i=k+1}^{d-1} \lambda_{t+1,i} + \lambda_{t+1,d}  \\
    &\geq \left(\sum_{i=1}^{k-1} \lambda_{t,i} \right) + \frac{2(k-1) w(V_t)}{(1+\sqrt{\lambda_{t,1}})^2}  +  \lambda_{t,d} + \left(\sum_{i=k}^{d-2} \lambda_{t,i} \right) + \frac{2(d-k-1) w(V_t)}{(1+\sqrt{\lambda_{t,1}})^2}  + \lambda_{t+1,d}  \\
    & =  \lambda_{t+1,d} + \lambda_{t,d} + \sum_{i=1}^{d-2} \lambda_{t,i} + \frac{2(d-2) w(V_t)}{(1+\sqrt{\lambda_{t,1}})^2} \\
    & \geq \lambda_{t+1,d}  + \sum_{i=1}^{d-1} \lambda_{t,i} + \frac{2(d-2) w(V_t)}{(1+\sqrt{\lambda_{t,1}})^2},
\end{align}
where in the last bound we used simply $\lambda_{t,d} \geq \lambda_{t,d-1}$.
From the expression of $V_{t+1}$~\eqref{eq:vt_lemma}
\begin{align}
    \Tr (V_{t+1})  = \lambda_{t,d}+\sum_{i=1}^{d-1} \lambda_{t,i}+2(d-1)w(V_t),
\end{align}
and combining with the previous bound we obtain
\begin{align}
    \lambda_{t+1,d} &\leq \lambda_{d,t} + 2(d-1)w(V_t ) -\frac{2(d-2)w(V_t)}{(1+\sqrt{\lambda_{t,1}})^2} \\
    & = \lambda_{d,t} + \frac{2w(V_t )}{(1+\sqrt{\lambda_{t,1}})^2}\left( \lambda_{t,1}(d-1) + 2\sqrt{\lambda_{t,1}}(d-1) + 1 \right) \\
    & \leq \lambda_{d,t} + \frac{2(d-1)w(V_t )}{(1+\sqrt{\lambda_{t,1}})^2}\left( \lambda_{t,1} + 2\sqrt{\lambda_{t,1}} + 1 \right).
\end{align}
Again to check
\begin{align}
    \lambda_{t+1,1} \geq \sqrt{\frac{2}{3(d-1)}\lambda_{t+1,d}},
\end{align}
 we can square both sides and using the above bounds on $\lambda_{t+1,1}$ and $\lambda_{t+1,d}$ it suffices to check
\begin{align}
    \left(\lambda_{1,t} + \frac{2 w(V_t)}{(1+\sqrt{\lambda_{t,1}})^2}\right)^2 \geq \frac{2}{3(d-1)} \lambda_{d,t} + \frac{4w(V_t )(\lambda_{t,1}+2\sqrt{ \lambda_{t,1}}+1)}{3(1+\sqrt{\lambda_{t,1}})^2}.
\end{align}
Multiplying out the terms, this is equivalent to the condition
\begin{align}
    \underbrace{\lambda^2_{1,t} - \frac{2}{3(d-1)} \lambda_{d,t}}_{(i)} + \underbrace{\frac{4 w^2(V_t)}{(1+\sqrt{\lambda_{t,1}})^4}}_{(ii)} + \underbrace{\frac{4w(V_t)}{(1+\sqrt{\lambda_{t,1}})^2}(\lambda_{t,1} - \frac{1}{3}(\lambda_{t,1}+2\sqrt{\lambda_{t,1}}+1)}_{(iii)} \geq 0.
\end{align}
Finally, we observe that $(i)$ is positive by induction at time step $t$, $(ii)$ is always positive and $(iii)$ is positive since $\lambda_{t,1} \geq 2$, $\lambda_{t,1} \geq 2$ is non-decreasing and $f(x) = 2x-2\sqrt{x}-1$ is positive for $x \geq 2$.

%\begin{center}
%\underline{\textbf{Case 2.2}:  $\lambda_{t,d} \leq \lambda_{t,1} +  \frac{2 w(V_t)}{(1+\sqrt{\lambda_{t,1}})^2}$.}
%\end{center}

\subsection*{\textbf{Case 2.2}:  $\lambda_{t,d} \leq \lambda_{t,1} +  \frac{2 w(V_t)}{(1+\sqrt{\lambda_{t,1}})^2}$}

From the statement of the theorem we have 
\begin{align}
    \lambda_{t,1} \geq \lambda_0 \geq \sqrt{\frac{2}{3(d-1)}}2dC+\frac{2}{3(d-1)}.
\end{align}
Multiplying both sides by $\lambda_{t,1}$ we have
\begin{align}
     \lambda^2_{t,1} &\geq  \sqrt{\frac{2}{3(d-1)}}2dC\lambda_{t,1}+\frac{2}{3(d-1)}\lambda_{t,1} \\
     & \geq \frac{2}{3(d-1)}\left(2dC\sqrt{\lambda_{t,d}} + \lambda_{t,1} \right) \\
     & \geq \frac{2}{3(d-1)}\left(2dw(V_t) + \lambda_{t,1} \right) \\
     & = \frac{2}{3(d-1)}\left(2(d-1)w(V_t) + 2 w(V_t )+ \lambda_{t,1} \right) \\
     & \geq \frac{2}{3(d-1)}\left(2(d-1)w(V_t) + \lambda_{t,d} \right) ,
\end{align}
where the second inequality follows from the induction hypothesis $\lambda_{t,1}\geq \sqrt{\frac{2}{3(d-1)}\lambda_{t,d}}$, the third inequality from $w(V_t ) \leq C\sqrt{\lambda_{t,d}}$ and the fourth inequality from the assumption $\lambda_{t,d} \leq \lambda_{t,1} +  \frac{2 w(V_t)}{(1+\sqrt{\lambda_{t,1}})^2} \leq \lambda_{t,1}+2w(V_t)$. Thus taking the square root in both sides we have
\begin{align}
    \lambda_{t,1} \geq \sqrt{\frac{2}{3(d-1)}\left( \lambda_{t,d} + 2(d-1) w(V_t ) \right)},
\end{align}
and the induction at time step $t+1$,  $\lambda_{t+1,1}\geq \sqrt{\frac{2}{3(d-1)}\lambda_{t+1,d}}$ follows from the bounds
\begin{align}
    \lambda_{t+1,1} &\geq \lambda_{t,1}, \\
    \lambda_{t+1,d} & \leq \lambda_{t,d} + 2 w(V_t ) (d-1) ,
\end{align}
where we used the inequalities~\eqref{eq:pertboundsAB} and the definition of $V_t$~\eqref{eq:vt_lemma}.
\end{proof}

\subsection{Alternative proof for special case $d=2$}\label{ap:sec3_case2}
We present an alternative proof of the previous result for the particular case of $d=2$ where the main difference is that we provide an exact calculation of the eigenvalues of the matrix $V_{t+1}$. This technique can slightly improve the constant on the lower bound $\lambda_{\min}(V_t) = \Omega (\sqrt{\lambda_{\max}(V_t))}$.
\begin{theorem}
    Let $\lbrace c_t \rbrace_{t=0}^\infty \subset \mathbb{S}^2$ be a sequence of normalized vectors and $\omega: \text{P}^2_+ \rightarrow \mathbb{R}_{\geq 0}$ a function such that 
    \begin{align}
        \omega (X) \leq C\sqrt{\lambda_{\max} (X)},
    \end{align}
     for a constant $C > 0$ and any $X\in \text{P}^2_+$. Let $\lambda_0 \geq 4\sqrt{2}C+2$, and define a sequence of matrices
     $\lbrace V_t \rbrace_{t=0}^\infty \subset \mathbb{R}^{2\times 2}$ as
       \begin{align}\label{eq:vt_d2}
         V_0 := \lambda_0\mathbb{I}_{2\times2}, \quad      V_{t+1} := V_t + \omega ( V_t ) \left(a^+_{t+1}  (a^+_{t+1})^\mathsf{T}  +  a^-_{t+1}  (a^-_{t+1})^\mathsf{T} \right), 
       \end{align}
       where $a^+_{t+1},a^-_{t+1}\in\mathbb{S}^2$ are defined as
     \begin{align}
        a^+_{t+1} := \frac{c_t + \frac{1}{\sqrt{\lambda_{\min}(V_t)}}v_{t,\min}}{\sqrt{1+\frac{2 \langle c_t, v_{t,\min} \rangle }{\sqrt{\lambda_{\min}(V_t)}} + \frac{1}{\lambda_{\min}(V_t)}}}, \quad  a^-_{t+1} := \frac{c_t - \frac{1}{\sqrt{\lambda_{\min}(V_t)}}v_{t,\min}}{\sqrt{1-\frac{2 \langle c_t, v_{t,\min} \rangle }{\sqrt{\lambda_{\min}(V_t)}} + \frac{1}{\lambda_{\min}(V_t)}}}, 
        \end{align}
    and $v_{\min,t}$ is the normalized eigenvector corresponding to the minimum eigenvalue . Then we have
    \begin{align}
        \lambda_{\min}(V_t) \geq \sqrt{2\lambda_{\max}(V_t)} \quad \text{for all}\quad t\geq 0.
    \end{align}
  
\end{theorem}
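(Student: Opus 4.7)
The plan is to prove the bound by induction on $t$. The base case $t=0$ is immediate since $V_0 = \lambda_0 \mathbb{I}_{2\times 2}$ has both eigenvalues equal to $\lambda_0$, and $\lambda_0^2 \geq 2\lambda_0$ follows from $\lambda_0 \geq 2$ (implied by $\lambda_0 \geq 4\sqrt{2}\,C + 2$). The key simplification available in dimension $2$ is that a symmetric matrix is fully determined by its trace and determinant, so one can compute the eigenvalues of $V_{t+1}$ exactly as functions of the single angular parameter describing $c_t$ in the eigenbasis of $V_t$, rather than rely on the matrix inequalities and minimax eigenvalue bounds used to prove Theorem~\ref{th:main}. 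This is precisely what should improve the constant in $\lambda_{\min}(V_t) = \Omega(\sqrt{\lambda_{\max}(V_t)})$ from $\sqrt{2/(3(d-1))}$ evaluated at $d=2$ to $\sqrt{2}$.

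For the inductive step, I would work in the orthonormal eigenbasis of $V_t$, so that $V_t = \operatorname{diag}(\mu, M)$ with $\mu = \lambda_{\min}(V_t)$ and $M = \lambda_{\max}(V_t)$, and decompose $c_t = \alpha e_1 + \beta e_2$ with $\alpha^2 + \beta^2 = 1$. A short expansion gives $\|\tilde a^\pm\|_2^2 = 1 \pm 2\alpha/\sqrt{\mu} + 1/\mu$, and the rank-two matrix $P := a^+(a^+)^{\mathsf T} + a^-(a^-)^{\mathsf T}$ satisfies $\operatorname{Tr}(P) = 2$ together with
\[
\det(P) \;=\; 1 - \langle a^+, a^-\rangle^2 \;=\; \frac{4\beta^2/\mu}{(1+1/\mu)^2 - 4\alpha^2/\mu}.
\]
Combined with the $2 \times 2$ identity $\det(V_t + \omega P) = \det(V_t) + \omega(\mu P_{22} + M P_{11}) + \omega^2 \det(P)$, this yields closed-form expressions for $\operatorname{Tr}(V_{t+1})$ and $\det(V_{t+1})$ as explicit functions of the four parameters $(\mu, M, \omega, \alpha)$, from which the quadratic formula reads off $\lambda_{\min}(V_{t+1})$ and $\lambda_{\max}(V_{t+1})$.

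Using $\lambda_{\min} + \lambda_{\max} = \operatorname{Tr}(V_{t+1})$, the target inequality $\lambda_{\min}(V_{t+1})^2 \geq 2\lambda_{\max}(V_{t+1})$ is equivalent to the quadratic condition
\[
\lambda_{\min}(V_{t+1})^2 + 2\lambda_{\min}(V_{t+1}) \;\geq\; 2\operatorname{Tr}(V_{t+1}).
\]
The strategy is then to minimize the left-hand side over $\alpha \in [-1,1]$ at fixed $(\mu, M, \omega)$ and close the induction using the hypothesis $\mu^2 \geq 2M$ and the growth bound $\omega(V_t) \leq C\sqrt{M}$; the floor $\lambda_0 \geq 4\sqrt{2}\,C + 2$ is engineered so that the extra trace contribution $2\omega$ stays dominated at every step.

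The main obstacle will be this worst-case analysis over $\alpha$, since for generic $\alpha$ the matrix $P$ has off-diagonal entries that rotate the eigenbasis of $V_{t+1}$ and the eigenvalue formulas become rational in $\alpha$ with a non-monotone denominator. A natural approach is to isolate the two clean extremes: at $\alpha = \pm 1$ one has $\beta = 0$, $P = 2\,v_{t,\min} v_{t,\min}^{\mathsf T}$ and the inductive step reduces to $(\mu + 2\omega)^2 \geq 2\max(\mu + 2\omega, M)$, which is immediate from $\mu^2 \geq 2M$ and $\mu \geq 2$; at $\alpha = 0$ the matrix $P$ is diagonal in the eigenbasis of $V_t$, so $\lambda_{\min}(V_{t+1}) = \mu + 2\omega/(\mu+1)$ and $\lambda_{\max}(V_{t+1}) = M + 2\omega\mu/(\mu+1)$, and a short algebraic check reduces the inductive step to $\mu^2 - 2M + 4\omega^2/(\mu+1)^2 \geq 0$, again immediate from the inductive hypothesis. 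To complete the proof, I would then show that the worst intermediate $\alpha$ is no worse than these extremes, either by direct differentiation of $\lambda_{\min}(V_{t+1})^2 - 2\lambda_{\max}(V_{t+1})$ in $\alpha^2$, or by splitting into the regimes $M \gg \mu$ and $M \approx \mu$ in parallel with the case analysis of Theorem~\ref{th:main}, with the stronger floor $\lambda_0 \geq 4\sqrt{2}\,C + 2$ absorbing the residual terms in the balanced regime.
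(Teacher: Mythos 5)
Your proposal follows essentially the same route as the paper's alternative proof for $d=2$: express everything in the orthonormal eigenbasis of $V_t$, parametrize the update by the scalar overlap $\alpha := \langle v_{t,\min}, c_t\rangle$, read off the eigenvalues of $V_{t+1}$ from the exact $2\times 2$ formula (trace plus or minus square root of a discriminant), locate the worst $\alpha$, and split into balanced versus unbalanced regimes for the eigenvalue gap. The ingredients match; what you have not done is execute the one step you yourself flag as the main obstacle, namely that $\alpha=0$ is the worst case. The paper handles this by noting that the quantity under the square root in the eigenvalue formula has, after simplification, the form $g(\alpha) = (a\alpha^2+b)/(c\alpha^2+d)$, and that $ad - bc < 0$ when $\phi_t := \lambda_{\max,t}-\lambda_{\min,t} \geq 2\omega(V_t)$, which makes $\alpha=0$ the unique maximizer of $g$ and hence the unique minimizer of $\lambda_{\min}(V_{t+1})$. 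Crucially, this monotonicity is only established under the hypothesis $\phi_t \geq 2\omega(V_t)$; in the balanced regime $\phi_t < 2\omega(V_t)$ the paper does not optimize over $\alpha$ at all but instead uses the trivial perturbation bounds $\lambda_{\min,t+1}\geq\lambda_{\min,t}$ and $\lambda_{\max,t+1}\leq\lambda_{\max,t}+2\omega(V_t)$ together with the floor $\lambda_0 \geq 4\sqrt 2 C+2$. So your picture of ``show the worst intermediate $\alpha$ is no worse than the two extremes $\alpha\in\{0,\pm 1\}$'' is not quite how the argument goes: only $\alpha=0$ matters, and only in one of the two regimes.

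One concrete slip worth flagging: at $\alpha = \pm 1$ you have $V_{t+1} = \mathrm{diag}(\mu+2\omega, M)$, and when $\mu + 2\omega > M$ (which is exactly the balanced regime that can occur) the minimum eigenvalue is $M$, not $\mu + 2\omega$, so the target becomes $M^2 \geq 2(\mu + 2\omega)$. Your stated reduction to $(\mu+2\omega)^2 \geq 2\max(\mu+2\omega, M)$ identifies the wrong eigenvalue as $\lambda_{\min}$ in that subcase and does not imply the required inequality. This is harmless to the overall plan (it is precisely the regime the floor on $\lambda_0$ is designed to absorb), but it should be corrected; the paper sidesteps the issue by not treating $\alpha=\pm 1$ as a separate checkpoint.

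In short: the strategy is sound and matches the paper, but the heart of the argument, showing $g(\alpha)$ is maximized at $\alpha=0$ when $\phi_t\geq 2\omega$, still needs to be carried out, and the $\alpha=\pm 1$ sanity check as stated has a sign-of-comparison error.
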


\begin{proof}
 To simplify the notation in the proof we define
 \begin{align}
     \lambda_{\min,t} := \lambda_{\min}(V_t) , \quad \lambda_{\max,t} : = \lambda_{\max}(V_t),
 \end{align}
 with corresponding normalized eigenvectors $v_{t,\min},v_{t,\max}\in\mathbb{S}^2$.

 We are going to prove the result by induction. At time step $t=0$ we have $\lambda_{\min,t} = \lambda_{\max,t} = \lambda_0$ and the inequality $\lambda_0 \geq \sqrt{2\lambda_0}$ holds since $\lambda_0 \geq 2$.
 Then at time step $t\geq 1$ we can express $V_t$ as
\begin{align}
    V_t = \begin{pmatrix}
        \lambda_{t,\min} & 0 \\
        0 & \lambda_{t,\max}
    \end{pmatrix},
\end{align}
using the basis $\lbrace v_{t,\min}, v_{t,\max}\rbrace$. Then we assume that the inequality $\lambda_{t,\min} \geq \sqrt{2\lambda_{\max,t}}$ is satisfied and we want to check the same inequality at time step $t+1$. We express the vectors that update $V_{t+1}$~\eqref{eq:vt_d2} on the same basis as
\begin{align}\label{eq:atminmaxbasis}
    a^+_{t+1} = \begin{pmatrix}
        \langle a^+_{t+1}, v_{t,\min} \rangle  \\
         \langle a^+_{t+1} ,  v_{t,\max} \rangle
    \end{pmatrix} \quad  a^-_{t+1} = \begin{pmatrix}
          \langle a^-_{t+1} ,  v_{t,\min} \rangle  \\
          \langle a^-_{t+1} ,  v_{t,\max} \rangle
    \end{pmatrix},
\end{align}
and define the following quantities,
    \begin{align}
        x_t &:= \langle a^+_{t+1}, v_{t,\min} \rangle^2 + \langle a^-_{t+1} , v_{t,\min} \rangle^2, \\
        y_t &:= \langle a^+_{t+1} , v_{t,\max} \rangle^2 + \langle a^-_{t+1} ,  v_{t,\max} \rangle^2 , \\
        z_t &:= \langle a^+_{t+1} , v_{t,\min} \rangle  \langle a^+_{t+1} , v_{t,\max} \rangle + \langle a^-_{t+1} , v_{t,\min} \rangle  \langle a^-_{t+1} , v_{t,\max} \rangle  . 
    \end{align}
Using that $a^+_{t+1},a^-_{t+1}\in\mathbb{S}^2$  we get the following relation
\begin{align}\label{eq_relationxy}
    y_t = 2 - x_t .
\end{align}
Thus the perturbation at time step $t+1$ can be written as
\begin{align}
a^+_{t+1}  (a^+_{t+1})^\mathsf{T}  +  a^-_{t+1}  (a^-_{t+1})^\mathsf{T}  = \begin{pmatrix}
         x_t & z_t \\
         z_t & y_t
     \end{pmatrix},
\end{align}     
and $V_{t+1}$ as
\begin{align}
    V_{t+1} = \begin{pmatrix}
        \lambda_{t,\min} + \omega ( V_t ) x_t & \omega ( V_t )z_t \\
        \omega ( V_t )z_t & \lambda_{t,\max} + \omega ( V_t )(2-x_t)
    \end{pmatrix}.
\end{align}
In order to analyze the eigenvalue of $V_{t+1}$ we have to control the overlap $\langle v_{t,\min} , c_t \rangle$, so we define the following variable
    \begin{align}
        \alpha_t : = \langle v_{t,\min} , c_t \rangle \in [-1,1],
    \end{align}
and using that
\begin{align}
    \langle v_{t,\min} , a^+_{t+1} \rangle = \frac{\alpha_t + \frac{1}{\sqrt{\lmin}}}{\sqrt{1 +\frac{2}{\sqrt{\lmin}}\alpha_t + \frac{1}{\lmin}}}, \quad \langle v_{t,\min} , a^-_{t+1} \rangle = \frac{\alpha_t - \frac{1}{\sqrt{\lmin}}}{\sqrt{1-\frac{2}{\sqrt{\lmin}}\alpha_t + \frac{1}{\lmin}}}
\end{align}
we can express $x_t,z_t$ in terms of $\alpha_t$ as
\begin{align}
    x_t(\alpha_t) &= \frac{\big( \alpha_t + \frac{1}{\sqrt{\lmin}}\big)^2}{1+\frac{2}{\sqrt{\lmin}}\alpha_t + \frac{1}{\lmin}} + \frac{\big( \alpha_t - \frac{1}{\sqrt{\lmin}}\big)^2}{1-\frac{2}{\sqrt{\lmin}}\alpha_t + \frac{1}{\lmin}}, \\
    z_t(\alpha_t) &= \left( \frac{\alpha_t + \frac{1}{\sqrt{\lmin}}}{1+\frac{2}{\sqrt{\lmin}}\alpha_t + \frac{1}{\lmin}} + \frac{ \alpha_t - \frac{1}{\sqrt{\lmin}}}{1-\frac{2}{\sqrt{\lmin}}\alpha_t + \frac{1}{\lmin}}\right)\sqrt{1-\alpha_t^2}.
\end{align}
Then we can directly compute the minimum eigenvalue of $V_{t+1}$ as
\begin{align}\label{eq:exact_minimumeig}
    \lambda_{\min,t+1}(\alpha_t) = \frac{\lambda_{\max,t}+\lmin}{2} + \omega ( V_t ) - \frac{1}{2}\sqrt{(\lambda_{\max,t} - \lambda_{\min,t}+2\omega ( V_t )(1-x_t(\alpha_t)) )^2 + 4\omega^2 ( V_t ) z_t^2(\alpha_t)}.
\end{align}
Now we define the difference of eigenvalues at time step $t$,
\begin{align}
    \phi_t := \lambda_{\max,t} - \lambda_{\min,t},
\end{align}
and analyze two different regimes for the induction to hold at time step $t+1$.

%\begin{center}
%\underline{\textbf{Case 1}: $\phi_t \geq 2\omega ( V_t )$.}
%\end{center}

\subsection*{\textbf{Case 1}: $\phi_t \geq 2\omega ( V_t )$}

For this case we want to justify that $\lambda_{\min,t+1}(\alpha_t) \geq \lambda_{\min,t+1}(0)$ for $\alpha_t\in[-1,1]$, and later prove the induction using this lower bound. Defining the following
\begin{align}
    f(\alpha_t) &:= (\phi_t +2\omega ( V_t )(1-x(\alpha_t)) )^2 + 4\omega^2 ( V_t )z^2(\alpha_t) \\
    & = \phi^2_t + 4\omega ( V_t )\big( \phi_t ( 1 - x(\alpha_t)) + \omega ( V_t )(1-x(\alpha_t))^2 + \omega ( V_t )z^2 ( \alpha_t ) \big),
\end{align}
and comparing with the exact minimum eigenvalue~\eqref{eq:exact_minimumeig} we see that it suffices to prove that $f(\alpha_t)$ achieves a maximum at $\alpha_t=0$ in the range $\alpha_t\in [-1,1]$ in order to have $\lambda_{\min,t+1}(\alpha_t) \geq \lambda_{\min,t+1}(0)$ for all $\alpha_t\in[-1,1]$. A direct computation shows that
\begin{align}
    f(\alpha_t ) = \phi_t^2 + 4\omega ( V_t )g(\alpha_t), \quad g(\alpha_t) :=\frac{2\big(\lmin-1\big)\lmin\phi_t \alpha_t^2 +(1-\lmin^2)\phi_t - (1-\lmin )^2\omega ( V_t )}{4\lmin \alpha_t^2 - \big( 1 + \lmin \big)^2 }.
\end{align}
We have that $g(\alpha_t)$ is of the form $g(x) = \frac{ax^2+b}{cx^2+d}$ which has an unique maximum at $x=0$ if $ad-bc < 0$. Then identifying the coefficients we have to check that
\begin{align}
  p:=  2(1-\lmin)\lmin (1+\lmin)^2 \phi_t - 4\lmin ( 1-\lmin^2)\phi_t + 4 \lmin (1-\lmin )^2 \omega ( V_t )< 0 .
\end{align}
Using that $\lambda_{\min,t}\geq 2$ and $\omega ( V_t ) \leq \frac{\phi_t}{2}$ we can bound the above as
\begin{align}
    p \leq \phi_t \left( 2(1-\lmin)\lmin (1+\lmin)^2  - 4\lmin ( 1-\lmin^2) + 2 \lmin (1-\lmin )^2 \right).
\end{align}
Finally summing all the terms we get,
\begin{align}
    p \leq -2\phi_t \left(\lambda_{\min,t}^2(1-\lambda_{\min,t})^2 \right) \leq 0 ,
\end{align}
where the inequality follows from $\phi_t \geq 0$. Thus, we conclude that 
\begin{align}\label{eq:minimumeig_inequality}
\lambda_{\min,t+1}(\alpha_t) \geq \lambda_{\min,t+1}(0).
\end{align}
Using that 
\begin{align}
x_t (0 ) = \frac{2}{1+\lambda_{\min,t}} , \quad z_t(0) = 0 ,
\end{align}
and $\phi_t \geq 2\omega (V_t ) $ we have
\begin{align}\label{eq:lambdamin0}
    \lambda_{\min,t+1}( 0 ) = \lambda_{\min,t} + \frac{2\omega ( V_t )}{1+\lambda_{\min,t}}.
\end{align}
Thus, using~\eqref{eq:minimumeig_inequality}\eqref{eq:lambdamin0} and 
\begin{align}
\Tr ( V_{t+1} ) = \lambda_{\min,t+1}(\alpha_t) + \lambda_{\max,t+1}(\alpha_t) = \lambda_{\max,t} + \lambda_{\min,t} + 2\omega ( V_t ),
\end{align}
we can upper bound the maximum eigenvalue as
\begin{align}\label{eq:maximumeig_bound}
    \lambda_{\max,t+1}(\alpha_t) &= \lambda_{\max,t} + \lambda_{\min ,t} - \lambda_{\min,t+1}(\alpha_t) + 2\omega ( V_t ) \\
    &\leq \lambda_{\max,t} + \lambda_{\min ,t} - \lambda_{\min,t+1}(0) + 2\omega ( V_t ) \\
     &= \lambda_{\max,t}+ 2\omega ( V_t )\frac{\lambda_{\min,t}}{1+\lambda_{\min,t}}.
\end{align}
Finally in order to check the induction step at $t+1$, $\lambda_{\min,t+1} \geq  \sqrt{2\lambda_{\max,t+1}}$ we can use the bounds~\eqref{eq:minimumeig_inequality}\eqref{eq:maximumeig_bound} and it suffices to check
\begin{align}
    \lambda_{\min,t} + \frac{2\omega ( V_t )}{1+\lambda_{\min,t}} \geq  \sqrt{2\left(\lambda_{\max,t} + 2\omega ( V_t )\frac{\lambda_{\min,t}}{1+\lambda_{\min,t}}\right)}.
\end{align}
Squaring both sides and rearranging we obtain
\begin{align}
    \underbrace{\lambda^2_{\min,t} - 2\lambda_{\max,t}}_{(i)} + \underbrace{\frac{4\omega^2 ( V_t )}{(1+\lambda_{\min,t})^2}}_{(ii)} \geq 0 ,
\end{align}
where $(i)$ is positive because we assume $\lambda_{\min,t} \geq \sqrt{2\lambda_{\max,t}}$ to hold at time step $t$ and  $(ii)$ is always positive. This concludes the case $\phi_t \geq 2 \omega ( V_t )$ induction.

%{\centering{\underline{\textbf{Case 2}: $\phi_t \leq 2\omega (V_t ) $}}\par} 

\subsection*{\textbf{Case 2}: $\phi_t \leq 2\omega (V_t ) $}

From the definition of $\lambda_0$ we have
\begin{align}
    \lambda_{\min,t} &\geq \lambda_{0} \geq 4\sqrt{2}C+2.
    \end{align}
Multiplying both sides by $\lambda_{\min,t}$,    
    \begin{align}
    \lambda_{\min,t}^2 &\geq 4\sqrt{2}C\lambda_{\min,t}+2\lambda_{\min,t} \\
    &\geq 2\left( 4C\sqrt{\lambda_{\max,t}} + \lambda_{\min,t} \right) \hspace{20mm} \\
    &\geq 2\left( \lambda_{\min,t} + 4\omega (V_t ) \right) \\
    &\geq 2 \left( \lambda_{\min,t} +\phi_t+ 2\omega (V_t ) \right) \\
    & = 2\left( \lambda_{\max,t} + 2\omega (V_t ) \right) ,
\end{align}
where in the second inequality we used the induction hypothesis $\lambda_{\min,t}\geq \sqrt{2\lambda_{\max,t}}$, the third inequality $\omega(V_t) \leq C\sqrt{\lambda_{\max,t}}$, the fourth inequality $\phi_t\leq 2\omega(V_t )$ and the last equality the definition $\phi_t = \lambda_{\max,t} - \lambda_{\min,t}$.
Thus, we have
\begin{align}
    \lambda_{\min,t} \geq \sqrt{2(\lambda_{\max,t}+2\omega (V_t ))},
\end{align}
and the inequality at time step $t+1$, $\lambda_{\min,t} \geq \sqrt{2\lambda_{\max,t+1}}$ follows from the bounds
\begin{align}
    \lambda_{\min,t+1} &\geq \lambda_{\min,t}, \\
    \lambda_{\max,t+1} &\leq \lambda_{\max,t} + 2\omega (V_t ),
\end{align}
where we used~\eqref{eq:pertboundsAB} with $\lambda_{\min}(a_ta_t^\mathsf{T}) = \lambda_{\min}(b_tb_t^\mathsf{T}) = 0 $ and $\lambda_{\max}(a_ta_t^\mathsf{T}) = \lambda_{\max}(b_tb_t^\mathsf{T}) = 1$.
\end{proof}

\section{Proofs of Section 6}\label{ap:sec6}
In our regret analysis, we need to control the distance between the regularized least squares estimator and the normalized version. The Lemma below is the technical result that we use.
\begin{lemma}\label{lem:dist_action_centre}
    Given two normalized vectors $c,v\in\mathbb{S}^d$, a positive constant $\lambda > 1$ and the following vectors
    \begin{align}
        \widetilde{a}^{\pm} = c \pm \frac{1}{\sqrt{\lambda}}v, \quad {a}^\pm = \frac{\widetilde{a}^\pm}{\| \widetilde{a}^\pm \|_2}.
    \end{align}
    Then we have 
    \begin{align}
        \|{a}^\pm - c\|_2^2 \leq \frac{2}{\lambda}.
    \end{align}
\end{lemma}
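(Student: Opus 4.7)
Since $a^\pm$ and $c$ both lie on $\mathbb{S}^d$, I would reduce the claim to a lower bound on the inner product via the identity
\begin{align}
\|a^\pm - c\|_2^2 = 2 - 2\langle a^\pm, c\rangle.
\end{align}
Setting $\mu := 1/\sqrt{\lambda} \in (0,1)$ and $\alpha := \langle v, c\rangle \in [-1,1]$, it suffices to prove $\langle a^+, c\rangle \geq 1 - \mu^2$; the bound for $a^-$ follows by replacing $v$ with $-v$, which only changes $\alpha$ to $-\alpha$ without affecting the range. A direct expansion gives $\|\widetilde{a}^+\|_2^2 = 1 + 2\mu\alpha + \mu^2$ and $\langle \widetilde{a}^+, c\rangle = 1 + \mu\alpha$, the latter being strictly positive because $|\alpha|\leq 1$ and $\mu<1$. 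Hence
\begin{align}
\langle a^+, c\rangle = \phi(\alpha) := \frac{1+\mu\alpha}{\sqrt{1+2\mu\alpha+\mu^2}}.
\end{align}

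The next step is a one-variable calculus exercise. Differentiating yields
\begin{align}
\phi'(\alpha) = \frac{\mu^2(\alpha + \mu)}{(1+2\mu\alpha+\mu^2)^{3/2}},
\end{align}
so the unique critical point of $\phi$ on $[-1,1]$ is $\alpha = -\mu$ (which lies in the interval since $\mu<1$), and it is a minimum because $\phi'$ changes from negative to positive there. At the endpoints one checks $\phi(\pm 1) = 1$, and at the interior minimizer
\begin{align}
\phi(-\mu) = \frac{1-\mu^2}{\sqrt{1-\mu^2}} = \sqrt{1-\mu^2}.
\end{align}

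Combining these, $\phi(\alpha) \geq \sqrt{1-\mu^2}$ uniformly in $\alpha$. Since $1-\mu^2 \in [0,1]$ the elementary inequality $\sqrt{y}\geq y$ gives $\sqrt{1-\mu^2} \geq 1-\mu^2$, so $\langle a^+, c\rangle \geq 1 - \mu^2$. Substituting back yields
\begin{align}
\|a^+ - c\|_2^2 \leq 2 - 2(1-\mu^2) = 2\mu^2 = \frac{2}{\lambda},
\end{align}
and the same bound for $a^-$ follows by the symmetry noted above. The only step that requires any care is verifying that the critical point $-\mu$ really does lie inside $[-1,1]$ and is a minimum (not a maximum), which is why the hypothesis $\lambda > 1$ appears; otherwise the proof is just a brief computation.
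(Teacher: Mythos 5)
Your proof is correct and follows essentially the same route as the paper's: reduce the distance to the inner product, treat the overlap as a function of $\alpha=\langle v,c\rangle$, locate its interior extremum by calculus, and then pass from $\sqrt{1-\mu^2}$ to $1-\mu^2$ via an elementary inequality (the paper's $1-x\le\sqrt{1-x}$ is the same as your $\sqrt{y}\ge y$ after $y=1-x$). The only differences are cosmetic: you work with $a^+$ and maximize the overlap $\phi$, while the paper works with $a^-$ and maximizes the distance $f=2-2\phi(-\alpha)$; your substitution $\mu=1/\sqrt\lambda$ just keeps the algebra tidier.
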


\begin{proof}
We are going to give the proof for ${a}^-$. The one for ${a}^+$ follows from an identical calculation.

First, we can relate the distance to the inner product of the vectors as
    \begin{align}\label{eq:acdistance}
        \|{a}^- - c\|_2^2 = \langle {a}^- - c , {a}^- - c \rangle = 2 - 2\langle {a}^-, c \rangle.
    \end{align}
    Using the normalization factor is
    \begin{align}
        \| \widetilde{a}^- \|_2^2 =  1 - \frac{2}{\sqrt{\lambda}}\langle c, v \rangle + \frac{1}{\lambda},
    \end{align}
    then
    \begin{align}
      \langle {a}^-, c \rangle =   \frac{1-\frac{1}{\sqrt{\lambda}}\alpha}{\sqrt{1+\frac{1}{\lambda}-\frac{2}{\sqrt{\lambda}}\alpha}}, \quad \text{where} \quad \alpha: = \langle c ,v \rangle \in [-1,1].
    \end{align}
    In order to study the behavior of~\eqref{eq:acdistance} in terms of the overlap $\langle c , v \rangle$ we define
    \begin{align}\label{eq:def_distance_lambda}
        f(\alpha,\lambda) := 2 \left( 1 - \frac{1-\frac{1}{\sqrt{\lambda}}\alpha}{\sqrt{1+\frac{1}{\lambda}-\frac{2}{\sqrt{\lambda}}\alpha}} \right) 
    \end{align}
    and we are going to check the maximum in the range $\alpha\in [-1,1]$. Note that $f(\alpha,\lambda) =  \|{a}^- - c\|_2^2$.
    Taking the derivative respect to $\alpha$,
    \begin{align}
        \frac{\partial f(\alpha,\lambda)}{\partial \alpha} = 2\frac{\frac{1}{\sqrt{\lambda}}\sqrt{1+\frac{1}{\lambda}-\frac{2}{\sqrt{\lambda}}\alpha} - \left( 1 - \frac{1}{\sqrt\lambda}\alpha \right)\frac{1}{\sqrt{\lambda}\sqrt{1+\frac{1}{\lambda}-\frac{2}{\sqrt{\lambda}}\alpha}}}{1+\frac{1}{\lambda}-\frac{2}{\sqrt{\lambda}}\alpha}
    \end{align}
    Then we can find the extremal points of $f(\alpha)$ as
    \begin{align}
        \frac{\partial f(\alpha,\lambda)}{\partial \alpha} &= 0 \Leftrightarrow 1+\frac{1}{\lambda}-\frac{2}{\sqrt{\lambda}}\alpha = 1 - \frac{1}{\sqrt\lambda}\alpha  \\
        &\Leftrightarrow \alpha = \frac{1}{\sqrt{\lambda}}
    \end{align}
    Using that $\lambda > 1$ we have the following inequalites
    \begin{align}
        \frac{\partial f(\alpha,\lambda)}{\partial \alpha} \bigg\vert_{\alpha=1} = -\frac{2}{\lambda\left(1-\frac{1}{\sqrt\lambda}\right)^2} < 0, \quad \frac{\partial f(\alpha,\lambda)}{\partial \alpha}\bigg\vert_{\alpha=0} = \frac{2}{\sqrt{\lambda}\sqrt{1+\frac{1}{\lambda}}}\left( 1 - \frac{1}{1+\frac{1}{\lambda}} \right) > 0.
    \end{align}
    Thus, $\alpha = \frac{1}{\sqrt{\lambda}}$~\eqref{eq:def_distance_lambda} is a maximum and it is achieved at
    \begin{align}
        f\left( \frac{1}{\sqrt{\lambda}} ,\lambda \right) = 2\left( 1- \sqrt{1-\frac{1}{\lambda}} \right).
    \end{align}
    Using the original definition of $f(\alpha,\lambda)$ we can conclude that 
    \begin{align}
         \|{a}^- - c\|_2^2 \leq 2\left( 1- \sqrt{1-\frac{1}{\lambda}} \right) \leq \frac{2}{\lambda} ,
    \end{align}
    where the last inequality follows from $\lambda > 1$, and $1-x \leq \sqrt{1-x}$ for $0\leq x < 1$.
\end{proof}

In our regret analysis, we will check the scaling of $\lambda_{\max} ( V_t )$ and to do it we will prove a differential inequality involving  $\lambda_{\max} ( V_t )$. The Lemma below will help us to solve that inequality.

\begin{theorem}[\cite{MichelPetrovitch1901}]\label{th:dif_equation_ineq}
If $u$ satisfies the differential inequality $\frac{\text{d}u(t)}{\text{d}t} \lesseqgtr f(u(t),t)$, and $y$ is the solution to the ordinary differential equation (ODE) $\frac{\text{d}y(t)}{\text{d}t}=f(y(t),t)$ under the boundary condition $u(t_0)=y(t_0)$, then:
\begin{align}
    \forall t > t_0 , u(t) \lesseqgtr y(t) .
\end{align}
\end{theorem}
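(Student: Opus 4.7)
The plan is to prove the ``$\leq$'' case, i.e., $\dfrac{du}{dt} \leq f(u(t),t) \Rightarrow u(t) \leq y(t)$ for $t > t_0$; the ``$\geq$'' case follows by the symmetric argument (or by replacing $(u,y,f)$ with $(-u,-y,-f(-\cdot,\cdot))$). Under mild regularity on $f$ (continuity in $t$ and local Lipschitz continuity in its first argument, so that the ODE is well-posed), the standard route is a perturbation-plus-contradiction argument.

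First, for each $\epsilon > 0$ I would introduce the perturbed ODE
\begin{equation}
\frac{dy_\epsilon(t)}{dt} = f(y_\epsilon(t),t) + \epsilon, \qquad y_\epsilon(t_0) = y(t_0),
\end{equation}
which, by standard existence/uniqueness theory, admits a solution on some interval $[t_0,T_\epsilon)$, and by continuous dependence on parameters satisfies $y_\epsilon(t) \to y(t)$ as $\epsilon \to 0^+$, uniformly on compact subintervals of the common interval of existence. The goal is to establish the strict inequality $u(t) < y_\epsilon(t)$ for every $t > t_0$ in this interval, and then send $\epsilon \to 0$.

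The key step is the strict comparison. Since $y_\epsilon'(t_0) = f(y(t_0),t_0) + \epsilon > f(u(t_0),t_0) \geq u'(t_0)$, we have $y_\epsilon(t) > u(t)$ on a right-neighborhood of $t_0$. Suppose for contradiction that the set $S := \{t > t_0 : y_\epsilon(t) \leq u(t)\}$ is nonempty, and set $t_1 := \inf S$. By continuity $u(t_1) = y_\epsilon(t_1)$, and the function $g(t) := y_\epsilon(t) - u(t)$ satisfies $g > 0$ on $(t_0,t_1)$ and $g(t_1)=0$, so $g'(t_1) \leq 0$, i.e., $u'(t_1) \geq y_\epsilon'(t_1)$. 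On the other hand, using the differential inequality at $t_1$ together with $u(t_1) = y_\epsilon(t_1)$,
\begin{equation}
u'(t_1) \leq f(u(t_1),t_1) = f(y_\epsilon(t_1),t_1) < f(y_\epsilon(t_1),t_1) + \epsilon = y_\epsilon'(t_1),
\end{equation}
which contradicts $u'(t_1) \geq y_\epsilon'(t_1)$. Hence $u(t) < y_\epsilon(t)$ for all $t \in (t_0,T_\epsilon)$, and letting $\epsilon \to 0^+$ and invoking continuous dependence gives $u(t) \leq y(t)$ for every $t > t_0$ in the interval of existence of $y$.

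The main technical obstacle is the ``crossing'' step, where one must rule out pathological situations in which $u$ and $y_\epsilon$ touch without $g'$ vanishing; this is precisely why the $\epsilon$-strict perturbation is introduced, since it provides the strict inequality $u'(t_1) < y_\epsilon'(t_1)$ that rules out any first contact. A secondary concern is ensuring that the interval of existence of $y_\epsilon$ is large enough to cover the range of $t$ of interest; this follows from continuous dependence on $\epsilon$ provided $f$ is locally Lipschitz in its first argument, an assumption implicit in the informal statement of the theorem and well-suited to the application in our setting, where $f(y,t)=C\sqrt{y}$ for a constant $C>0$ is locally Lipschitz away from $y=0$.
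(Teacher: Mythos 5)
The paper does not prove this statement at all: it is imported verbatim as a classical comparison theorem from Petrovitch (1901) and used as a black box to solve the differential inequalities for $G_1$ and $G_2$, so there is no in-paper argument to compare yours against. Your proof is the standard one for this result -- perturb the ODE by $\epsilon$, show by a first-crossing contradiction that $u < y_\epsilon$ strictly for $t > t_0$, then let $\epsilon \to 0^+$ using continuous dependence -- and the argument is correct as written: the initial strict separation from $y_\epsilon'(t_0) > u'(t_0)$, the identification $g(t_1)=0$ with $g'(t_1)\le 0$ at the first contact, and the contradiction $u'(t_1) \le f(u(t_1),t_1) < y_\epsilon'(t_1)$ are all sound, and the symmetric treatment of the ``$\geq$'' case is fine.

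One caveat worth keeping in mind, which you partially flag yourself: your proof (and indeed any correct proof, since ``the solution'' must be well defined) needs regularity of $f$ guaranteeing uniqueness and continuous dependence, e.g.\ local Lipschitz continuity in the first argument, and these hypotheses are absent from the bare statement. In the paper's actual application the relevant right-hand side is $f(y,t) = c\sqrt{y}$ with initial condition $G(0)=0$, which is precisely the point where Lipschitz continuity and uniqueness fail (both $y\equiv 0$ and the parabola solve the ODE), so the lower-bound direction there requires a little extra care beyond the theorem as stated (for instance, arguing on $[\eta,\widetilde{T}]$ with $\eta>0$ where $G_2>0$, or comparing against the maximal solution). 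This does not affect the correctness of your proof of the theorem itself, only the precision with which the cited statement transfers to the paper's use of it.
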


\subsection{Proof of Theorem~\ref{th:regret_bound_d2}}

\begin{proof}
To simplify notation we use
\begin{align}
    \lambda_{\min,t} := \lambda_{\min}( V_t ), \quad  \lambda_{\max,t} := \lambda_{\max}( V_t )
\end{align}
First, we fix
\begin{align}\label{eq:lambda_def}
    \lambda_0 = \max\left\lbrace2 , \sqrt{\frac{2}{3(d-1)}}\frac{d}{6\sqrt{d-1}} + \frac{2}{3(d-1)} \right\rbrace,
\end{align}
and later we will justify this choice.
Note that the regret can be written as
\begin{align}
    \text{Regret}(T) =   \frac{1}{2}\sum_{t= 1}^{\widetilde{T}} \sum_{i = 1}^{d-1} \left( \| \theta - a^+_{t,i} \|_2^2 + \| \theta - a^{-}_{t,i} \|_2^2  \right) ,
\end{align}
so to upper bound the regret, we need to quantify the distance $\| \theta - a^+_{t,i} \|_2^2$ between the unknown parameter and the actions we select at each batch $t$. The history up to batch $t\leq \widetilde{T}$ is defined as
\begin{align}
    \mathcal{H}_{t} :=  \big( r^+_{s,1}, a^+_{s,1},r^-_{s,1}, a^-_{s,1},...,r^+_{s,d-1}, a^+_{s,d-1},r^-_{s,d-1}, a^-_{s,d-1}\big)_{s=1}^{t} .
\end{align}
Lemma~\ref{lem:confidence_region_weighted} gives us this distance with a certain probability under the assumption that the event
    \begin{align}\label{eq:event_Gt}
     G_t := & \lbrace \big( \mathcal{H}_{t-1}, a^\pm_{s,i} \big)  :  
     \sigma^2_s(a^\pm_{s,i} \big)) \leq \hat{\sigma}^2_s(\mathcal{H}_{s-1},a^\pm_{s,i} )\ \forall {s \in [t]} \rbrace,
\end{align}
holds. 
Using the definition of the subgaussian parameter for the noise $\epsilon_t$~\eqref{eq:vanishing_noise} we can upper bound it for our choice of actions as
\begin{align}\label{eq:noise_bound}
    \sigma^2_{t}(a^\pm_{t,i}) &\leq 1-\langle a^\pm_{t,i}, \theta  \rangle^2 = 1-(1-\frac{1}{2} \| \theta - a^\pm_{t,i} \|_2^2)^2 \nonumber \\
    &=  \| \theta - a^\pm_{t,i} \|_2^2 - \frac{1}{4} \| \theta - a^\pm_{t,i} \|_2^4 \leq \| \theta - a^\pm_{t,i} \|_2^2. 
\end{align}
Thus, we need to use an estimator of the form~\eqref{eq:variance_estimator} that upper bounds the distance $\| \theta - a^\pm_{t,i} \|_2^2$. This quantity depends on the unknown parameter $\theta$, thus we can not guarantee that $G_t$ holds with probability one at each time step $t$. The other event that we will need to hold to quantify the distance is the following 
\begin{align}\label{eq:event_Et}
    E_t := \lbrace \mathcal{H}_{t} : \forall s \in \left[ t\right], \theta \in \mathcal{C}_{s} \rbrace ,
\end{align}
which is guaranteed to hold with probability at least $1-\delta$ if $G_t$ holds by Lemma~\ref{lem:confidence_region_weighted}. We leave the probability computation that both events hold for the end of the proof.

\begin{figure}\label{fig:regret_distances}
\centering
\begin{overpic}[percent,width=0.6\textwidth]{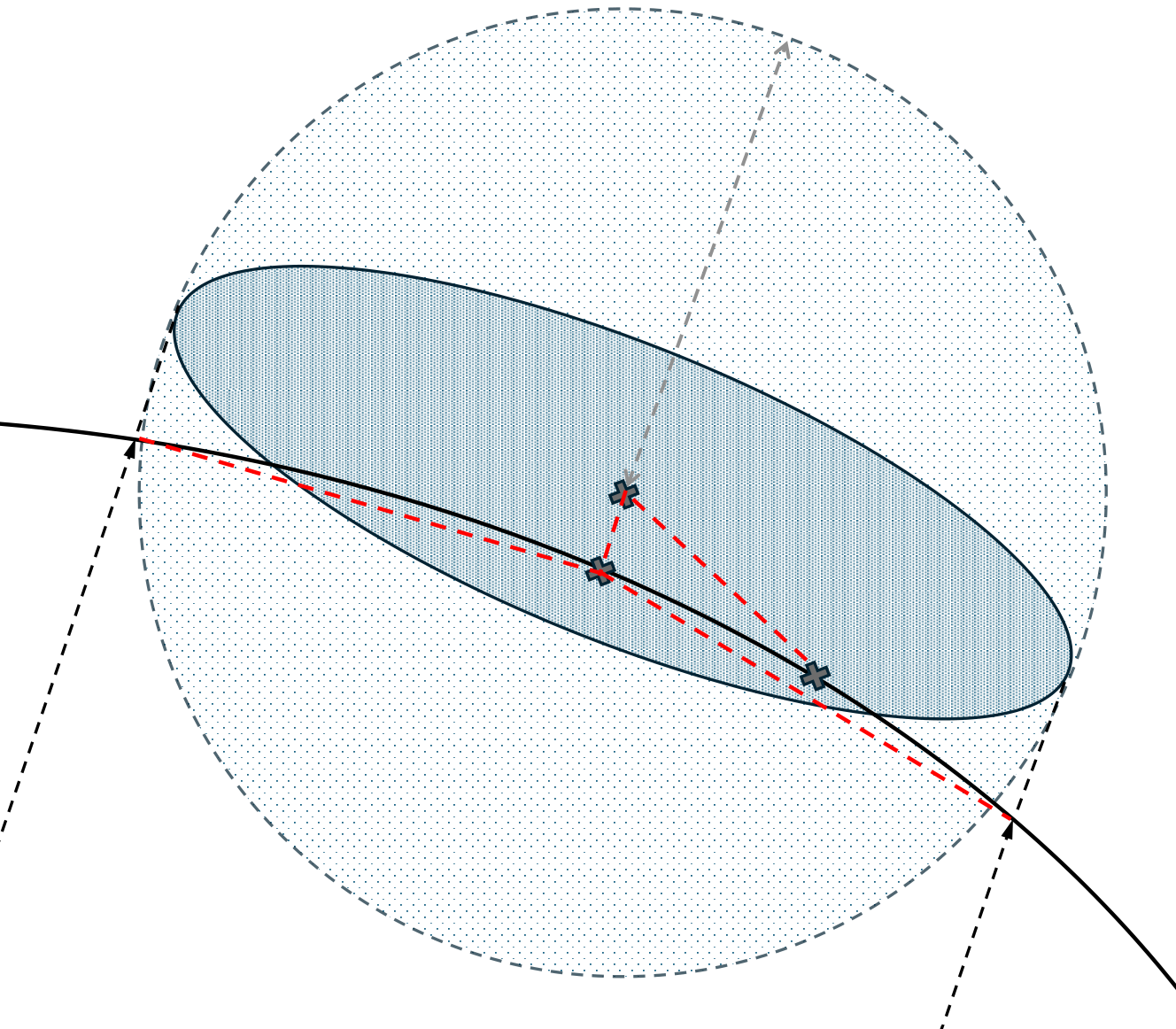}
\put(7,34){ $a^+_t$}
\put(78,12){ $a^-_t$}
\put(67,25){ $\theta$}
\put(54,46){ $\widetilde{\theta}^{\text{w}}_{t-1}$}
\put(48,34){ ${\theta}^{\text{w}}_{t-1}$}
\put(52,63){\rotatebox{70}{\small$r= \sqrt{\frac{\beta_{t-1,\delta}}{\lambda_{\min,t-1}}}$}}
\put(67,65){ $\mathbb{B}_r^2 (\widetilde{\theta}^{\text{w}}_{t-1}) $}
\put(35,55){ $\mathcal{C}_{t-1}$}
\end{overpic}
\caption{Sketch for the triangle inequality used to bound $ \| \theta - a^\pm_{t,i} \|_2 $ in~\eqref{eq:theta_at_bound}. The red lines represent the distances $(i),(ii)$ and $(iii)$. Under the event $\theta\in\mathcal{C}_{t-1}$ we can use $\mathcal{C}_{t-1} \subseteq \mathbb{B}^d_r (\tilde{\theta}^w_{t-1} ) $ with $r$ being the longest axis of the ellipsoid and bound all distances by the diameter $2r$.}
\label{fig:scheme_regret}
\end{figure}

\smallskip

%{\centering\underline{Choosing $w(V_T)$ and instantaonus regret bound under $E_t$}\par}

\subsection*{Choosing $w(V_T)$ and instantaneous regret bound under $E_t$}

For now, we will assume that $E_t$ always holds to justify our choice of weights through a bound on $\| \theta - a^\pm_{t,i} \|_2$. This also will allow us to bound the instantaneous regret.  From triangle inequality we have
\begin{align}\label{eq:theta_at_bound}
    \| \theta - a^\pm_{t,i} \|_2 &\leq \| \theta - \tilde{\theta}^{\text{w}}_{t-1} \|_2 + \| \tilde{\theta}^{\text{w}}_{t-1}  - \theta^{\text{w}}_{t-1} \|_2 + \| {\theta}^{\text{w}}_{t-1} -a^\pm_{t,i} \|_2 \nonumber \\
    &\leq \underbrace{\sqrt{\frac{\beta_{t-1,\delta}}{\lambda_{\min,{t-1}}}}}_{(i)} + \underbrace{\sqrt{\frac{\beta_{t-1,\delta}}{\lambda_{\min,{t-1}}}}}_{(ii)} + \underbrace{\sqrt{\frac{2}{\lambda_{\min,{t-1}}}}}_{(iii)} \nonumber
    \\ &\leq 3\sqrt{\frac{\beta_{t-1,\delta}}{\lambda_{\min,{t-1}}}},
\end{align}
where for the above bounds we use:
\begin{itemize}
    \item $(i).$ Since $\theta\in\mathcal{C}_{t-1}$, by definition of $\mathcal{C}_{t-1}$~\eqref{eq:confidence_region} we have $\| \theta -  \tilde{\theta}^{\text{w}}_{t-1} \|_{V_{t-1}} \leq \sqrt{\beta_{t-1,\delta}}$ and then the inequality follows from $\lambda_{\min,t-1}\mathbb{I}_{d\times d}\leq V_{t-1}$.
    \item  $(ii).$ Since $\theta\in\mathcal{C}_{t-1}$ and $\theta\in\mathbb{S}^d$ this implies that $\mathbb{B}^d_r (\tilde{\theta}^w_{t-1} )\cap \mathbb{S}^d$ is non-empty with $r = \sqrt{\frac{\beta_{t-1,\delta}}{\lambda_{\min,t-1}}}$ (the longest axis of the ellipsoid). Then using that $\mathcal{C}_{t-1}\subseteq \mathbb{B}^d_r (\tilde{\theta}^w_{t-1} )$ and ${\theta}^w_{t-1} $ is the normalized vector of $\tilde{\theta}^w_{t-1}$ which is the center of $\mathcal{C}_{t-1}$  we get ${\theta}^w_{t-1} \in\mathbb{B}^d_r (\tilde{\theta}^w_{t-1} )$.
    \item For $(iii).$ We apply Lemma~\ref{lem:dist_action_centre} using the expression of $a^\pm_t$~\eqref{eq:general_update} with $c= {\theta}^{\text{w}}_{t-1}$ and $\lambda= \lambda_{\min,t-1}$.
\end{itemize}
The last inequality follows from $\beta_{t} \geq 2$. For $t=0$,
\begin{align}
    \beta_{0,\delta} = \left(\sqrt{\lambda_0} + \sqrt{2\log\left( \frac{1}{\delta}\right)} \right).
\end{align}
Note that at batch $t$ we use that $E_{t-1}$ holds instead of $E_t$ since we want to define $\hat{\sigma}_t$ such that only depends on past information up to time step $t-1$. Thus, a choice that will guarantee the event $G_t$ to hold under the assumption that $E_{t-1}$ holds is $\hat{\sigma}^2_t  := \frac{9\beta_{t-1,\delta}}{\lambda_{\min,t-1}}$. However our particular update of $V_t$ with the choices of actions $a^+_{t,i},a^-_{t,i}$ guarantees that 
\begin{align}\label{eq:minmaxsqrtrelation}
    \lambda_{\min,t} \geq \sqrt{\frac{2}{3(d-1)}\lambda_{\max,t}},
\end{align}
if we use Theorem~\ref{th:main} (later we will check that we are under the right assumptions to use it). It is important to note that the above bound is independent of the events $G_t,E_t$ and it is a consequence only of our particular choice of actions~\eqref{eq:general_update}. Combining~\eqref{eq:minmaxsqrtrelation} and~\eqref{eq:theta_at_bound} 
\begin{align}
     \| \theta - a^\pm_{t,i} \|_2^2 \leq 9\frac{\beta_{t-1,\delta}}{\lambda_{\min , t-1}} \leq  \frac{9\sqrt{3}\sqrt{d-1}\beta_{t-1,\delta}}{\sqrt{2}\sqrt{\lambda_{\min,t-1}}}.
\end{align}
Thus, using that  $\frac{9\sqrt{3}}{\sqrt 2} \leq 12$ we see that with our definitions of estimator~\eqref{eq:estimator_weighted} and weights 
\begin{align}\label{eq:estimator_choice}
    \hat{\sigma}^2_t (a^\pm_{t,i} ) = \frac{12\sqrt{d-1}\beta_{t-1,\delta}}{\sqrt{\lambda_{\max,t-1}}}, \quad \omega (V_{t-1} ) = \frac{1}{\hat{\sigma}^2_t },
\end{align}
are well defined since only depends on the history $\mathcal{H}_{t-1}$, and 
\begin{align}\label{eq:upperbound_subgaussian}
    \sigma^2_t ( a^\pm_{t,i}) \leq \hat{\sigma}^2_t (a^\pm_{t,i} ) \quad \text{if} \quad \theta\in\mathcal{C}_{t-1}.
\end{align}

To bound the regret our technique uses upper and lower bounds on the scaling of $\Tr (V_t )$ as a function of the number of rounds. In the standard \textsf{LinUCB} technique if the actions are bounded ($\| a_t \|_2 = \Theta (1 )$) and this immediately gives the linear scaling $\Tr (V_t) = \Theta (t)$. Since we are updating $V_t$ using $\omega (V_t )$ we need to do some extra work, and we will see that $\Tr (V_t ) = \tilde{\Theta} (t^2 )$.

%{\centering\underline{Upper bound for $\Tr (V_t )$}\par}

\subsection*{Upper bound for $\Tr (V_t )$}

The reason why we need an upper bound for $\Tr (V_t )$ is because we will need an upper bound for $\beta_{t,\delta}$ that depends on $\Tr (V_t )$. A direct calculation shows
\begin{align}
    \Tr (V_t ) &= \lambda_0 + \sum_{s=1}^t 2(d-1)\omega (V_{s-1} ) \\
    &= \lambda_0 + \sum_{s=1}^t \frac{2(d-1)}{12\sqrt{d-1}\beta_{s-1,\delta}}\sqrt{\lambda_{\max , s-1}} \\
    &\leq \lambda_0 + \frac{\sqrt{d-1}}{6\beta_{0.\delta}}\sum_{s=0}^{t-1} \sqrt{\lambda_{\max , s}},
\end{align}
where we used $\beta_{0,\delta} \leq \beta_{t,\delta}$. Then using  $\lambda_{\max , t} \leq \Tr (V_t)$ and $\beta_{0,\delta}\geq 1$ we have
\begin{align}\label{eq:lmaxinequality}
    \lambda_{\max,t} \leq \lambda_0+ \frac{\sqrt{d}}{6}\sum_{s=0}^{t-1} \sqrt{\lambda_{\max , s}}.
\end{align}
Now we want to extend $\lambda_{\max,t}$ to a monotonic function on the interval $[0, \widetilde{T} ]$. In order to do that we define the linear interpolation $g_1:[0, \widetilde{T} ] \rightarrow \mathbb{R}_{\geq 0}$ as
\begin{align}\label{eq:def_gx}
    g_1 (x) := (t+1-x)\lambda_{\max,t} + (x-t)\lambda_{\max,t+1} \quad \text{for} \quad x\in[t,t+1) \quad \text{and} \quad t\in\lbrace 0 ,..., \widetilde{T} \rbrace.
\end{align}
Then using~\eqref{eq:lmaxinequality} and $g_1(t) = \lambda_{\max,t}$ we have
\begin{align}
    g_1(t) \leq \lambda_0 + \frac{\sqrt{d}}{6}\sum_{s=0}^{t-1} \sqrt{g_1(s)} \leq \lambda_0 + \frac{\sqrt{d}}{6}\int_{0}^t \sqrt{g_1 (x)}\dd x \quad \text{for} \quad t\in \lbrace 0 ,..., \widetilde{T}  \rbrace.
\end{align}
Now we want to prove the above inequality but for $g_1 (x)$ with $x\in[0,\widetilde{T} ]$. From the above inequality and the definition of $g_1 (x)$~\eqref{eq:def_gx} we have that for $x\in[t,t+1)$ 
\begin{align}
    g_1 (x) &\leq \lambda_0 + \frac{\sqrt{d}}{6}\sum_{s=0}^{t-1} \sqrt{g_1(s)} + \frac{\sqrt{d}}{6}(x-t)\sqrt{g_1 (t)} \\
    &\leq \lambda_0 + \frac{\sqrt{d}}{6}\int_{0}^t \sqrt{g_1 (x)}\dd x + \frac{\sqrt{d}}{6}(x-t)\sqrt{g_1 (t)}.
\end{align}
Then we can use the above relation and 
\begin{align}
    \int_t^x \sqrt{g_1(s)}\dd s &= \int_0^{x-t} \sqrt{g_1(s+t)}\dd s = \int_0^{x-t} \sqrt{(1-s)g_1(t) + sg_1(t+1)} \dd s \\
    &\geq \int_{0}^{x-t}\sqrt{g_1 (t)} \geq (x-t)\sqrt{g_1(t)},
\end{align}
where the second equality follows from the definition of $g_1(x)$~\eqref{eq:def_gx} and the first inequality $g_1(t) =\lambda_{\max,t}\leq\lambda_{\max,t+1} = g_1(t+1)$. We conclude that
\begin{align}
    g_1 (x) \leq \lambda_0 +  \frac{\sqrt{d}}{6}\int_{0}^{x} \sqrt{g_1(s)}\dd s .
\end{align}
Using the fundamental theorem of calculus
\begin{align}\label{eq:gxupperbound}
    g_1 (x) \leq \lambda_0 + \frac{\sqrt{d}}{6}(G_1 (x)-G_1(0))\quad\text{where} \quad \frac{\dd G_1}{\dd x} = \sqrt{g_1 (x)}.
\end{align}
Fixing $ G(0) = 0$ and rearranging 
\begin{align}
    \frac{\dd G_1}{\dd x} \leq \sqrt{\lambda_0 +\frac{\sqrt{d}}{6} G_1(x) }.
\end{align}
Now solving for the equality with $G(0) = 0 $ and using Theorem~\ref{th:dif_equation_ineq} we arrive at
\begin{align}
    G_1(x) \leq  \frac{\sqrt{d}}{24}x^2 + \sqrt{\lambda_0}x.
\end{align}
Finally inserting the above into~\eqref{eq:gxupperbound} using that $g_1(t) = \lambda_{\max,t}$
\begin{align}
    \lambda_{\max,t} = g_1(t) \leq \frac{d}{144}t^2 + \frac{\sqrt{d\lambda_0}}{6}t + \lambda_0 ,
\end{align}
and this gives the following bound on $\Tr (V_t)$
\begin{align}\label{eq:vtupperbound}
    \Tr ( V_t ) \leq d \lambda_{\max,t}  \leq d\left( \frac{d}{144}t^2 + \frac{\sqrt{d\lambda_0}}{6}t + \lambda_0 \right).
\end{align}

%\begin{center}
%\underline{Upper bound for $\beta_{t,\delta}$}
%\end{center}

\subsection*{Upper bound for $\beta_{t,\delta}$}

In order to give an upper bound for $\beta_{t,\delta}$~\eqref{eq:beta} we have to give an upper bound for $\det (V_t)$. The inequality of arithmetic and geometric means gives
\begin{align}
    \det (V_t ) \leq \left(\frac{\Tr (V_t )}{d} \right)^d,
\end{align}
which combined with~\eqref{eq:vtupperbound} and~\eqref{eq:beta} gives
\begin{align}\label{eq:beta_upperbound}
    \beta_{t,\delta} \leq \left(\lambda_0+\sqrt{2\log\frac{1}{\delta} + d\log\left(\frac{d}{144\lambda_0}t^2 + \frac{\sqrt{d}}{\sqrt{\lambda_0}6}t +1\right)} \right)^2 .
\end{align}

%\begin{center}
 %   \underline{Lower bound for $\lambda_{\max,t}$}
%\end{center} 

\subsection*{Lower bound for $\lambda_{\max,t}$}

In order to give an upper bound for regret our main technique is to control the growth of $\lambda_{\max,t}$ through a lower bound for $\lambda_{\max,t}$. We will employ the same technique as in the previous part. For the lower bound we are only interested in the leading terms so  
we start by lower bounding $\Tr ( V_t )$ as
\begin{align}
    \Tr (V_t ) &\geq \sum_{s= 2}^{t} 2(d-1)\omega (V_{s-1} ) \\
    &= \sum_{s=2}^t \frac{\sqrt{d-1}\sqrt{\lambda_{\max,s-1}}}{6\beta_{s-1,\delta}} \\
    & \geq \frac{\sqrt{d-1}}{6\beta_{\widetilde{T},\delta}} \sum_{s=1}^{t-1} \sqrt{\lambda_{\max,s}}. 
\end{align}
where we used $\beta_{t,\delta} \leq \beta_{\widetilde{T},\delta}$.
 Then since our problem is restricted to $\mathbb{R}^d$ we have that for the maximum eigenvalue of $V_t$,
\begin{align}
    \lambda_{\max,t} \geq \frac{\Tr (V_t )}{d}.
\end{align}
Thus combining with the above expressions we have
\begin{align}\label{eq:uplambdmax}
    \lambda_{\max,t} \geq \frac{\sqrt{d-1}}{6d\beta_{\widetilde{T},\delta}} \sum_{s= 1}^{t-1} \sqrt{\lambda_{\max,s}}.
\end{align}
Addding each side by $\frac{\sqrt{d-1}}{6d\beta_{\widetilde{T},\delta}}\sqrt{\lambda_{\max,t}}$ and using the fact that $\lambda_{\max,t} \geq \sqrt{\lambda_{\max,t}}$ since $\lambda_{\max,t}\geq 1$  we have 
\begin{align}\label{eq:lambdmaxinequality}
    \lambda_{\max,t} + \frac{\sqrt{d-1}}{6d\beta_{\widetilde{T},\delta}}\sqrt{\lambda_{\max,t}} &\geq \frac{\sqrt{d-1}}{6d\beta_{\widetilde{T},\delta}} \sum_{s= 1}^{t-1} \sqrt{\lambda_{\max,s}} + \frac{\sqrt{d-1}}{6d\beta_{\widetilde{T},\delta}} \sqrt{\lambda_{\max,t}},\\
    \left(1+ \frac{\sqrt{d-1}}{6d\beta_{\widetilde{T},\delta}} \right)\lambda_{\max,t} &\geq \lambda_{\max,t} + \frac{\sqrt{d-1}}{6d\beta_{\widetilde{T},\delta}}\sqrt{\lambda_{\max,t}} \geq \frac{\sqrt{d-1}}{6d\beta_{\widetilde{T},\delta}} \sum_{s= 1}^{t} \sqrt{\lambda_{\max,s}}, \\
     \lambda_{\max,t} &\geq \frac{1}{1+6\frac{d}{\sqrt{d-1}}\beta_{\widetilde{T},\delta}} \sum_{s= 1}^{t} \sqrt{\lambda_{\max,s}}.
\end{align}
As before we want to extend $\lambda_{\max,t}$ to a monotonic function on the interval $[0, \widetilde{T}]$ and define the lower linear interpolation $g_2:[0, \widetilde{T}] \rightarrow \mathbb{R}_{\geq 0}$ as
\begin{align}\label{eq:def_fx}
    g_2(x) := (t+1-x)\lambda_{\max,t} + (x-t)\lambda_{\max,t+1} \quad \text{for} \quad x\in[t,t+1) \quad \text{and} \quad t\in \lbrace 0 , ..., \widetilde{T}\rbrace .
\end{align}
Then we have that 
\begin{align}
    g_2(t)  &\geq \frac{1}{1+6\frac{d}{\sqrt{d-1}}\beta_{\widetilde{T},\delta}} \sum_{s=1}^{t} \sqrt{g_2(s)}  \\
    &\geq \frac{1}{1+6\frac{d}{\sqrt{d-1}}\beta_{\widetilde{T},\delta}} \int_{0}^t \sqrt{g_2(x)} \dd x \quad \text{for} \quad t\in\lbrace 0 , ... , \widetilde{T} \rbrace,
\end{align}
since $g_2(t) = \lambda_{\max,t}$ for $t\in \lbrace 0 , .... , \widetilde{T} \rbrace$. Then we can check that the above inequality also holds for $x\in [0, \widetilde{T}]$. If $x\in [t,t+1)$ we have
\begin{align}\label{eq:fxbound}
    g_2(x) &\geq \frac{1}{1+6\frac{d}{\sqrt{d-1}}\beta_{\widetilde{T},\delta}} \left( \sum_{s=1}^{t} \sqrt{g_2 (s)} + (x-t)\sqrt{g_2 (t+1) }\right) \\
    & \geq \frac{1}{1+6\frac{d}{\sqrt{d-1}}\beta_{\widetilde{T},\delta}} \left(\int_0^{t}\sqrt{g_2 (x)}\dd x + (x-t)\sqrt{g_2 (t+1)}\right),
\end{align}
where the first inequality we applied the definition of $g_2(x)$~\eqref{eq:def_fx} and the inequality for $\lambda_{\max,t}$~\eqref{eq:uplambdmax}. Then,
\begin{align}
    \int_{t}^{x}\sqrt{g_2(s)}\dd s &= \int_{0}^{x-t}\sqrt{g_2(s+t)}\dd s = \int_{0}^{x-t} \sqrt{(1-s)g_2(t) + sg_2(t+1)}\dd s \\
    &\leq  \int_{0}^{x-t} \sqrt{g_2 (t+1)}\dd s \leq (x-t)\sqrt{g_2(t+1)},
\end{align}
where the second equality follows from the definition of $g_2 (x)$ and the first inequality from $g_2(t) = \lambda_{\max,t} \leq \lambda_{\max , t+1} \leq g_2 (t+1)$. Combining with~\eqref{eq:fxbound} we have
\begin{align}
    g_2(x) &\geq \frac{1}{1+6\frac{d}{\sqrt{d-1}}\beta_{\widetilde{T},\delta}} \int_{0}^x \sqrt{g_2 (s)}ds.
\end{align}
Using the fundamental theorem of calculus 
\begin{align}\label{eq:bound_lambdamax}
    g_2(x) \geq \frac{1}{1+6\frac{d}{\sqrt{d-1}}\beta_{\widetilde{T},\delta}} \left(G_2(x) - G_2(0) \right) \quad \text{where} \quad \frac{\dd G_2(x)}{\dd x} = \sqrt{g_2(x) }.
\end{align}
Fixing $ G_2 (0) = 0$ and rearranging 
\begin{align}
    \frac{\dd G_2(x)}{\dd x} \geq \sqrt{\frac{1}{1+6\frac{d}{\sqrt{d-1}}\beta_{\widetilde{T},\delta}}}\sqrt{G_2(x)}.
\end{align}
Now solving the equality with $G_2 (0) = 0$ and using Theorem~\ref{th:dif_equation_ineq}
\begin{align}
    G_2 (x) &\geq \frac{1}{4+24\frac{d}{\sqrt{d-1}}\beta_{\widetilde{T},\delta}}x^2 .
\end{align}
Using that $g_2(t) = \lambda_{\max ,t} $ and inserting the above result into~\eqref{eq:bound_lambdamax}
\begin{align}\label{eq:lambdamaxlowbound}
   \lambda_{\max,t} = g_2 (t) \geq \frac{1}{4(1+6\frac{d}{\sqrt{d-1}}\beta_{\widetilde{T},\delta})^2} t^2 .
\end{align}
%

%{\centering\underline{Bounding the regret}\par}

\subsection*{Bounding the regret}

From the weight update~\eqref{eq:omega} we have
\begin{align}
    \omega ( V_t ) = \frac{\sqrt{\lambda_{\max,t-1}}}{12\sqrt{d-1}\beta_{t,\delta}} \leq \frac{1}{12\sqrt{d-1}}\sqrt{\lambda_{\max,t-1}}, 
\end{align}
so to use Therorem~\ref{th:main} we can choose $C = \frac{1}{12\sqrt{d-1}}$, our choice of $\lambda_0$~\eqref{eq:lambda_def}, actions~\eqref{eq:general_update} and the sequence $\lbrace \theta^\text{w}_t \rbrace_{t=0}^\infty \subset \mathbb{S}^d$. Then our update given by~\eqref{eq:vt_update} satisfies all conditions of Theorem~\ref{th:main} and our choice of actions guarantees
\begin{align}\label{eq:eigenvalue_relation}
    \lambda_{\min,t} \geq \sqrt{\frac{2}{3(d-1)}\lambda_{\max,t}}.
\end{align}
Then from~\eqref{eq:beta_upperbound} we can upper bound $\beta_{t,\delta}$ as
\begin{align}
 \beta_{t,\delta}\leq \beta_{T,\max} \quad \text{where} \quad  \beta_{T,\max}  := \left(\lambda_0+\sqrt{2\log\frac{1}{\delta} + d\log\left(\frac{d}{144\lambda_0}T^2 + \frac{\sqrt{d}}{\sqrt{\lambda_0}6}T +1\right)} \right)^2 .
\end{align}
Finally, we can use the bound from~\eqref{eq:noise_bound}\eqref{eq:theta_at_bound} (under the assumptions that $E_t$ and $G_t$ hold) combined with~\eqref{eq:eigenvalue_relation} to arrive at
\begin{align}
    \| \theta - a^{\pm}_{t,i} \|_2^2 &\leq \frac{9\beta_{T,\max}}{\lambda_{\min,{t-1}}} \\
    &\leq \frac{12\sqrt{d-1}\beta_{T,\max}} {\sqrt{\lambda_{\max,t-1}}}\\
    &\leq \frac{24\sqrt{d-1}\beta_{T,\max}(1+6\frac{d}{\sqrt{d-1}}\beta_{T,\max})}{t-1} \\
    &= \frac{144d\beta^2_{T,\max}+24\sqrt{d-1}\beta_{T,\max}}{t-1}
\end{align}
where we used the lower bound for $\lambda_{\max,t}$~\eqref{eq:lambdamaxlowbound} and $\frac{9\sqrt{3}}{\sqrt 2} \leq 12$.
Then using the above result we can bound the regret as
\begin{align}
    \text{Regret}(T) &=   \frac{1}{2}\sum_{t= 1}^{\widetilde{T}} \sum_{i = 1}^{d-1} \left( \| \theta - a^+_{t,i} \|_2^2 + \| \theta - a^{-}_{t,i} \|_2^2  \right) \\
    &\leq 4(d-1) + \frac{1}{2}\sum_{t= 2}^{\widetilde{T}} \sum_{i = 1}^{d-1} \left( \| \theta - a^+_{t,i} \|_2^2 + \| \theta - a^{-}_{t,i} \|_2^2  \right) \\
    &\leq 4(d-1) + \left(144d^2\beta^2_{T,\max}+24(d-1)^{\frac{3}{2}}\beta_{T,\max}\right)\sum_{t= 2}^{\tilde{t}} \frac{1}{t-1}  \\
    & \leq 4(d-1) + \left(144d^2\beta^2_{T,\max}+24(d-1)^{\frac{3}{2}}\beta_{T,\max}\right)\log \left( \frac{T}{2(d-1)} \right).
\end{align}
Using that $\beta_{T,\max} = O (d\log(T))$ we have that 
\begin{align}
    \text{Regret}(T) = \widetilde{O}(d^{4}\log^3 (T)).
\end{align}
%

%{\centering\underline{Success probability analysis}\par}

\subsection*{Success probability analysis}

From the computations in~\eqref{eq:noise_bound}\eqref{eq:theta_at_bound}\eqref{eq:estimator_choice}\eqref{eq:upperbound_subgaussian} and
 our choice of $\sigma^2_t(a_t)$~\eqref{eq:omega} we have that 
\begin{align}
    \text{if} \quad \theta\in\mathcal{C}_{s-1} \Rightarrow \sigma^2_s (a^{\pm}_{s,i} ) \leq \frac{12\sqrt{d-1}\beta_{t-1,\delta}}{\sqrt{\lambda_{\max,t-1}}} = \hat{\sigma}^2_s (a^{\pm}_{s,i}), 
\end{align}
thus,
\begin{align}
     \mathrm{Pr}( G_t ) \geq \mathrm{Pr}(E_{t-1} ) 
\end{align}
where we have used the definitions of the events $G_t$~\eqref{eq:event_Gt},$E_t$~\eqref{eq:event_Et}.
%for our particular choice of actions $a^+_{s,i},a^-_{s,i}$.

The initial choice $\hat{\sigma}^2_1 = 1$ implies $\sigma^2\leq \hat{\sigma}^2_1 $ and using Lemma~\ref{lem:confidence_region_weighted} we have
\begin{align}\label{eq:probg1}
     \mathrm{Pr}(G_1 ) & = 1 , \\
     \mathrm{Pr}(E_1 ) &  \geq 1-\delta .
\end{align}
Using Bayes theorem
\begin{align}
    \mathrm{Pr}(E_t )  = \frac{\mathrm{Pr}(E_t |G_t)\mathrm{Pr}(G_t)}{\mathrm{Pr}(G_t |E_t)} = \mathrm{Pr}(E_t |G_t)\mathrm{Pr}(G_t),
\end{align}
where in the last equality we used
\begin{align}
    \mathrm{Pr}(G_t |E_t) = 1 .
\end{align}
From Lemma~\ref{lem:confidence_region_weighted} we have
\begin{align}
    \mathrm{Pr} (E_t | G_t ) \geq 1 -\delta . 
\end{align}
Thus, applying recursively the above inequalities 
\begin{align}\label{eq:probgt}
     \mathrm{Pr}(G_2 ) & \geq \mathrm{Pr}(E_1 ) \geq 1-\delta,\\
     \mathrm{Pr}(E_2 ) & = \mathrm{Pr}(E_2 | G_2)\mathrm{Pr}(G_2)\geq (1- \delta)^2 \\
      &\vdots \nonumber \\
      \mathrm{Pr}(G_{t} ) & \geq \mathrm{Pr}(E_{t-1} )  \geq (1-\delta)^{t-1} \\
      \mathrm{Pr}(E_t ) & = \mathrm{Pr}(E_t | G_t)\mathrm{Pr}(G_t)\geq (1- \delta)^{t}     
\end{align}
Finally to bound the regret we used the assumption that $\theta \in\mathcal{C}_t$ for any $t\in \lbrace 1,...,\widetilde{T}-1\rbrace$. Thus we can bound the probability that the obtained bound holds as
\begin{align}
     \mathrm{Pr}&\left( \text{Regret}(T) \leq  4(d-1) + \left(144d^2\beta^2_{T,\max}+24(d-1)^{\frac{3}{2}}\beta_{T,\max}\right)\log\left( \frac{T}{2(d-1)} \right)\right) \\ 
    & \geq \mathrm{Pr}(E_{\widetilde{T}} \cap G_{\widetilde{T}}) = \mathrm{Pr}(G_{\widetilde{T}} )\mathrm{Pr}(E_{\widetilde{T}} | G_{\widetilde{T}} ). \\
    &\geq (1-\delta)^{\widetilde{T}-1}(1-\delta) = (1-\delta)^{\widetilde{T}}.
\end{align}
The result follows choosing $\delta = \frac{\delta '}{\widetilde{T}}$ for some $0 < \delta' < 1$, the inequality $\left( 1 -  \frac{\delta '}{\widetilde{T}} \right)^{\widetilde{T}} \geq 1-\delta'$ and $\frac{\sqrt{d}}{d-1} \leq 2$.
\end{proof}

\begin{corollary}\label{cor:expected_regret_linucbvn}
Under the same assumptions of Theorem~\ref{th:regret_bound_d2}, we can choose $\delta = \frac{1}{\widetilde{T}}$ and Algorithm~\ref{alg:weighted_linUCB} achieves
\begin{align}
    \EX[\textup{Regret}(T)] \leq 8(d-1) + \left(144d^2\beta^2_{T,\max}+24(d-1)^{\frac{3}{2}}\beta_{T,\max}\right)\log\left( \frac{T}{2(d-1)} \right),
\end{align}
or 
\begin{align}
    \EX [\textup{Regret}(T)] = O( d^{4}\log^3 (T) )  .
\end{align}

\end{corollary}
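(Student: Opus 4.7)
The plan is to convert the high-probability regret bound of Theorem~\ref{th:regret_bound_d2} into an expectation bound via a standard good-event/bad-event decomposition. Let $R(T,\delta)$ denote the bound
\[
R(T,\delta) := 4(d-1) + \bigl(144d^2\beta^2_{T,\max}+24(d-1)^{3/2}\beta_{T,\max}\bigr)\log\!\left(\tfrac{T}{2(d-1)}\right),
\]
which Theorem~\ref{th:regret_bound_d2} guarantees to hold with probability at least $1-\delta$. The first step is to observe that the instantaneous regret is always at most $2$, since $1 - \langle \theta, a_t\rangle \leq 2$ for $\theta, a_t \in \mathbb{S}^d$; hence the total regret is almost surely bounded by $2T = 4(d-1)\widetilde{T}$ regardless of whether the good event occurs.

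Next I would apply the law of total expectation by splitting on the event $\mathcal{G}$ that $\textup{Regret}(T) \leq R(T,\delta)$. On $\mathcal{G}$, the regret is at most $R(T,\delta)$; on the complement, it is at most $2T$. This yields
\[
\EX[\textup{Regret}(T)] \leq (1-\delta)\, R(T,\delta) + \delta \cdot 2T \leq R(T,\delta) + 4(d-1)\widetilde{T}\,\delta.
\]
Choosing $\delta = 1/\widetilde{T}$ makes the bad-event contribution equal to exactly $4(d-1)$, which when added to the $4(d-1)$ additive term already present in $R(T,\delta)$ produces the $8(d-1)$ constant in the statement.

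Finally, I would verify that this choice of $\delta$ is admissible in Theorem~\ref{th:regret_bound_d2} (it requires $\delta \in (0,1)$, which holds for $\widetilde{T} \geq 2$; the edge case $\widetilde{T}=1$ can be absorbed into the constant by the trivial bound $2T$), and note that substituting $\delta = 1/\widetilde{T}$ into the definition of $\beta_{T,\max}$ only increases $\log(1/\delta) = \log\widetilde{T}$ logarithmically in $T$, so $\beta_{T,\max}$ retains its $O(d\log T)$ scaling. Combined with the $\log(T/(2(d-1)))$ factor and $\beta_{T,\max}^2$, this gives the advertised $O(d^4\log^3 T)$ order. No step is technically difficult — this corollary is essentially bookkeeping on top of Theorem~\ref{th:regret_bound_d2} — but the one place to be careful is ensuring the trivial per-round bound is indeed $2$ (not $1$), so that the constant in the bad-event term matches the stated $8(d-1)$.
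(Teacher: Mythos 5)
Your proof is correct and follows essentially the same good-event/bad-event decomposition as the paper: bound the regret by $R(T,\delta)$ on the event of Theorem~\ref{th:regret_bound_d2}, bound it trivially by $2T = 4(d-1)\widetilde{T}$ on the complement (using the per-round regret ceiling of $2$), and then choose $\delta = 1/\widetilde{T}$ so the bad-event contribution is exactly $4(d-1)$, yielding the $8(d-1)$ constant. The paper's own proof is the same computation, written with an indicator-function split rather than the conditional form $(1-\delta)R(T,\delta) + \delta\cdot 2T$, but the two are equivalent.
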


\begin{proof}   
    Defining the event
    \begin{align}
        R_T = \bigg\lbrace  \mathcal{H}_{\widetilde{T}}: 
        \text{Regret}(T) \leq 4(d-1) + \left(144d^2\beta^2_{T,\max}+24(d-1)^{\frac{3}{2}}\beta_{T,\max}\right)\log\left( \frac{T}{2(d-1)} \right) \bigg\rbrace
    \end{align}
    we have by Theorem~\ref{th:regret_bound_d2} that
    \begin{align}
        \mathrm{Pr}(R_T) &\geq 1 - \delta = 1 - \frac{1}{\widetilde{T}}, \\
        \mathrm{Pr}(R_T^C) &\leq \frac{1}{\widetilde{T}}.
    \end{align}
    Then
    \begin{align}
            \EX[\text{Regret}(T)] &= \EX\left[\text{Regret}(T) \mathbb{I}\lbrace R_T \rbrace\right] + \EX\left[\text{Regret}(T) \mathbb{I}\lbrace R_T^C \rbrace \right] \nonumber \\
            &\leq 4(d-1) + \left(144d^2\beta^2_{T,\max}+24(d-1)^{\frac{3}{2}}\beta_{T,\max}\right)\log\left( \frac{T}{2(d-1)} \right) + 4(d-1)\widetilde{T} \mathrm{Pr}\left( R_T^C \right) \\
            &\leq 8(d-1) + \left(144d^2\beta^2_{T,\max}+24(d-1)^{\frac{3}{2}}\beta_{T,\max}\right)\log\left( \frac{T}{2(d-1)} \right),
    \end{align}
    and the results follows using that for $\delta = \frac{1}{\widetilde{T}}$, $\beta_{T,\max} = O (d\log (T))$.   
\end{proof}

\section{Minimax lower bound for linear bandit $\mathcal{A} = \mathbb{S}^d, \theta\in\mathbb{S}^d$ and constant noise}\label{ap:lowerbound}

In this section, we prove that any small perturbation in our noise model leads to $\Omega(\sqrt{T})$ regret. We study a minimax lower bound for the following reward model
\begin{align}\label{eq:noisy_classical_quantum_reward2}
    r_t =  \langle a_t,\theta\rangle +  \mathcal{N}\left(0, \sigma^2_{t,\theta} \right) + \mathcal{N}\left(\tilde{\mu}, \tilde{\sigma}^2 \right) = \mathcal{N}(\tilde{\mu}_{t,\theta}, \tilde{\sigma}^2_{t,\theta} ),
\end{align}
where $\theta\in\mathbb{S}^d$, $\tilde{\mu},\tilde{\sigma}^2\in\mathbb{R}$, $\tilde{\sigma}^2 > 0$,  and 
\begin{align}\label{eq:def_variances_noisy}
    \sigma^2_{t,\theta} := 1 - \langle a_t ,\theta \rangle^2,\quad \tilde{\mu}_{t,\theta}:= \langle a_t,\theta\rangle + \tilde{\mu}_{t,\theta}, \quad \tilde{\sigma}^2_{t,\theta} := \sigma^2_{t,\theta} + \tilde{\sigma}^2.
\end{align}
Recall that the regret is given by
\begin{align}\label{eq:logisticregret}
    \text{Regret} (T) = \sum_{t=1}^T 1 - \langle a_t,\theta\rangle.
\end{align}
Our lower bound proof is an adaptation of the lower bound given in~\cite{abeille2021instance} that was introduced for logistic bandits and this provides a lower bound for linear bandits with $\mathcal{A} =  \mathbb{S}^d,\theta\in \mathbb{S}^d$. For completeness, in  Lemma~\ref{lem:regret_logistic_bandits} we reproduce the main steps and we note that in our setting some parts simplify.  In order to state the result we define $\lbrace e_i \rbrace_{i=1}^d $ the standard basis in $\mathbb{R}^d$ and the flip operator. Given $\theta\in\mathbb{R}$, $i\in [d]$ the flip operator is defined as
\begin{align}
    \text{Flip}_i (\theta ) := (\theta_1,\theta_2,...,-\theta_i,...,\theta_d ).
\end{align}
\begin{lemma}\label{lem:regret_logistic_bandits}
Given a stochastic linear bandit with action set $\mathcal{A} =  \mathbb{S}^d = \lbrace x\in\mathbb{R}^d : \| x \|_2 =1 \rbrace$, a policy $\pi$, a reference parameter $\theta_* = \|\theta_* \|_2e_1\in\mathbb{R}^d$ and the set of parameters
\begin{align}
    \Xi = \left\lbrace \theta_* + \epsilon\sum_{i=2}^d v_ie_i , \quad v_i \in \lbrace -1 , 1 \rbrace \right\rbrace,
\end{align}
where $0< \epsilon\leq \frac{1}{d-1}$.
Then the average of the expected regret over the set can be lower bounded as
\begin{align}
   \frac{1}{|\Xi |} \sum_{\theta\in\Xi} \EX_\theta\left[ \textup{Regret}(T) \right] \geq  T \epsilon^2  \left(\frac{d}{8} - \frac{\sqrt{d}}{4}\sqrt{\frac{1}{|\Xi |}\sum_{\theta\in\Xi}\sum_{i=2}^d D_{KL} \left( \mathbb{P}_\theta , \mathbb{P}_{\textup{Flip}_i (\theta)} \right)} \right),
\end{align}
where $\mathbb{P}_\theta,\mathbb{P}_{\textup{Flip}_i (\theta)}$ are the probability measures of actions and rewards obtained by the interaction of the policy $\pi$ with the environments $\theta$,$\textup{Flip}_i (\theta)$ respectively. Moreover, $\epsilon$ can be fixed such that for all $\theta\in\Xi$ then $\theta\in\mathcal{B}^d_2$.
\end{lemma}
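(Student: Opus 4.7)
The plan is to follow the classical information-theoretic lower bound template for bandits: reduce regret to coordinate-wise sign matching, then use the involution $\theta \mapsto \textup{Flip}_i(\theta)$ on $\Xi$ to turn an average regret into a total-variation (and hence KL) gap, and finally collect terms via Cauchy--Schwarz. First, since $a_t \in \mathbb{S}^d$ and (by the boundary check performed at the end) $\theta \in \mathcal{B}^d_2$, we have $2(1-\langle a_t, \theta\rangle) \geq \|a_t - \theta\|_2^2$. Keeping only the coordinates $i \in \{2,\ldots,d\}$ and using $(a_{t,i} - \epsilon v_i)^2 \geq \epsilon^2$ whenever $v_i(\theta) a_{t,i} \leq 0$ already yields
\begin{align*}
\EX_\theta[\textup{Regret}(T)] \geq \frac{\epsilon^2}{2}\sum_{i=2}^d \EX_\theta[N^\theta_i], \qquad N^\theta_i := \sum_{t=1}^T \mathbb{I}\{v_i(\theta)\, a_{t,i} \leq 0\}.
\end{align*}

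Next, for each $i \in \{2,\ldots,d\}$ the map $\textup{Flip}_i$ is an involution on $\Xi$ satisfying $v_i(\textup{Flip}_i(\theta)) = -v_i(\theta)$, so, viewed as a function of the trajectory, $N^{\textup{Flip}_i(\theta)}_i = T - N^\theta_i$. Pairing each $\theta$ with its image through this involution gives
\begin{align*}
\frac{1}{|\Xi|}\sum_{\theta \in \Xi} \EX_\theta[N^\theta_i] = \frac{T}{2} - \frac{1}{2|\Xi|}\sum_{\theta \in \Xi} \big(\EX_{\textup{Flip}_i(\theta)}[N^\theta_i] - \EX_\theta[N^\theta_i]\big).
\end{align*}
Since $N^\theta_i \in [0,T]$, Pinsker's inequality bounds each summand via $|\EX_{\textup{Flip}_i(\theta)}[N^\theta_i] - \EX_\theta[N^\theta_i]| \leq T\sqrt{\tfrac{1}{2} D_{KL}(\mathbb{P}_\theta, \mathbb{P}_{\textup{Flip}_i(\theta)})}$. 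Summing over $i$ and applying Cauchy--Schwarz/Jensen in the form $\sum_{i=2}^d \frac{1}{|\Xi|}\sum_\theta \sqrt{D_{KL}(\cdot)} \leq \sqrt{(d-1)\cdot \frac{1}{|\Xi|}\sum_\theta \sum_i D_{KL}(\cdot)}$ then delivers the stated lower bound; any slack between $d-1$ and $d$ and the precise numerical prefactor is absorbed in the constants. The boundary condition is checked by noting $\|\theta\|_2^2 = \|\theta_*\|_2^2 + (d-1)\epsilon^2$, so choosing $\|\theta_*\|_2 \leq \sqrt{1-(d-1)\epsilon^2}$ places every $\theta \in \Xi$ in $\mathcal{B}^d_2$, which is feasible whenever $\epsilon \leq 1/\sqrt{d-1}$ and in particular under the hypothesis $\epsilon \leq 1/(d-1)$.

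The main obstacle is the bookkeeping in the pairing step: because $\EX_\theta$ and $\EX_{\textup{Flip}_i(\theta)}$ are expectations under different laws on the same trajectory space, the symmetry $N^{\textup{Flip}_i(\theta)}_i = T - N^\theta_i$ must be read as an identity of trajectory-measurable random variables, while the change of measure between $\mathbb{P}_\theta$ and $\mathbb{P}_{\textup{Flip}_i(\theta)}$ is controlled separately by the KL divergence. Everything else---Pinsker, Cauchy--Schwarz, and the elementary $(a_{t,i}-\epsilon v_i)^2 \geq \epsilon^2 \mathbb{I}\{v_i a_{t,i} \leq 0\}$ estimate---is routine.
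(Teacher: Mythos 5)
Your proof is correct and takes essentially the same route as the paper: lower bound the regret by a coordinate-wise sign-mismatch quantity, pair each $\theta$ with $\textup{Flip}_i(\theta)$ using the involution structure of $\Xi$, control the pairing gap with Pinsker's inequality, and finish with Cauchy--Schwarz. The only variation is your choice of intermediate object --- per-step counts $N_i^\theta\in[0,T]$ with Pinsker applied to $N_i^\theta/T$ --- whereas the paper works with the single event $A_i(\theta):=\{[\theta-\theta_*/\|\theta_*\|]_i\cdot\sum_t[\theta_*/\|\theta_*\|-a_t]_i\ge 0\}$ and extracts the factor $T$ directly by dropping a non-negative cross term after summing over $t$. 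The two devices are interchangeable and yield the same constants.

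One precision you should make explicit in the boundary check: take $\|\theta_*\|_2=\sqrt{1-(d-1)\epsilon^2}$ with \emph{equality}, not merely $\le$, so that every $\theta\in\Xi$ lies on $\mathbb{S}^d$. If $\|\theta\|_2<1$ the optimal expected reward is $\|\theta\|_2$ rather than $1$, so $\textup{Regret}(T)=\sum_t(\|\theta\|_2-\langle a_t,\theta\rangle)$, which is \emph{strictly smaller} than $\tfrac12\sum_t\|\theta-a_t\|_2^2$; your opening inequality $2(1-\langle a_t,\theta\rangle)\ge\|a_t-\theta\|_2^2$ would then no longer serve its purpose, since $1-\langle a_t,\theta\rangle$ overshoots the true per-round regret. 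The paper fixes the norm of $\theta_*$ to this exact value precisely so that $\textup{Regret}(T)=\tfrac12\sum_t\|\theta-a_t\|_2^2$ holds as an identity; once you do the same, the rest of your chain goes through unchanged.
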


\begin{proof}
First, we note that for any $\theta\in\Xi$ it has a constant norm, i.e 
\begin{align}
    \| \theta \| = \sqrt{\| \theta_* \|^2 + (d-1)\epsilon^2}, \quad \text{for }\theta\in\Xi .
\end{align}
Thus, we fix 
\begin{align}
    \| \theta_* \| = \sqrt{1-(d-1)\epsilon^2}
\end{align}
and we have that 
\begin{align}
    \text{if} \quad \theta \in \Xi \Rightarrow \theta \in\mathcal{S}_{d-1}.
\end{align}
Note that when choosing $\epsilon$ we will have the following restriction
\begin{align}
    \epsilon^2 \leq \frac{1}{d-1}.
\end{align}
From the expression of the regret we have
\begin{align}
    \EX_\theta \left[ \text{Regret}_\theta (T)\right] &= \frac{1}{2}\EX_{\theta}\left[ \sum_{t=1}^T \| \theta - a_t \|^2 \right] = \frac{1}{2}\EX_{\theta}\left[ \sum_{i=1}^d \sum_{t=1}^T \left[ \theta - a_t \right]_i^2 \right] \\ &\geq \frac{1}{2}\EX_{\theta}\left[ \sum_{i=1}^d \sum_{t=1}^T \left[ \theta - a_t \right]_i^2 \mathbbm{1}\lbrace A_i(\theta) \rbrace \right] = (\text{a})
\end{align}
where the event $A_i(\theta)$ is defined as
\begin{align}
    A_i(\theta) := \left\lbrace \left[\theta -  \frac{\theta_*}{\| \theta_* \|} \right]_i \left[ \sum_{t=1}^T\frac{\theta_*}{\| \theta_* \|} - a_t\right]_i \geq 0 \right\rbrace,
\end{align}
for $i=1,2,...,d$. Then we want to compare with the reference environment $\theta_*$, and we can introduce it as
\begin{align}
    (\text{a}) &= \frac{1}{2}\EX_{\theta}\left[ \sum_{i=1}^d \sum_{t=1}^T \left[ \theta - \frac{\theta_*}{\|\theta_*\|}+ \frac{\theta_*}{\|\theta_*\|} - a_t \right]_i^2 \mathbbm{1}\lbrace A_i(\theta) \rbrace \right] \\ &\geq \frac{1}{2}\EX_{\theta}\left[ \sum_{i=1}^d \sum_{t=1}^T \left[ \theta - \frac{\theta_*}{\|\theta_*\|} \right]_i^2 \mathbbm{1}\lbrace A_i(\theta) \rbrace \right] = \frac{T}{2}\EX_{\theta}\left[ \sum_{i=1}^d \left[ \theta - \frac{\theta_*}{\|\theta_*\|} \right]_i^2 \mathbbm{1}\lbrace A_i(\theta) \rbrace \right] 
\end{align}
where the last inequality follows from the identity $(x+y)^2 = x^2+2xy+y^2$ with $x = \theta - \frac{\theta_*}{\| \theta_* \|} $, $y = \frac{\theta_*}{\|\theta_*\|} - a_t $ and the definition of the event $A_i(\theta)$. Using the above computation and that $\theta\in\Xi$,
\begin{align}\label{eq:regret_with_prob}
    \EX_\theta \left[ \text{Regret}_\theta (T)\right] &\geq  \frac{T}{2}\EX_{\theta}\left[ \sum_{i=2}^d \left[ \theta - \frac{\theta_*}{\|\theta_*\|} \right]_i^2 \mathbbm{1}\lbrace A_i(\theta) \rbrace \right] \\
    &\geq \frac{T\epsilon^2}{2} \sum_{i=2}^d \EX_{\theta}\left[ \mathbbm{1}\lbrace A_i(\theta) \rbrace \right] = \frac{T\epsilon^2}{2} \sum_{i=2}^d \mathbb{P}_\theta \left( A_i(\theta) \right). 
\end{align}
Now we want to apply the averaging hammer technique that was introduced in~\cite[Chapter 24, Theorem 24.1]{lattimore_szepesvári_2020}. Let $i\in \lbrace 2,...,d \rbrace$, fix $\theta\in\Xi$ then using that $[ \theta_* ]_i = 0$ it is easy to check that
\begin{align}
    A_i (\text{Flip}_i (\theta )) = A^C_i (\theta).
\end{align}
Then using the definition of the total variation distance and Pinsker inequality we have
\begin{align}
    \mathbb{P}_{\text{Flip}_i(\theta )} (A_i (\text{Flip}_i(\theta) ) &\geq \mathbb{P}_\theta ( A_i (\text{Flip}_i ( \theta )  ) ) - D_{TV} (\mathbb{P}_\theta ,\mathbb{P}_{\text{Flip}_i(\theta )}) \\
    &\geq \mathbb{P}_\theta ( A^C_i (\theta)) - \sqrt{\frac{1}{2}D_{KL}(\mathbb{P}_\theta ,\mathbb{P}_{\text{Flip}_i(\theta )})}. 
\end{align}
Then applying the above results we have
\begin{align}
  \frac{1}{|\Xi |} \sum_{\theta\in\Xi} \sum_{i=2}^d \mathbb{P}_\theta \left( A_i(\theta) \right) &\geq \frac{1}{2|\Xi |}\sum_{i=2}^d \sum_{\theta\in\Xi} \left( \mathbb{P}_{\theta}(A_i (\theta ) )  +\mathbb{P}_{\text{Flip}_i(\theta )} (A_i (\text{Flip}_i(\theta) )\right) \\
  & \geq \frac{1}{2|\Xi |}\sum_{i=2}^d \sum_{\theta\in\Xi} 1 - \sqrt{\frac{1}{2}D_{KL}(\mathbb{P}_\theta ,\mathbb{P}_{\text{Flip}_i(\theta )})}.
\end{align}
Using Jensen inequality, Cauchy-Schwartz, and the fact that $d\geq1$ we have
\begin{align}\label{eq:sum_prob}
    \frac{1}{|\Xi |} \sum_{\theta\in\Xi} \sum_{i=2}^d \mathbb{P}_\theta \left( A_i(\theta) \right) &\geq \frac{d}{4} - \frac{\sqrt{d}}{2}\sqrt{\frac{1}{|\Xi |}\sum_{\theta\in\Xi}\sum_{i=2}^d D_{KL} \left( \mathbb{P}_\theta , \mathbb{P}_{\text{Flip}_i (\theta)} \right)}
\end{align}
And the result follows from combining~\eqref{eq:regret_with_prob} and~\eqref{eq:sum_prob}.
\end{proof}

Before proving the main theorem we need a formula for the KL divergence between two normal distributions. Given $\mathcal{N}(\mu_1,\sigma^2_1)$ and $\mathcal{N}(\mu_2,\sigma^2_2)$ it follows from a direct calculation that
\begin{align}\label{eq:kl_gaussians}
    D_{KL} \left(\mathcal{N}(\mu_1,\sigma^2_1),\mathcal{N}(\mu_2,\sigma^2_2) \right) = \frac{1}{2}\left( \log\left( \frac{\sigma^2_2}{\sigma^2_1}\right)+ \frac{\sigma^2_1}{\sigma^2_2}-1 \right) + \frac{(\mu_1-\mu_2)^2}{\sigma_2^2}.
\end{align}
With these results, we are ready to prove the main theorem.
\begin{theorem}
Given a stochastic linear bandit with action set $\mathcal{A} =  \mathbb{S}^d = \lbrace x\in\mathbb{R}^d: \| x \|_2 =1 \rbrace$ and reward model given by~\eqref{eq:noisy_classical_quantum_reward2}, then there exists an unknown parameter $\theta\in\mathbb{R}^d$ such that $\|\theta \|_2 = 1$ and
\begin{align}
    \EX_\theta [\textup{Regret} (T)] \geq \frac{1}{100}\hat{\sigma}d\sqrt{T}, 
\end{align}
for $T\geq \frac{1}{6400}d^2\tilde{\sigma}^2$.
\end{theorem}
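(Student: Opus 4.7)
The plan is to reduce to Lemma~\ref{lem:regret_logistic_bandits} and then control the KL-divergence terms that appear on its right-hand side. Concretely, I would invoke the lemma with $\theta_\star=\sqrt{1-(d-1)\epsilon^2}\,e_1$ and the associated set $\Xi$; the task then becomes upper bounding $\sum_{i=2}^d D_{KL}(\mathbb{P}_\theta,\mathbb{P}_{\textup{Flip}_i(\theta)})$ uniformly over $\theta\in\Xi$ and, at the end, optimizing the perturbation scale $\epsilon$.

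By the chain rule for the KL divergence between sequential bandit processes, each summand equals $\EX_\theta[\sum_{t=1}^T D_{KL}(P_{a_t,\theta}\,\|\,P_{a_t,\textup{Flip}_i(\theta)})]$, where under the reward model~\eqref{eq:noisy_classical_quantum_reward2} each $P_{a,\theta}$ is the Gaussian $\mathcal{N}(\langle a,\theta\rangle+\tilde{\mu},\tilde{\sigma}^2_{t,\theta})$ with variance satisfying $\tilde{\sigma}^2_{t,\theta}\geq\tilde{\sigma}^2>0$. Plugging into the Gaussian KL identity~\eqref{eq:kl_gaussians}, the mean-shift contribution equals $(\langle a_t,\theta-\textup{Flip}_i(\theta)\rangle)^2/\tilde{\sigma}^2_{t,\textup{Flip}_i(\theta)}=4\epsilon^2 a_{t,i}^2/\tilde{\sigma}^2_{t,\textup{Flip}_i(\theta)}\leq 4\epsilon^2 a_{t,i}^2/\tilde{\sigma}^2$, and the variance-shift contribution can be bounded by a second-order Taylor expansion of $x\mapsto\log x+1/x-1$ near $x=1$ combined with the elementary estimate $|\tilde{\sigma}^2_{t,\theta}-\tilde{\sigma}^2_{t,\textup{Flip}_i(\theta)}|\leq 4\epsilon|a_{t,i}|$, yielding a term of order $\epsilon^2 a_{t,i}^2/\tilde{\sigma}^4$, which is subleading in $\tilde{\sigma}$. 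Summing over $i\in\{2,\ldots,d\}$ and using $\sum_{i\geq 2}a_{t,i}^2\leq\|a_t\|_2^2=1$, the per-step sum of KL terms is controlled by $C\epsilon^2/\tilde{\sigma}^2$ for an absolute constant $C$, and the outer sum over $t$ gives the uniform estimate $\sum_i D_{KL}(\mathbb{P}_\theta,\mathbb{P}_{\textup{Flip}_i(\theta)})\leq CT\epsilon^2/\tilde{\sigma}^2$.

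Feeding this back into Lemma~\ref{lem:regret_logistic_bandits} produces
$$
\frac{1}{|\Xi|}\sum_{\theta\in\Xi}\EX_\theta[\textup{Regret}(T)] \;\geq\; T\epsilon^2\left(\frac{d}{8}-\frac{\sqrt{d}}{4}\sqrt{\frac{CT\epsilon^2}{\tilde{\sigma}^2}}\right),
$$
and the last step is an elementary optimization in $\epsilon$: choosing $\epsilon^2$ proportional to $d\tilde{\sigma}^2/T$ with a small enough absolute constant keeps the bracket at least $d/16$ and drives the right-hand side to order $d^2\tilde{\sigma}^2$, while the admissibility condition $\epsilon^2\leq 1/(d-1)$ becomes $T\geq c\,d^2\tilde{\sigma}^2$ for some absolute constant $c$, which is precisely what the hypothesis $T\geq d^2\tilde{\sigma}^2/6400$ is for. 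Since $d^2\tilde{\sigma}^2$ and $\tilde{\sigma}d\sqrt{T}$ are comparable in this regime, the averaged lower bound exceeds $\tilde{\sigma}d\sqrt{T}/100$, and an averaging-hammer argument yields the existence of some $\theta\in\Xi\subset\mathbb{S}^d$ realising it. The main obstacle will be the variance-shift contribution to the KL divergence: in our heteroscedastic model the noise variance $\tilde{\sigma}^2_{t,\theta}$ itself depends on $\theta$, so flipping a coordinate perturbs the Gaussian mean and its variance simultaneously, and without care the variance term could couple into the mean term and spoil the $\tilde{\sigma}$-dependence. The added $\mathcal{N}(\tilde{\mu},\tilde{\sigma}^2)$ noise is essential precisely because it provides the uniform lower bound $\tilde{\sigma}^2_{t,\theta}\geq\tilde{\sigma}^2>0$ that controls the Taylor remainder, and it explains why this route cannot be pushed to $\tilde{\sigma}=0$, consistent with the obstruction discussed in Appendix~\ref{ap:failurelower}.
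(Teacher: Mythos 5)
Your reduction to Lemma~\ref{lem:regret_logistic_bandits} and the initial KL manipulations are fine, but the step where you replace the per-round sum $\sum_{i\geq 2} a_{t,i}^2$ by the crude bound $\|a_t\|_2^2 = 1$ is where the argument breaks down. With that replacement the total KL sum becomes $O(T\epsilon^2/\tilde\sigma^2)$, and the resulting quantity $T\epsilon^2\bigl(\tfrac{d}{8}-\tfrac{\sqrt d}{4}\sqrt{CT\epsilon^2/\tilde\sigma^2}\bigr)$ is maximized at $\epsilon^2\sim d\tilde\sigma^2/T$, yielding a lower bound of order $d^2\tilde\sigma^2$ \emph{independently of $T$}. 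Your closing claim that $d^2\tilde\sigma^2$ and $\tilde\sigma d\sqrt T$ are ``comparable in this regime'' is not correct: the hypothesis $T \geq d^2\tilde\sigma^2/6400$ is a one-sided condition, and for $T \gg d^2\tilde\sigma^2$ your bound is strictly weaker by a factor of $\sqrt T/(d\tilde\sigma)$, so it cannot establish the stated $\Omega(\tilde\sigma d\sqrt T)$ scaling.

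The missing idea is the self-bounding (``regret hypothesis'') step. Since $\theta_*$ points along $e_1$, a low-regret policy keeps $a_t$ close to $e_1$, so $\sum_{i\geq 2} a_{t,i}^2$ is small for most rounds — it is on the order of the instantaneous regret, not $1$. Concretely the paper writes $a_{t,i} = [\theta - a_t + \theta]_i$ for $i\geq 2$, expands with $(x+y)^2 \leq 2x^2 + 2y^2$, and relates $\EX_\theta\bigl[\sum_t\sum_{i\geq 2}[\theta - a_t]_i^2\bigr]$ to $\EX_\theta[\mathrm{Regret}(T)]$. It then \emph{assumes} $\EX_\theta[\mathrm{Regret}(T)]\leq d\tilde\sigma\sqrt T$ (otherwise the theorem is immediate), which yields a KL sum of order $\epsilon^2 d\tilde\sigma\sqrt T/\tilde\sigma^2 + \epsilon^4 dT/\tilde\sigma^2$. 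With the choice $\epsilon^2 = \tilde\sigma/(80\sqrt T)$ this gives a lower bound growing as $d\tilde\sigma\sqrt T$, which is what you need. Your proposal misses this loop entirely. Incidentally, your Taylor-expansion treatment of the variance-shift contribution would work, but the paper observes something cleaner: for this model the variance terms $(\mathrm{a})+(\mathrm{b})-1$ are actually nonpositive, so they can simply be dropped rather than estimated.
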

\begin{proof}
First we suppose that for any $\theta \in \Xi$,
\begin{align}\label{eq:regret_hypothesis}
    \EX_\theta [\textup{Regret} (T)] \leq C d \tilde{\sigma} \sqrt{T},
\end{align}
for some constant $C >0$ that without loss of generality we set to $C=1$. We see from Lemma~\ref{lem:regret_logistic_bandits} that we need to compute $D_{KL} \left( \mathbb{P}_\theta , \mathbb{P}_{\text{Flip}_i (\theta)} \right)$. From~\cite{lattimore_szepesvári_2020}[Chapter 24, Theorem 24.1] we have
\begin{align}
    D_{KL} \left( \mathbb{P}_\theta , \mathbb{P}_{\text{Flip}_i (\theta)} \right) = \EX_\theta \left[\sum_{t=1}^T D_{KL}\left( \mathcal{N}(\tilde{\mu}_{t,\theta},\tilde{\sigma}^2_{t,\theta}),\mathcal{N}(\tilde{\mu}_{t,\text{Flip}_i(\theta)},\tilde{\sigma}^2_{t,\text{Flip}_i(\theta)}) \right) \right].
\end{align}
Using~\eqref{eq:def_variances_noisy} and~\eqref{eq:kl_gaussians} we have
\begin{align}
   &D_{KL}\left( \mathcal{N}(\tilde{\mu}_{t,\theta},\tilde{\sigma}^2_{t,\theta}),\mathcal{N}(\tilde{\mu}_{t,\text{Flip}_i(\theta)},\tilde{\sigma}^2_{t,\theta'}) \right) =\\ &\frac{1}{2}\left(\underbrace{\log\left(\frac{1 - \langle a_t, \text{Flip}_i(\theta)\rangle^2 +\tilde{\sigma}^2 }{1 - \langle a_t,\theta \rangle^2 +\tilde{\sigma}^2}\right) }_{\text{(a)}}+\underbrace{\frac{1 - \langle a_t, \theta\rangle^2 +\tilde{\sigma}^2 }{1 - \langle a_t,\text{Flip}_i(\theta)\rangle^2 +\tilde{\sigma}^2}}_{\text{(b)}} -1\right)  \\
   &+ \frac{\langle a_t,\theta - \text{Flip}_i(\theta)\rangle ^2}{1 - \langle a_t,\text{Flip}_i(\theta)\rangle^2 +\tilde{\sigma}^2}.
\end{align}
Now we are going to upper bound the terms (a) and (b).
\begin{align}
 \text{(a)} &= \log \left( 1 + \frac{\langle a_t,\theta \rangle^2 - \langle a_t, \text{Flip}_i(\theta)\rangle^2}{1 - \langle a_t,\theta \rangle^2 +\tilde{\sigma}^2}\right) \leq  \frac{\langle a_t,\theta \rangle^2 - \langle a_t, \text{Flip}_i(\theta)\rangle^2}{1 - \langle a_t,\theta \rangle^2 +\tilde{\sigma}^2}  \\ &\leq \frac{\langle a_t,\theta \rangle^2 - \langle a_t, \text{Flip}_i(\theta)\rangle^2}{\tilde{\sigma}^2} \\
    \text{(b)} &= 1 + \frac{\langle a_t,\text{Flip}_i(\theta)\rangle^2 - \langle a_t, \theta\rangle^2  }{1 - \langle a_t,\text{Flip}_i(\theta)\rangle^2 +\tilde{\sigma}^2} \leq 1 + \frac{\langle a_t,\text{Flip}_i(\theta)\rangle^2 - \langle a_t, \theta\rangle^2  }{\tilde{\sigma}^2}
\end{align}
Thus, we have that $\text{(a)}+\text{(b)}-1\leq 0$ and
\begin{align}
    D_{KL}\left( \mathcal{N}(\tilde{\mu}_{t,\theta},\tilde{\sigma}^2_{t,\theta}),\mathcal{N}(\tilde{\mu}_{t,\text{Flip}_i(\theta)},\tilde{\sigma}^2_{t,\theta'}) \right) \leq \frac{4\epsilon^2}{\tilde{\sigma}^2} \left[a_t \right]^2_i .
\end{align}
Inserting the above into $D_{KL}\left( \mathbb{P}_\theta , \mathbb{P}_{\text{Flip}_i (\theta)} \right)$ we have
\begin{align}
   &\sum_{i=2}^d D_{KL} \left( \mathbb{P}_\theta , \mathbb{P}_{\text{Flip}_i (\theta)} \right) \leq \frac{4\epsilon^2}{\tilde{\sigma}^2} \sum_{i=2}^d \EX_{\theta}\left[ \sum_{t=1}^T \left[\theta - a_t +\theta \right]^2_i \right]  \\ 
   &\leq \frac{8\epsilon^2}{\tilde{\sigma}^2} \EX_{\theta} \left[\sum_{t=1}^T \sum_{i=2}^d \left[\theta - a_t \right]^2_i + \left[ \theta \right]^2_i \right] 
   \leq \frac{8\epsilon^2}{\tilde{\sigma}^2} \EX_{\theta} \left[\sum_{t=1}^T \sum_{i=1}^d \left[\theta - a_t \right]^2_i + \sum_{t=1}^T\sum_{i=2}^d\left[ \theta \right]^2_i \right] \\
   &= \frac{8\epsilon^2}{\tilde{\sigma}^2}\left(2\EX_\theta[\text{Regret} ( T )] +T(d-1)\epsilon^2\right) \leq \frac{16\epsilon^2\tilde{\sigma}d\sqrt{T}}{\tilde{\sigma}^2} + \frac{8d\epsilon^4 T }{\tilde{\sigma}^2}
\end{align}
where we have used that $d\geq 1$, $\theta\in\Xi$, and~\eqref{eq:regret_hypothesis}. Thus, inserting the above into the result of Lemma~\ref{lem:regret_logistic_bandits},
\begin{align}
    \frac{1}{|\Xi |} \sum_{\theta\in\Xi} \EX_\theta \left[\textup{Regret} (T) \right] \geq  dT \epsilon^2  \left(\frac{1}{8} - \frac{1}{4}\sqrt{\frac{16\tilde{\sigma}\epsilon^2 \sqrt{T}+ 8\epsilon^4 T}{\tilde{\sigma}^2}} \right) 
\end{align}
Finally, choosing $\epsilon^2 = \frac{\tilde{\sigma}}{80\sqrt{T}}$ we have
\begin{align}
    \frac{1}{|\Xi |} \sum_{\theta\in\Xi} \EX_\theta \left[\textup{Regret} (T) \right] &\geq \tilde{\sigma}d\sqrt{T}\left( \frac{1}{8} - \frac{1}{4}\sqrt{\frac{16}{80}+\frac{8}{6400}} \right) \\
    &\geq \frac{1}{100}\tilde{\sigma}d\sqrt{T}.
\end{align}
We note that in order to have $\epsilon^2 \leq \frac{1}{d-1}$ we need $T\geq \frac{1}{6400}d^2\tilde{\sigma}^2$. Note that we proved the result under the hypothesis that~\eqref{eq:regret_hypothesis} holds. If~\eqref{eq:regret_hypothesis} does not hold the result follows trivially. 
\end{proof}

\section{Failure of lower-bound methods for vanishing noise}\label{ap:failurelower}

Specifically, we encounter the problem that these calculations rely on the computation of the $KL$ divergences $D_{KL} (\mathbb{P}_\theta , \mathbb{P}_{\theta '} ) $ for two ``close'' unknown parameters $\theta, \theta ' \in\mathbb{S}^d$ and it is not possible to give a non-trivial bound since their variance is arbitrarily close to $0$ when we select actions near the unknown parameters. We can easily see this fact for example if we consider Gaussian distributed rewards and use the divergence decomposition lemma that leads to
\begin{align}
    D_{KL}(\mathbb{P}_\theta , \mathbb{P}_{\theta '} ) = \EX_\theta \left[ \sum_{t=1}^T D_{KL}\left( \mathcal{N}(\langle \theta , a_t  \rangle, 1 - \langle \theta ,a_t \rangle^2 ) , \mathcal{\mathcal{N}}(\langle \theta' , a_t  \rangle, 1 - \langle \theta' ,a_t \rangle^2) \right) \right],
\end{align}
where $\mathcal{N}(\mu,\sigma^2)$ is a Gaussian probability distribution with mean $\mu\in\mathbb{R}$ and variance $\sigma^2 \geq 0$. The proof requires to upper bound of the above divergence but a simple computation shows that the divergence $ D_{KL}$ for the reward distributions cannot be upper bounded since the variances can go arbitrarily close to $0$ and we only get the trivial bound $D_{KL} \leq \infty$.

\end{document}